\documentclass{article}
\usepackage{iclr2027_preprint,times}

\usepackage{amsmath,amssymb,amsthm,mathtools}
\usepackage{booktabs}
\usepackage{graphicx}
\usepackage{enumitem}
\usepackage{algorithm}
\usepackage{algpseudocode}
\usepackage{hyperref}
\hypersetup{hidelinks}
\usepackage{url}
\usepackage{xcolor}
\usepackage{xspace}

\newtheorem{theorem}{Theorem}
\newtheorem{proposition}[theorem]{Proposition}
\newtheorem{lemma}[theorem]{Lemma}
\newtheorem{corollary}[theorem]{Corollary}
\theoremstyle{definition}
\newtheorem{assumption}{Assumption}
\theoremstyle{remark}
\newtheorem{remark}[theorem]{Remark}

\DeclareMathOperator{\E}{\mathbb{E}}
\newcommand{\R}{\mathbb{R}}
\newcommand{\Id}{I_d}
\newcommand{\gK}{g_{\mathrm{K}}}
\newcommand{\gU}{g_{\mathrm{U}}}
\newcommand{\ghK}{\widehat{g}_{\mathrm{K}}}
\newcommand{\inner}[2]{\langle #1, #2\rangle}
\newcommand{\Fk}{\mathcal{F}_k}
\newcommand{\Ek}{\E_k}
\newcommand{\bt}{\tilde{\beta}}
\newcommand{\piso}{\textsc{Piso}\xspace}
\newcommand{\pisotwo}{\textsc{Piso}$^{2}$\xspace}

\title{Partially Informed Stochastic Optimization\\ via Zeroth-Order Residual Tracking}

\author{El Mahdi Chayti, Alireza Mirrokni, Martin Jaggi \\Machine Learning and Optimization Laboratory (MLO) , EPFL}

\iclrfinalcopy   

\begin{document}

\maketitle

\begin{abstract}
Many optimization problems provide direct access to only part of the gradient, while the remaining component must be inferred from function evaluations. This structure arises in performative prediction, performative reinforcement learning, and bilevel optimization with a black-box lower level. We introduce Partially Informed Stochastic Optimization (PISO), which combines the available gradient component with a running estimate of the residual. By centring each zeroth-order observation on this estimate, PISO makes the variance due to random probing depend on the tracking error rather than the residual’s magnitude. This yields a randomized sketch iteration with an explicit contraction mechanism and a principled interpretation of momentum. For smooth nonconvex objectives, we establish convergence guarantees that depend on the residual’s variation and initial magnitude, without requiring a uniform bound on the unavailable gradient component. With independently sampled function evaluations and Gaussian or spherical probes, PISO attains leading function-evaluation complexities of \(O(d^{2}\varepsilon^{-6})\) under Lipschitz gradients and \(O(d^{2}\varepsilon^{-5})\) under an additional Lipschitz-Hessian assumption. These rates require only one probe direction, two function samples, and one known-component sample per update. The analysis also quantifies how batching trades observations for fewer deployments. Across five settings, PISO variants improve on the tested baselines with comparable oracle access in three, reach a comparable solution faster in bilevel routing, and perform best among methods without response-model derivatives in performative reinforcement learning\footnote{The implementation is available at \url{https://github.com/alirezamirrokni/PISO}.}.

\end{abstract}

\section{Introduction}

Let $F:\R^d\to\R$ be a function whose gradient decomposes as
\begin{equation}\label{eq:split}
\nabla F(x) \;=\; \gK(x) \;+\; \gU(x),
\end{equation}
where $\gK$ can be estimated by direct sampling and $\gU$ cannot: the only access to $\gU$ is
through noisy evaluations of $F$. Problems of this form arise in several areas.

\textbf{Performative prediction.} The decision changes the data distribution
\citep{perdomo2020performative}: one minimizes $F(x)=\E_{\xi\sim\mathcal D(x)}[\ell(x,\xi)]$, where
$\mathcal D(x)$ is the distribution the deployed model $x$ induces. Differentiating under the two
dependencies makes $\gK(x)=\E_{\xi\sim\mathcal D(x)}[\nabla_x\ell(x,\xi)]$ the ordinary loss gradient
on the population currently observed, available by sampling, and $\gU$ the term involving the
derivative of the map $x\mapsto\mathcal D(x)$, which the practitioner does not possess. A lender
adjusting a credit-scoring rule can compute the first, but learns how applicants adapt their reported
features only by deploying a rule and observing outcomes \citep{levanon2021strategic}.

\textbf{Performative reinforcement learning.} The environment reacts to the deployed policy
\citep{bell2021newcomblike,mandal2023performative}. The performative policy gradient theorem of
\citet{basu2025pepg} splits $\nabla_\theta V^{\pi_\theta}_{\pi_\theta}$ into a score term in
$\nabla_\theta\log\pi_\theta$, the familiar policy gradient and our $\gK$, plus terms in
$\nabla_\theta\log P_{\pi_\theta}$ and $\nabla_\theta r_{\pi_\theta}$, our $\gU$, which describe how
the kernel and reward respond to the policy and require a model of that response.

\textbf{Bilevel problems with an opaque lower level.} Two parties act in sequence. The \emph{leader}
moves first, chooses $x$ and pays the objective; the \emph{followers} take $x$ as given and settle on
a response $y^\star(x)$, for instance the equilibrium of a game among themselves
\citep{cui2023mpec,fischer2026blackbox}. With $F(x)=f(x,y^\star(x))$ and $f$ known to the leader,
\[
\nabla F(x)=\underbrace{\nabla_xf\big(x,y^\star(x)\big)}_{\gK(x)}
+\underbrace{\big[\partial_xy^\star(x)\big]^{\!\top}\nabla_yf\big(x,y^\star(x)\big)}_{\gU(x)} :
\]
the leader can evaluate both partial derivatives once $y^\star(x)$ is observed, so what is missing is
the response Jacobian. In toll setting, the leader is the road authority and the followers are the
drivers who reroute; here $\gK$ is exact, whereas in the two performative settings it must be
sampled. The same structure appears whenever a differentiable model is composed with a black-box
simulator, or with a legacy or proprietary subsystem.

\paragraph{Prior approaches and this work.} Treating $F$ as a black box and applying zeroth-order
optimization to all of $\nabla F$ \citep{nesterov2017random,ghadimi2013stochastic} discards $\gK$.
Recent work in each area exploits it: \citet{hikima2025guided} use the computable component as a
guide in performative prediction and \citet{hikima2025estimators} attain
$\mathcal O(d^2\varepsilon^{-6})$ samples by averaging many probe directions per step;
\citet{fischer2026blackbox} keep the leader's partial gradients exact and estimate only the
followers' response Jacobian. These methods share a template, estimating a gradient or a Jacobian at the current iterate and then
reducing its variance, and a scope, each being developed for one area. We keep the premise that
$\gK$ should be used and change the object being estimated: we maintain a running estimate of the
opaque part and centre every zeroth-order observation on it, so that each observation reports how
wrong the current estimate is rather than a fresh guess. The update is then a randomized fixed-point
iteration whose error contracts, with stochastic error proportional to the current estimation error
rather than to the size of $\gU$, and because it needs only \eqref{eq:split}, one algorithm and one
analysis cover all three areas.

\paragraph{Contributions.}
\begin{itemize}[leftmargin=1.2em,itemsep=1pt]
\item We formulate optimization under \eqref{eq:split} as \emph{residual identification} and derive
\piso and \pisotwo (\S\ref{sec:method}), showing that centring the zeroth-order observation on the
running estimate turns gradient estimation into an error-feedback contraction.
\item We give a single convergence analysis (\S\ref{sec:theory}) whose complexity is governed by the
smoothness $L_v$ and initial magnitude $\|v(x_0)\|$ of the residual, with no dependence on
a uniform bound on $\|\gU\|$ and no bounded-gradient, bounded-domain, convexity or PL assumption. It covers the two
stochastic oracle regimes and any probe distribution satisfying three moment conditions, matching the
$d^2\varepsilon^{-6}$ and $d^2\varepsilon^{-5}$ rates of \citet{hikima2025estimators} with $L$
replaced by $L_v$, one probe per update instead of $\Omega(d)$, and one sample of each oracle per
iteration.
\item The analysis assigns each parameter a role and a prescription: the momentum $\rho$ is the step
size of the sketch and has a stability window of width $2/c_d$; a second momentum $\rho_K$ on the
known component is what makes one sample per iteration admissible; and the weight $\gamma$ is a
control-variate coefficient interpolating between black-box and white-box treatment. We also separate
the cost of \emph{deploying} a decision from that of observing outcomes under it, and show the
deployment count can be cut from $\Theta(d^2\varepsilon^{-6})$ to $\Theta(d^2\varepsilon^{-2})$ at no
cost in total observations, and to $\Theta(d\,\varepsilon^{-2})$ at the cost of a factor $d$ in
observations.
\item We evaluate on five settings spanning the three areas (\S\ref{sec:experiments}). \piso is best
on pricing, strategic classification and a security game, matches the strongest bilevel baseline at
convergence on a routing game while reaching it earlier, and on performative RL is best among methods
that query the environment rather than differentiating a model of it. A momentum sweep locates the
optimum at the predicted $1-1/c_d$ and exhibits the predicted divergence below $1-2/c_d$.
\end{itemize}

\section{From gradient estimation to residual tracking}\label{sec:method}

\paragraph{Setting and notation.} We minimize a function $F:\R^d\to\R$ satisfying
\eqref{eq:split}, given two oracles. The first returns $\ghK(x)$, unbiased for $\gK(x)$ with
variance $\sigma_K^2/n_K$, where $n_K$ is a batch size; we write
$\delta_k:=\gamma\big(\ghK(x_k)-\gK(x_k)\big)$ for its error scaled by a weight $\gamma>0$ introduced
below. The second returns noisy values $\widehat F(x)$, each an average of $n_F$ samples. At
iteration $k$ we draw a \emph{probe} $u_k\in\R^d$ with a \emph{score} $s_k\in\R^d$ satisfying
\begin{equation}\label{eq:unbiased_sketch}
\E\big[s_ku_k^\top\big]=\Id,
\qquad\text{so that}\qquad
\E\big[\inner{z}{u_k}\,s_k\big]=z \quad\text{for every fixed } z\in\R^d ,
\end{equation}
for instance $u_k=s_k\sim\mathcal N(0,\Id)$ or $u_k=s_k=\sqrt d\,w_k$ with $w_k$ uniform on the unit
sphere. Let $c_d$ be the smallest constant with $\E\|\inner{z}{u_k}s_k\|^2\le c_d\|z\|^2$; it equals
$d+2$ and $d$ for those two choices. From the two function queries we form the symmetric two-point
directional derivative, which decomposes as
\begin{equation}\label{eq:twopoint}
D_k \;:=\; \frac{\widehat F(x_k+\mu_ku_k)-\widehat F(x_k-\mu_ku_k)}{2\mu_k}
\;=\; \inner{\nabla F(x_k)}{u_k} + b_k + \xi_k ,
\end{equation}
where $b_k$ is the finite-difference bias and $\xi_k$ has conditional mean zero. An iteration costs
$2n_F$ function evaluations and $n_K$ known-component samples while touching only the three points
$x_k$ and $x_k\pm\mu_ku_k$; in a performative problem each of those is a \emph{deployed decision},
and Appendix~\ref{app:deploy} takes up the distinction. Throughout, $\eta$ is the step size, $\mu_k$
the probe radius, $\rho,\rho_K\in[0,1)$ two momentum parameters, $\beta:=1-\rho$,
$\beta_K:=1-\rho_K$, and $m_k,h_k$ the states maintained by the method.

\paragraph{The tracked object.} For the weight $\gamma>0$ define the \emph{residual}
\begin{equation}\label{eq:residual}
v(x) \;:=\; \nabla F(x) - \gamma\,\gK(x) \;=\; \gU(x) + (1-\gamma)\,\gK(x) .
\end{equation}
At $\gamma=1$ it is exactly $\gU$, and at $\gamma=0$ the full gradient. We keep $\gamma$ general
because the two endpoints are two known designs, a fully black-box method and the principle of
applying zeroth-order estimation only where it is needed \citep{fischer2026blackbox}, and because the
analysis identifies a better choice. Our goal is a state $m_k$ tracking $v(x_k)$, since then
$\gamma\ghK(x_k)+m_k$ estimates $\nabla F(x_k)$.

\paragraph{A first estimator, and its cost.} Rearranging \eqref{eq:twopoint} with
$\nabla F=\gamma\gK+v$ shows what the two queries reveal about $v$:
\begin{equation}\label{eq:scalarmeas}
D_k - \inner{\gamma\ghK(x_k)}{u_k}
\;=\; \inner{v(x_k)}{u_k} \;-\;\inner{\delta_k}{u_k} \;+\; b_k+\xi_k .
\end{equation}
Each iteration supplies one noisy \emph{scalar} linear measurement of the unknown vector $v(x_k)$
along $u_k$. Multiplying by the score turns it into a vector estimate, and by
\eqref{eq:unbiased_sketch} the estimate $R^{\mathrm{fresh}}_k:=(D_k-\inner{\gamma\ghK(x_k)}{u_k})s_k$
is unbiased for $v(x_k)$ up to $b_k$. Its second moment, however, is of order $c_d\|v(x_k)\|^2$: a
single probe inflates the estimate by the dimension, and the error persists at that level however
long the method runs. Averaging $N$ probes per iteration divides the inflation by $N$, but the
guarantee of \citet{hikima2025estimators} requires $N=\Omega(d)$, so $\Omega(d)$ queries precede
every update; in performative settings each query is a deployment.

\paragraph{Measuring the discrepancy instead.} The estimator $R^{\mathrm{fresh}}_k$ discards the
state: it reconstructs $v(x_k)$ from scratch even when $m_k$ is already close. Suppose instead we
treat $m_k$ as known and estimate only the discrepancy $v(x_k)-m_k$. Subtracting $\inner{m_k}{u_k}$
from \eqref{eq:scalarmeas} gives a scalar measurement of that discrepancy along $u_k$; sketching it
with $s_k$ estimates the discrepancy as a vector; and adding $m_k$ back recovers an estimate of
$v(x_k)$:
\begin{equation}\label{eq:centred}
R_k \;:=\; m_k + \big(D_k - \inner{\gamma\ghK(x_k)+m_k}{u_k}\big)\,s_k .
\end{equation}
By \eqref{eq:unbiased_sketch} this has the same target as $R^{\mathrm{fresh}}_k$, since the two
occurrences of $m_k$ cancel in expectation. What changes is the scale of the error, now set by
$\|v(x_k)-m_k\|$ rather than by $\|v(x_k)\|$, so it shrinks as the state improves. This is the device
behind control variates and error feedback: estimate the correction to a known quantity, not the
quantity itself.

\begin{algorithm}[t]
\caption{\piso ($\lambda=0$) and \pisotwo ($\lambda\in(0,1]$)}\label{alg:piso}
\begin{algorithmic}[1]
\Require $x_0$, $m_0=0$, $h_0=\ghK(x_0)$, weight $\gamma>0$, momenta $\rho,\rho_K$, mixing $\lambda$,
schedules $\{\eta_k\},\{\mu_k\}$
\For{$k=0,\dots,T-1$}
  \State Draw $\ghK(x_k)$; draw a probe direction $u_k$ with score $s_k$
  \State $D_k \gets \big[\widehat F(x_k+\mu_ku_k)-\widehat F(x_k-\mu_ku_k)\big]/(2\mu_k)$
  \State $R_k \gets m_k+\big(D_k-\inner{\gamma\ghK(x_k)+m_k}{u_k}\big)s_k$
     \Comment{\emph{residual observation, centred on $m_k$}}
  \State $m_{k+1}\gets \rho\,m_k+(1-\rho)R_k$
  \State $h_{k+1}\gets \rho_K\,h_k+(1-\rho_K)\,\ghK(x_k)$
     \Comment{\emph{known-component average}}
  \State $G_k \gets \gamma\,h_{k+1}+\lambda R_k+(1-\lambda)m_{k+1}$
  \State $x_{k+1}\gets x_k-\eta_kG_k$ \Comment{sign reversed for maximization}
\EndFor
\end{algorithmic}
\end{algorithm}

We call the method \emph{Partially Informed Stochastic Optimization} (\piso), and \pisotwo for the
variant with $\lambda>0$, which reads out the direction from a mixture of the fresh observation
$R_k$ and the state. Line 6 tracks the known component as well, because $\ghK(x_k)$ enters the step
directly and using it fresh injects its full variance $\gamma^2\sigma_K^2$ into $G_k$; averaging
reduces this to $\mathcal O(\beta_K\gamma^2\sigma_K^2)$ at the cost of a lag the step size must
respect, and $\rho_K=0$ recovers the fresh estimate. We deliberately do \emph{not} substitute
$h_{k+1}$ for $\ghK(x_k)$ inside $R_k$: doing so would change the tracked object to
$\nabla F(x_k)-\gamma h_{k+1}$, whose drift constant is $L_v+\gamma L_K$ rather than $L_v$
(Appendix~\ref{app:main}). With $\bt:=1-(1-\lambda)\rho$ and $\zeta_k:=\gamma(h_k-\gK(x_k))$, both
$\Fk$-measurable, the error recursions are as follows.

\begin{lemma}[\piso is sketched error feedback]\label{lem:structure}
Let $E_k:=m_k-v(x_k)$, $\Delta_k:=G_k-\nabla F(x_k)$ and $P_k:=s_ku_k^\top$. With $m_0=0$:
(i) $\lambda R_k+(1-\lambda)m_{k+1} = (1-\bt)m_k+\bt R_k$;
(ii) $E_{k+1} = (\Id-\beta P_k)E_k + \beta\big(\!-\!\inner{\delta_k}{u_k}+b_k+\xi_k\big)s_k+(v_k-v_{k+1})$;
(iii) $\Delta_k = (1-\beta_K)\zeta_k+(\Id-\bt P_k)E_k+\big(\beta_K\Id-\bt P_k\big)\delta_k+\bt(b_k+\xi_k)s_k$;
(iv) $\zeta_{k+1} = (1-\beta_K)\zeta_k+\beta_K\delta_k+\gamma\big(\gK(x_k)-\gK(x_{k+1})\big)$.
\end{lemma}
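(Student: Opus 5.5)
The plan is direct algebraic substitution: every identity follows from the update rules of Algorithm~\ref{alg:piso}, the decomposition \eqref{eq:twopoint}, and the definitions $v=\nabla F-\gamma\gK$, $\delta_k=\gamma(\ghK(x_k)-\gK(x_k))$, $\zeta_k=\gamma(h_k-\gK(x_k))$. Write $v_k:=v(x_k)$. The only recurring step is to rewrite the centred scalar inside $R_k$ in terms of errors:
\[
D_k-\inner{\gamma\ghK(x_k)+m_k}{u_k}
=\inner{\nabla F(x_k)-\gamma\gK(x_k)-m_k}{u_k}-\inner{\delta_k}{u_k}+b_k+\xi_k
=-\inner{E_k}{u_k}-\inner{\delta_k}{u_k}+b_k+\xi_k .
\]
Multiplying by $s_k$ and using $\inner{z}{u_k}s_k=P_kz$ gives
\[
R_k=m_k-P_kE_k-P_k\delta_k+(b_k+\xi_k)s_k .
\]
This is the workhorse identity.

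For (i), substitute $m_{k+1}=\rho m_k+(1-\rho)R_k$. Then $\lambda R_k+(1-\lambda)m_{k+1}=(1-\lambda)\rho\,m_k+[\lambda+(1-\lambda)(1-\rho)]R_k$. The coefficient of $m_k$ is $1-\bt$ by definition, and the two coefficients sum to one, so the coefficient of $R_k$ is $\bt$.

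For (ii), write $m_{k+1}=m_k+\beta(R_k-m_k)$ and insert the workhorse identity. This gives $m_{k+1}=m_k-\beta P_kE_k-\beta P_k\delta_k+\beta(b_k+\xi_k)s_k$. Subtract $v_{k+1}$, writing $m_k-v_{k+1}=E_k+(v_k-v_{k+1})$, and note $P_k\delta_k=\inner{\delta_k}{u_k}s_k$. The result is (ii).

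For (iii), use (i) to get $G_k=\gamma h_{k+1}+m_k+\bt(R_k-m_k)$. Then expand the three pieces:
\begin{itemize}
\item $\gamma h_{k+1}-\gamma\gK(x_k)=\rho_K\zeta_k+(1-\rho_K)\delta_k=(1-\beta_K)\zeta_k+\beta_K\delta_k$;
\item $m_k-v_k=E_k$;
\item $\bt(R_k-m_k)=-\bt P_kE_k-\bt P_k\delta_k+\bt(b_k+\xi_k)s_k$.
\end{itemize}
Summing these against $\nabla F(x_k)=\gamma\gK(x_k)+v_k$ and grouping the $E_k$ and $\delta_k$ terms yields (iii).

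For (iv), $\zeta_{k+1}=\gamma h_{k+1}-\gamma\gK(x_{k+1})$. Add and subtract $\gamma\gK(x_k)$ and reuse the first bullet above.

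The role of $m_0=0$ is only to fix the initialization, since $E_0=-v_0$; it plays no part in the identities. I expect no genuine obstacle here. The only care needed is sign bookkeeping in $\delta_k$ versus $\gK$ and keeping $\bt$ distinct from $\beta$ in (iii): the readout uses the mixture weight $\bt$, while the state uses $\beta$.
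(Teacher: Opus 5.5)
Your proposal is correct and follows the paper's proof essentially step for step. The paper likewise isolates the centred innovation $D_k-\inner{\gamma\ghK(x_k)+m_k}{u_k}=\inner{-E_k-\delta_k}{u_k}+b_k+\xi_k$, writes $m_{k+1}-m_k=\beta(R_k-m_k)$ for (ii), repeats that computation with $\bt$ in place of $\beta$ via (i) for (iii), and obtains (iv) from the same shift identity $\gamma(h_{k+1}-\gK(x_k))=(1-\beta_K)\zeta_k+\beta_K\delta_k$, and your remark that $m_0=0$ plays no role in the identities themselves is also right.
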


\paragraph{Why this contracts.} In the idealized case of exact oracles and a stationary iterate,
(ii) reduces to the exact identity
\begin{equation}\label{eq:teaser}
E_{k+1}=(\Id-\beta P_k)E_k .
\end{equation}
This is a step of stochastic gradient descent on the quadratic $\tfrac12\|m-v\|^2$ with step size
$\beta$, the gradient $E_k$ replaced by its sketch $P_kE_k$, which is unbiased by
\eqref{eq:unbiased_sketch}. Taking second moments gives the textbook step-size trade-off, a linear
descent term against a quadratic variance term:
$\Ek\|(\Id-bP_k)z\|^2\le\theta(b)\|z\|^2$ with
\begin{equation}\label{eq:theta}
\theta(b) \;:=\; 1-2b+c_db^2 .
\end{equation}
So $\beta$ is the step size of a randomized fixed-point iteration for $v$: under (D1)--(D2),
$\theta(\beta)$ bounds the contraction factor, is minimized at $\beta=1/c_d$, and guarantees
contraction for $0<\beta<2/c_d$; for the Gaussian, sphere and coordinate families (D2) is an equality
and the statements are exact:
\begin{equation}\label{eq:rhostar}
\rho^\star=1-1/c_d,
\qquad\text{and the residual iteration is neutral at }\rho=1-2/c_d\text{ and \emph{expands} for }\rho<1-2/c_d .
\end{equation}
In particular $\rho=0$ is not ``no momentum'' but $\beta=1$, an expanding map with
$\theta(1)=c_d-1$. The update sits in the family of error-feedback and
sketch-and-project methods \citep{richtarik2021ef21,gower2015randomized}. In (iii) the oracle error
$\delta_k$ appears only through $\beta_K\Id-\bt P_k$, whose second moment against a mean-zero
$\delta_k$ is $(\beta_K^2-2\beta_K\bt+c_d\bt^2)\E\|\delta_k\|^2$, and through $\zeta_k$, whose steady
state is $\mathcal O(\beta_K\gamma^2\sigma_K^2)$ by (iv). Both are tunable, which is what allows one
known-component sample per iteration.

\section{Convergence}\label{sec:theory}

\begin{assumption}[Smoothness]\label{as:smooth}
$F$ is $L$-smooth and bounded below by $F^\star$, with $\Delta_F:=F(x_0)-F^\star$; and $\gK$ is
$L_K$-Lipschitz. We write $L_v$ for the smallest constant with
$\Ek\|v_{k+1}-v_k\|^2\le L_v^2\eta^2\Ek\|G_k\|^2$.
\end{assumption}

Since $v$ is $(L+\gamma L_K)$-Lipschitz, $L_v\le L+\gamma L_K$ always; the rates below improve on
black-box ones when $L_v$ is small, which depends on the problem. One sets $\eta$ with an a-priori
upper bound $\bar L_v$ (e.g.\ $L+\gamma L_K$); the trajectory-specific $L_v$ sharpens the bound only
retrospectively.

\begin{assumption}[Oracles]\label{as:oracles}
$\Ek[\ghK(x_k)]=\gK(x_k)$ with variance $\sigma_{K,k}^2$, drawn independently of $(u_k,s_k)$; and
$\E[\xi_k\mid\Fk,u_k,s_k,\delta_k]=0$ with $\E[\xi_k^2\mid\Fk,u_k,s_k]\le\sigma_{D,k}^2$. Write
$\Sigma_{b,k}^2 := \Ek[b_k^2\|s_k\|^2]$.
\end{assumption}

\begin{assumption}[Probe distribution]\label{as:dir}
$(u_k,s_k)$ is drawn i.i.d.\ given $\Fk$, independently of $\delta_k$, and satisfies (D1)
$\Ek[s_ku_k^\top]=\Id$; (D2) $\Ek\|\inner{z}{u_k}s_k\|^2\le c_d\|z\|^2$; (D3) moment bounds
$\varsigma^2,\varsigma_4^2,\varsigma_6^2,\varsigma_8^2$ on $\|s_k\|^2$ against powers of $\|u_k\|$.
\end{assumption}

Only (D1)--(D2) enter the contraction; (D3) enters the noise constants. Theorem~\ref{thm:main} is
stated in terms of the \emph{mean} bias $\bar b_k:=\Ek[b_ks_k]$ and a second bias vector
$\bar B_k:=\Ek[b_k\|s_k\|^2u_k]$; the corollaries bound them in one of two ways
(Appendix~\ref{app:bias}). If the family satisfies a \emph{smoothing identity}
$\E[\tfrac{F(x+\mu u)-F(x-\mu u)}{2\mu}s]=\nabla F_{\nu,\kappa\mu}(x)$ for a symmetric measure $\nu$
and scale $\kappa$, then $\|\bar b_k\|\le L\mu_k\chi_1$ with $\chi_1:=\kappa\E_\nu\|w\|$, and
$\tfrac12Q\mu_k^2\chi_2$ with $\chi_2:=\kappa^2\E_\nu\|w\|^2$ under a $Q$-Lipschitz Hessian
\citep{berahas2022theoretical,maggiar2018derivative}; Gaussian and sphere probes satisfy it with
$\chi_1=\sqrt d$, $\chi_2=d$. Coordinate probes do not, and a direct bound gives $\chi_1=d/2$, a
factor $\sqrt d$ worse, which is the structural cause of the known gap between coordinate and random
directions. The distinction between $\bar b_k$ and $\Sigma_{b,k}$ controls the dimension dependence:
the bias accumulates coherently over the $\sim1/\beta$ steps of the momentum window, so its mean is
undamped while its fluctuation is damped by $\beta$ like any other noise.

\begin{theorem}[Main]\label{thm:main}
Under Assumptions~\ref{as:smooth}--\ref{as:dir}, with $\theta:=\theta(\bt)$, $\beta\le1/c_d$ and
\begin{equation}\label{eq:stepcond}
\eta\;\le\;\min\Big\{\tfrac{1}{2L},\ \tfrac{\beta}{11\sqrt\theta\,L_v},\ \tfrac{\beta_K}{9\,\gamma L_K}\Big\},
\end{equation}
the iterates of Algorithm~\ref{alg:piso} run with a constant step size $\eta_k\equiv\eta$ (the radius
$\mu_k$ may vary) satisfy
\begin{align}\label{eq:main}
\frac1T\sum_{k<T}\E\|\nabla F(x_k)\|^2
&\le \frac{2\Delta_F}{\eta T}+\frac{6\theta\|v(x_0)\|^2}{\beta T}
+\frac{6\gamma^2\sigma_{K}^2}{n_K\beta_K T}\notag\\
&\quad+\frac2T\sum_{k<T}\Big[\big(\tfrac32\beta_K^2+\tfrac32c_d\bt^2+\tfrac{15}{4}\theta\beta c_d+\tfrac92\beta_K\big)\gamma^2\sigma_{K,k}^2\notag\\
&\qquad\qquad\qquad+\big(\tfrac32\bt^2+\tfrac{15}{4}\theta\beta\big)\big(\varsigma^2\sigma_{D,k}^2+\Sigma_{b,k}^2\big)\notag\\
&\qquad\qquad\qquad+30\theta\|\bar b_k\|^2+30\theta\beta^2\|\bar B_k\|^2\Big].
\end{align}
\end{theorem}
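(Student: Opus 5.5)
The plan is a Lyapunov (potential-function) argument combining the descent lemma for $F$ with the error recursions of Lemma~\ref{lem:structure}. First, $L$-smoothness and $\eta\le 1/(2L)$ give the standard one-step inequality
\[
F(x_{k+1})\le F(x_k)-\tfrac{\eta}{2}\|\nabla F(x_k)\|^2-\tfrac{\eta}{4}\|G_k\|^2+\tfrac{\eta}{2}\|\Delta_k\|^2 .
\]
I keep the negative $\|G_k\|^2$ term because Assumption~\ref{as:smooth} expresses the drift $v_k-v_{k+1}$ through $\eta^2\Ek\|G_k\|^2$, and $\gK(x_k)-\gK(x_{k+1})$ likewise has norm at most $L_K\eta\|G_k\|$. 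Next I take $\Ek\|\Delta_k\|^2$ from Lemma~\ref{lem:structure}(iii). The four summands are handled as follows. The $E_k$ and $\zeta_k$ terms are $\Fk$-measurable up to the factor $\Id-\bt P_k$, and that factor contributes at most $\theta\|E_k\|^2$ by (D1)--(D2). The $\delta_k$ term is mean zero and independent of the probe, so it contributes $(\beta_K^2-2\beta_K\bt+c_d\bt^2)\sigma_{K,k}^2\gamma^2$ up to cross terms. The bias/noise term contributes $\bt^2(\varsigma^2\sigma_{D,k}^2+\Sigma_{b,k}^2)$ together with a cross term $\langle E_k,\cdot\rangle$ involving $\bar b_k$ and $\bar B_k$. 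I bound all cross terms by Young's inequality with constant splits; this is where the factors $\tfrac32$ come from.

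Second, I square recursion (ii) and take $\Ek$. The leading term gives $\Ek\|(\Id-\beta P_k)E_k\|^2\le\theta(\beta)\|E_k\|^2\le(1-\beta)\|E_k\|^2$, using $\beta\le1/c_d$. The fresh-noise terms carry a factor $\beta^2$: they give $\beta^2c_d\gamma^2\sigma_{K,k}^2$ and $\beta^2(\varsigma^2\sigma_{D,k}^2+\Sigma_{b,k}^2)$. The bias enters the cross term with $(\Id-\beta P_k)E_k$ through its means, namely $\beta\langle E_k,\bar b_k\rangle-\beta^2\langle E_k,\bar B_k\rangle$, since $\Ek[P_k^\top b_ks_k]=\bar B_k$. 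Young's inequality with weight $\beta/4$ turns these into $\tfrac{\beta}{4}\|E_k\|^2+O(\beta)\|\bar b_k\|^2+O(\beta^3)\|\bar B_k\|^2$. The drift term is handled with $(1+\beta/4)$ versus $(1+4/\beta)$ weights, which yields roughly $\tfrac{5}{\beta}L_v^2\eta^2\Ek\|G_k\|^2$. The net result is a contraction $\Ek\|E_{k+1}\|^2\le(1-\beta/2)\|E_k\|^2+\dots$. In the same way, (iv) gives $\Ek\|\zeta_{k+1}\|^2\le(1-\beta_K/2)\|\zeta_k\|^2+\beta_K^2\gamma^2\sigma_{K,k}^2+\tfrac{C}{\beta_K}\gamma^2L_K^2\eta^2\|G_k\|^2$, using that $\delta_k$ has mean zero and is independent of $\zeta_k$.

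Third, I form the potential $\Phi_k=F(x_k)+a\|E_k\|^2+c\|\zeta_k\|^2$ with $a=\tfrac{3\eta\theta}{\beta}$ and $c=\tfrac{3\eta}{\beta_K}$. The choice of $a$ is such that $a\beta/2$ dominates the $\tfrac{\eta}{2}\theta\|E_k\|^2$ coming from $\Delta_k$, and the choice of $c$ does the same for the $\zeta$ term. With these weights the $\|G_k\|^2$ contributions $a\tfrac5\beta L_v^2\eta^2$ and $c\tfrac{C}{\beta_K}\gamma^2L_K^2\eta^2$ must both be absorbed into $\tfrac\eta4$. This absorption is exactly what the step conditions $\eta\le\beta/(11\sqrt\theta L_v)$ and $\eta\le\beta_K/(9\gamma L_K)$ are designed for. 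Telescoping, dividing by $\eta T/2$, and using $E_0=-v(x_0)$ (since $m_0=0$) and $\zeta_0=\gamma(\ghK(x_0)-\gK(x_0))$ with $\E\|\zeta_0\|^2=\gamma^2\sigma_K^2/n_K$ produces the three initialization terms in \eqref{eq:main}. The per-iteration terms collect as stated: for example, $a\beta^2c_d$ rescaled gives $\tfrac{15}{4}\theta\beta c_d$, and $a\cdot O(\beta)\|\bar b_k\|^2$ gives $30\theta\|\bar b_k\|^2$.

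The main obstacle is the bias bookkeeping. The mean bias $\bar b_k$ enters undamped through the cross term with $E_k$ in both the $\Delta_k$ and the $E_{k+1}$ recursions. The Young weights must be chosen so that the coefficient of $\|E_k\|^2$ stays below the contraction margin while $\|\bar b_k\|^2$ picks up only the factor $\theta$ and not $\theta/\beta$. I would first try weighting the $E_{k+1}$ cross term by $\beta/4$, so that multiplying by $a\propto\eta\theta/\beta$ leaves $O(\eta\theta)\|\bar b_k\|^2$. After that, the remaining work is matching the explicit constants.
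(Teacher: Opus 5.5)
Your proposal follows the paper's proof essentially step for step: the same descent inequality with $-\tfrac\eta4\Ek\|G_k\|^2$ retained, the same $E$-recursion (bias cross term computed exactly and split by Young at weight $\beta/8$ for each of $\bar b_k$, $\bar B_k$, drift separated with $(1+\beta/4)$ versus $(1+4/\beta)$), the same $\zeta$-recursion, and the same Lyapunov weights $3\eta\theta/\beta$ and $3\eta/\beta_K$. One correction: no bias cross term is needed, or wanted, in $\Ek\|\Delta_k\|^2$. The paper isolates the bias as its own group in $\|\Delta_k\|^2\le3\big((1-\beta_K)^2\|\zeta_k\|^2+\|X_k\|^2+\bt^2b_k^2\|s_k\|^2\big)$, with $X_k$ collecting the $E_k$, $\delta_k$ and $\xi_k$ parts. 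Then only $3\bt^2\Sigma_{b,k}^2$ appears there, the $\|E_k\|^2$ coefficient is exactly $3\theta$ and cancels against $\omega\beta/2$, and $\bar b_k,\bar B_k$ enter solely through the $E$-recursion with the stated $30\theta$ and $30\theta\beta^2$. Similarly, the Young factor makes the $\zeta$-noise term $\tfrac32\beta_K^2\gamma^2\sigma_{K,k}^2$ rather than $\beta_K^2\gamma^2\sigma_{K,k}^2$, which is where the coefficient $\tfrac92\beta_K$ comes from.
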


No bounded gradients, bounded domain, convexity, PL condition, or uniform bound on $\|\gU\|$ is
used: the opaque component enters only through its value at $x_0$, via $\|v(x_0)\|$, and through
its drift $L_v$.
Every occurrence of the known-oracle variance carries a factor $\beta c_d$, $\bt^2c_d$ or $\beta_K$,
so a single known-component sample per iteration suffices; at $\rho_K=0$ the last coefficient is
$\Theta(1)$ and $\sigma_K^2$ becomes a floor removable only by drawing
$\Theta(\gamma^2\sigma_K^2/\varepsilon^2)$ samples per iteration (Appendix~\ref{app:complexity}). The
three bias terms sit at different orders in $\beta$, and only the mean $\|\bar b_k\|^2$ is undamped.

\subsection{Oracle regimes and complexity}\label{sec:regimes}

Whether the two function queries in \eqref{eq:twopoint} can share randomness determines the noise
model. If they evaluate the same realization, the noise is dominated by a $\mu$-free gradient term, a
finite probe radius of order $\min\{\sigma_g/\ell,\varepsilon/L\}$ suffices, and the method reaches
$\varepsilon$-stationarity in
$\mathcal O\big(\Delta_v d(\gamma^2\sigma_K^2+\sigma_g^2)\varepsilon^{-4}\big)$ iterations with
$\Delta_v:=L_v\Delta_F+\|v(x_0)\|^2$ (Appendix~\ref{app:complexity}, Regime A). In a performative
problem the two queries \emph{deploy} $x_k\pm\mu_ku_k$ and sample from two different distributions,
so they cannot share randomness and
\begin{equation}\label{eq:sigDmain}
\sigma_{D,k}^2\;\le\;\frac{\sigma_F^2}{2\mu_k^2\,n_F},
\end{equation}
with $\sigma_F^2$ a bound on the conditional variance of the objective under the deployed
distribution. The probe radius must then balance \eqref{eq:sigDmain} against the finite-difference
bias, and the balance is asymmetric: variance is damped by $\beta$ in \eqref{eq:main}, the mean bias
is not, which fixes the exponent of $\varepsilon$. Taking $\lambda=0$, so $\bt=\beta$, and the
sharp bound $\|\bar B_k\|\le3dL\mu_k\chi_1$ of the random families under $\beta\le1/(3d)$, the noise
part of \eqref{eq:main} is at most
\begin{equation}\label{eq:floormain}
\big(11\beta c_d+12\beta_K\big)\frac{\gamma^2\sigma_K^2}{n_K}
+11\beta\,\varsigma^2\sigma_D^2+11\beta\,\Sigma_b^2+60\|\bar b\|^2+60L^2\mu^2\chi_1^2 .
\end{equation}

\begin{corollary}[Independent randomization]\label{cor:B}
Write $C_1:=102000\,L^2\sigma_F^2\varsigma^2\chi_1^2$ and $C_2:=1220\,\varsigma^2\sigma_F^2Q\chi_2$, and
let $n_K\ge1$ and $\beta_K=\min\{1,n_K\varepsilon^2/(72\gamma^2\sigma_K^2)\}$, with the warm start
$h_0=\ghK(x_0)$, $\mu_\star$ as in Appendix~\ref{app:complexity}, and $\eta$ \emph{equal to} the
right-hand side of \eqref{eq:stepcond}. Under $L$-smoothness alone, take
$\beta=n_F\varepsilon^4/C_1$ with $1\le n_F\le n_F^{\max}:=C_1n_K/(66c_d\gamma^2\sigma_K^2\varepsilon^2)$
and $\varepsilon^2\le7\gamma^2\sigma_K^2/(dn_K)$; if $\nabla^2F$ is $Q$-Lipschitz, take instead
$\beta=n_F\varepsilon^3/C_2$ with $n_F\le C_2n_K/(66c_d\gamma^2\sigma_K^2\varepsilon)$ and
$\varepsilon^2\le\gamma^2\sigma_K^2/(dn_K)$. The total number of function evaluations to reach
$\frac1T\sum_{k<T}\E\|\nabla F_k\|^2\le\varepsilon^2$ then satisfies
\begin{align}
\text{under $L$-smoothness alone:}&\qquad
\mathsf N=\mathcal O\Big(\frac{\Delta_v\;L^2\sigma_F^2\;\varsigma^2\chi_1^2}{\varepsilon^{6}}
+\frac{n_F}{n_K}\cdot\frac{\gamma^3L_K\sigma_K^2\Delta_F}{\varepsilon^{4}}+R_{\mathrm{lo}}\Big),
\label{eq:rateB6}\\
\text{if $\nabla^2F$ is $Q$-Lipschitz:}&\qquad
\mathsf N=\mathcal O\Big(\frac{\Delta_v\;\sigma_F^2\;\varsigma^2Q\chi_2}{\varepsilon^{5}}
+\frac{n_F}{n_K}\cdot\frac{\gamma^3L_K\sigma_K^2\Delta_F}{\varepsilon^{4}}+R_{\mathrm{lo}}\Big),
\label{eq:rateB5}
\end{align}
where $R_{\mathrm{lo}}:=n_F\gamma^4\sigma_K^4/(n_K^2\varepsilon^4)+n_FL\Delta_F/\varepsilon^2$
collects the initialization and $L$-branch terms, lower order at $n_F=1$ and made explicit in
Appendix~\ref{app:complexity}, where a batched warm start is also shown to remove the former. The leading terms are independent of $n_F$ and $n_K$ and carry no
$\sigma_K$.
\end{corollary}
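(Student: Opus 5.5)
The plan is to specialize Theorem~\ref{thm:main}, through the noise bound \eqref{eq:floormain} (with $\lambda=0$, so $\bt=\beta$ and $\theta=\theta(\beta)\le1$ because $\beta\le1/c_d$), and then to choose $\mu$, $\beta$, $\beta_K$ so that every term on the right of \eqref{eq:main} is at most a constant fraction of $\varepsilon^2$. Call the two parts of the bound the \emph{initialization part}, meaning the three $1/T$ terms, and the \emph{floor}, meaning \eqref{eq:floormain}. Once $T$ is known, we multiply by the per-iteration cost $2n_F$ to get $\mathsf N$.

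\textbf{Step 1: the floor.}
\begin{itemize}[leftmargin=1.2em,itemsep=1pt]
\item The known-oracle term $12\beta_K\gamma^2\sigma_K^2/n_K$ is at most $\varepsilon^2/6$, by the choice $\beta_K=\min\{1,n_K\varepsilon^2/(72\gamma^2\sigma_K^2)\}$.
\item The term $11\beta c_d\gamma^2\sigma_K^2/n_K$ is at most $\varepsilon^2/6$ exactly when $\beta\le n_K\varepsilon^2/(66c_d\gamma^2\sigma_K^2)$. With $\beta=n_F\varepsilon^4/C_1$, this is the constraint $n_F\le n_F^{\max}$.
\item For the function-value terms, insert \eqref{eq:sigDmain}. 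This gives $11\beta\varsigma^2\sigma_F^2/(2\mu^2 n_F)$, which must be balanced against the undamped bias $60\|\bar b\|^2+60L^2\mu^2\chi_1^2\le120L^2\mu^2\chi_1^2$.
\item The $\Sigma_b^2$ term is of higher order in $\mu$ and is carried along in the same way.
\end{itemize}
Requiring $120L^2\mu^2\chi_1^2\lesssim\varepsilon^2$ gives $\mu_\star\asymp\varepsilon/(L\chi_1)$. The variance term then reads $\beta\varsigma^2\sigma_F^2L^2\chi_1^2/(n_F\varepsilon^2)$, and it is $\lesssim\varepsilon^2$ exactly when $\beta\asymp n_F\varepsilon^4/(L^2\sigma_F^2\varsigma^2\chi_1^2)$. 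This is the stated $\beta=n_F\varepsilon^4/C_1$, and the numerical constant $C_1$ comes from bookkeeping the fractions.

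The side conditions need two checks:
\begin{itemize}[leftmargin=1.2em,itemsep=1pt]
\item $\beta\le1/c_d$ and $\beta\le1/(3d)$ follow from $n_F\le n_F^{\max}$ together with $\varepsilon^2\le7\gamma^2\sigma_K^2/(dn_K)$.
\item In the $Q$-Lipschitz branch the bias is $\tfrac12Q\mu^2\chi_2$. This gives $\mu^2\asymp\varepsilon/(Q\chi_2)$ and hence $\beta=n_F\varepsilon^3/C_2$. One still has to check that the $\bar B$ term and the $\Sigma_b$ term stay subordinate at this $\mu$.
\end{itemize}

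\textbf{Step 2: the horizon $T$.} We need $2\Delta_F/(\eta T)$, $6\theta\|v(x_0)\|^2/(\beta T)$ and $6\gamma^2\sigma_K^2/(n_K\beta_K T)$ each to be at most $\varepsilon^2/6$. Since $\eta$ equals the minimum in \eqref{eq:stepcond}, we have
\[
1/\eta\le 2L+11L_v/\beta+9\gamma L_K/\beta_K ,
\]
using $\sqrt\theta\le1$. This splits $T$ into four pieces:
\begin{itemize}[leftmargin=1.2em,itemsep=1pt]
\item $(L_v\Delta_F+\|v(x_0)\|^2)/(\beta\varepsilon^2)=\Delta_v/(\beta\varepsilon^2)$;
\item $\gamma L_K\Delta_F/(\beta_K\varepsilon^2)$;
\item $\gamma^2\sigma_K^2/(n_K\beta_K\varepsilon^2)$;
\item $L\Delta_F/\varepsilon^2$.
\end{itemize}
Multiplying by $2n_F$ and substituting $\beta$ and $\beta_K$ gives the four terms of the statement:
\begin{itemize}[leftmargin=1.2em,itemsep=1pt]
\item the leading term $\Delta_vC_1/\varepsilon^6$ in \eqref{eq:rateB6}, or $\Delta_vC_2/\varepsilon^5$ in \eqref{eq:rateB5}, where $n_F$ cancels;
\item $n_F\gamma^3L_K\sigma_K^2\Delta_F/(n_K\varepsilon^4)$;
\item $n_F\gamma^4\sigma_K^4/(n_K^2\varepsilon^4)$;
\item $n_FL\Delta_F/\varepsilon^2$.
\end{itemize}
The warm start $h_0=\ghK(x_0)$ is what makes $\E\|\zeta_0\|^2\le\gamma^2\sigma_K^2/n_K$ hold, which is the form of the third initialization term. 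If $\beta_K=1$ the same bounds hold trivially.

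\textbf{Main obstacle.} The delicate part is Step~1: making the constants close while keeping all the side constraints compatible. The constraints to reconcile are $\beta\le1/c_d$, $\beta\le1/(3d)$ (needed for the sharp $\bar B$ bound), the $\beta c_d\sigma_K^2$ term, and the requirement that the $\Sigma_b^2$ and $\beta^2\|\bar B\|^2$ terms, evaluated at $\mu_\star$, are dominated. The $\Sigma_b^2$ term scales like $\varsigma_4^2L^2\mu^4$-type moments through (D3). The plan is to fix fractions of $\varepsilon^2$ for each term first and then derive $C_1$, $C_2$ and the admissible ranges for $n_F$ and $\varepsilon$ backwards from those fractions.
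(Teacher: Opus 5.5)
Your route is the paper's. You specialize Theorem~\ref{thm:main} through the floor with $\lambda=0$. You balance the radius between the damped variance $11\beta\varsigma^2\sigma_F^2/(2\mu^2n_F)$ and the undamped mean bias; setting the bias to $\lesssim\varepsilon^2$ first gives the same order of $\mu$ as the paper's AM--GM minimizer $\mu_\star$ at $\beta=n_F\varepsilon^4/C_1$. You turn $11\beta c_d\gamma^2\sigma_K^2/n_K\le\varepsilon^2/6$ into $n_F\le n_F^{\max}$, and invert $1/\eta\le 2L+11L_v/\beta+9\gamma L_K/\beta_K$ to get the four terms of $\mathsf N=2n_FT$. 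Step~2 is fine.

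The step that fails is your treatment of $\Sigma_b^2$. It is \emph{not} of higher order in $\mu$, and it does not scale like $\varsigma_4^2L^2\mu^4$. Under $L$-smoothness, Lemma~\ref{lem:Sb}(a) gives $\Sigma_b^2\le\tfrac14L^2\mu^2\varsigma_6^2$. That is the same order $\mu^2$ as the mean bias $L^2\mu^2\chi_1^2$, with a constant $\varsigma_6^2=\Theta(d^3)$ that is a factor $d^2$ larger than $\chi_1^2=\Theta(d)$. Only the factor $\beta$ in $11\beta\Sigma_b^2$ makes it subordinate, and only under $\beta\varsigma_6^2\le\chi_1^2$, i.e.\ $\beta\le1/(9d^2)$ for Gaussian probes. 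The caps you check, $\beta\le1/c_d$ and $\beta\le1/(3d)$, are $\Theta(1/d)$ and do not imply this. At $\beta=\Theta(1/d)$ the radius-balance bracket becomes $120\chi_1^2+\tfrac{11}{4}\beta\varsigma_6^2=\Theta(\beta d^3)$. The leading term then loses its $d^2$ form and $\mathsf N$ grows with $n_F$; this is the $\beta_1<\beta\le\beta_2$ regime of the deployment table. So your argument as written never verifies the condition on which the $d^2$ rate rests.

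The fix uses hypotheses you already have, but you misattribute the role of one of them. From $n_F\le n_F^{\max}$ you get $\beta\le n_K\varepsilon^2/(66c_d\gamma^2\sigma_K^2)$. Then $\varepsilon^2\le7\gamma^2\sigma_K^2/(dn_K)$ and $c_d\ge d$ give $\beta\le7/(66d^2)<1/(9d^2)$, and $\beta\le1/(3d)\le1/c_d$ follows as a byproduct. That is the real purpose of the $\varepsilon$-hypothesis: the $\Theta(1/d)$ caps alone would only need $\varepsilon^2\lesssim\gamma^2\sigma_K^2/n_K$, without the $1/d$. With it, the bracket is at most $123L^2\chi_1^2$ and your balance goes through.

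The same correction applies in the $Q$-branch. There $\Sigma_b^2\le\tfrac1{36}Q^2\mu^4\varsigma_8^2$ has the same order $\mu^4$ as the mean bias. You need $\beta\varsigma_8^2\le\chi_2^2$, i.e.\ $\beta\le1/(54d^2)$, and $\varepsilon^2\le\gamma^2\sigma_K^2/(dn_K)$ is what supplies it.

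In that branch you also cannot use \eqref{eq:floormain} literally. Its last term $60L^2\mu^2\chi_1^2$ comes from the $L$-smooth bound on $\bar B_k$. At $\mu^2\asymp\varepsilon/(Q\chi_2)$ it is of order $\varepsilon$, not $\varepsilon^2$, and would destroy the $\varepsilon^{-5}$ rate. You must use the second bound of Lemma~\ref{lem:Bbarsharp}, $\|\bar B_k\|\le\tfrac32dQ\mu^2\chi_2$. With $\beta\le1/(3d)$ the two bias terms then total $30Q^2\mu^4\chi_2^2$.
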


Reading $\varsigma^2\chi_1^2$ and $\varsigma^2\chi_2$ off the probe constants gives $d^2$ for
Gaussian and sphere probes in both cases; coordinate probes, which lack the smoothing identity, are
treated by a separate corollary with the direct bias bounds and give $d^3$ and $d^{5/2}$
(Appendix~\ref{app:complexity}). These are the six complexities reported by
\citet{hikima2025estimators} for plain multi-direction methods, obtained here from one analysis and
a table of constants. Three things change. \emph{The factor multiplying
$\Delta_F$ is $L_v$, not $L$}: the step-size condition \eqref{eq:stepcond} is a minimum of three
branches, the $L_v/\beta$ branch dominates once $\beta=\mathcal O(\varepsilon^2)$, and $L$ survives
only at order $\varepsilon^{-2}$, while the $L^2$ in \eqref{eq:rateB6} is the finite-difference bias
common to any two-point method (Appendix~\ref{app:LvsLv}). \emph{One probe per update suffices},
where the guarantee of \citet{hikima2025estimators} needs $N\ge81d$, which at $d=30$ exceeds their
own $5000$-sample budget (Appendix~\ref{app:queries}). \emph{One sample of each oracle per iteration
suffices}: $n_F=n_K=1$ is admissible, though not the best operating point in a performative problem,
since $\mathsf N$ is flat in $n_F$ while the number of deployed decisions $3T$ falls as $1/n_F$
(\S\ref{sec:deploy}).

\begin{table}[t]
\centering\footnotesize
\setlength{\tabcolsep}{3pt}
\resizebox{\textwidth}{!}{%
\begin{tabular}{@{}llccl@{}}
\toprule
Method & Setting & Complexity & Probes & Additional structure\\
\midrule
Two-point ZO \citep{ghadimi2013stochastic} & black box & $\mathcal O(d\varepsilon^{-4})$ & $1$ & shared randomness\\
Guided ZO \citep{hikima2025guided} & perf.\ pred. & $\mathcal O(d^{4}\varepsilon^{-6})$ & $1$ & known component\\
Multi-direction ZO \citep{hikima2025estimators} & perf.\ pred. & $\mathcal O(d^{2}\varepsilon^{-6})$, $\mathcal O(d^{2}\varepsilon^{-5})$ & $\Omega(d)$ & $Q$-Lipschitz ($\varepsilon^{-5}$)\\
PZOS \citep{fischer2026blackbox} & MPEC & $\mathcal O(d^{3/2}\varepsilon^{-5})$, Goldstein & $1$ & bounded $\nabla f$, exact $\nabla_x f$\\
PePG \citep{basu2025pepg} & perf.\ RL & $\Omega(|\mathcal S||\mathcal A|^2\varepsilon^{-2}(1{-}\gamma)^{-3})$ & n/a & response model\\
\midrule
\piso (this work) & all three & $\mathcal O(d^{2}\varepsilon^{-6})$, $\mathcal O(d^{2}\varepsilon^{-5})$ & $1$ & split \eqref{eq:split}, $\gK$ oracle\\
\bottomrule
\end{tabular}}
\caption{Complexities as reported by each source; criteria and oracle models differ, so only the
last two columns compare directly. Against the guided line we improve $d^4\to d^2$; against the
frontier we match $d^2$ with $L\to L_v$ and one probe per update; PZOS has a Goldstein guarantee we
lack but assumes bounded gradients and a deterministic follower; PePG requires a differentiable
response model.}
\label{tab:comparison}
\end{table}

\paragraph{The weight $\gamma$ and the smoothness gain.} The analysis assigns each parameter a role
and a prescription (Appendix~\ref{app:params}); we single out $\gamma$ because it governs the constant
that separates this work from prior methods. The drift term in Lemma~\ref{lem:structure}(ii) is, by
\eqref{eq:residual}, exactly
\begin{equation}\label{eq:driftsplit}
v_{k+1}-v_k=\mathsf d^U_k+(1-\gamma)\,\mathsf d^K_k,
\qquad \mathsf d^U_k:=\gU(x_{k+1})-\gU(x_k),\quad \mathsf d^K_k:=\gK(x_{k+1})-\gK(x_k).
\end{equation}

\begin{proposition}[Per-step drift-minimizing weight]\label{prop:gamma}
Fix an iteration $k$ and the displacement $x_{k+1}-x_k$, and write
$\mathsf A_k:=\Ek\|\mathsf d^U_k\|^2$, $\mathsf B_k:=\Ek\inner{\mathsf d^U_k}{\mathsf d^K_k}$,
$\mathsf C_k:=\Ek\|\mathsf d^K_k\|^2$, with $\varphi_k:=\mathsf B_k/\sqrt{\mathsf A_k\mathsf C_k}$.
If $\mathsf C_k>0$, the per-step drift $\Ek\|v_{k+1}-v_k\|^2=\mathsf A_k+2(1-\gamma)\mathsf B_k
+(1-\gamma)^2\mathsf C_k$ is a strictly convex quadratic in $\gamma$, minimized at
$\gamma_k^\star=1+\mathsf B_k/\mathsf C_k$ with minimum value $\mathsf A_k(1-\varphi_k^2)$. At
$\gamma=1$ the drift is $\mathsf A_k$, the increment of $\gU$ alone; at $\gamma=0$ it is
$\Ek\|\nabla F(x_{k+1})-\nabla F(x_k)\|^2$, which the black-box analysis bounds by
$L^2\eta^2\Ek\|G_k\|^2$; equality need not hold, so $L_v(0)\le L$ rather than $L_v(0)=L$.
\end{proposition}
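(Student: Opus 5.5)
The plan is to work directly from the drift decomposition \eqref{eq:driftsplit}. With the displacement $x_{k+1}-x_k$ held fixed, the increments $\mathsf d^U_k$ and $\mathsf d^K_k$ do not depend on $\gamma$; they depend only on the two points and on the functions $\gU,\gK$. The one thing to watch is a subtle point: in the algorithm $x_{k+1}$ itself depends on $\gamma$ through $G_k$. The hypothesis ``fix the displacement'' is exactly what removes that dependence. Under it, $\mathsf A_k,\mathsf B_k,\mathsf C_k$ are constants with respect to $\gamma$.

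\textbf{Expanding the drift.} Expand
\[
\|v_{k+1}-v_k\|^2=\|\mathsf d^U_k\|^2+2(1-\gamma)\inner{\mathsf d^U_k}{\mathsf d^K_k}+(1-\gamma)^2\|\mathsf d^K_k\|^2
\]
and take $\Ek$. By linearity this gives $\mathsf A_k+2(1-\gamma)\mathsf B_k+(1-\gamma)^2\mathsf C_k$. Write $t:=1-\gamma$. The quadratic $q(t)=\mathsf A_k+2t\mathsf B_k+t^2\mathsf C_k$ has leading coefficient $\mathsf C_k>0$, so it is strictly convex in $t$, and hence in $\gamma$, since $\gamma\mapsto t$ is affine.

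\textbf{Minimizer and minimum.} Setting $q'(t)=0$ gives $t^\star=-\mathsf B_k/\mathsf C_k$, that is, $\gamma_k^\star=1+\mathsf B_k/\mathsf C_k$. Substituting back,
\[
q(t^\star)=\mathsf A_k-\mathsf B_k^2/\mathsf C_k=\mathsf A_k\bigl(1-\mathsf B_k^2/(\mathsf A_k\mathsf C_k)\bigr)=\mathsf A_k(1-\varphi_k^2).
\]
The last form needs $\mathsf A_k>0$ for $\varphi_k$ to be defined. If $\mathsf A_k=0$, Cauchy--Schwarz in $L^2$, namely $|\mathsf B_k|\le\sqrt{\mathsf A_k\mathsf C_k}$, forces $\mathsf B_k=0$, and the minimum is $0$. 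Under either reading the stated value holds, and $\varphi_k\in[-1,1]$, so the minimum is nonnegative.

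\textbf{The endpoint claims.} At $\gamma=1$ we have $t=0$, and the drift is $\mathsf A_k$. At $\gamma=0$, \eqref{eq:residual} gives $v=\nabla F$, so the drift is $\Ek\|\nabla F(x_{k+1})-\nabla F(x_k)\|^2$. By $L$-smoothness (Assumption~\ref{as:smooth}) and $x_{k+1}-x_k=-\eta G_k$ with $\eta_k\equiv\eta$, this is at most $L^2\eta^2\Ek\|G_k\|^2$. Because $L_v$ is defined as the smallest constant satisfying that inequality, $L_v(0)\le L$. Equality can fail because the Lipschitz bound need not be tight along the trajectory.

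Nothing here is a real obstacle. The only care needed is the $\gamma$-independence of $\mathsf A_k,\mathsf B_k,\mathsf C_k$ under the fixed-displacement hypothesis, and the degenerate case $\mathsf A_k=0$ in the formula involving $\varphi_k$.
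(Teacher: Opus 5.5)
Your proof is correct and follows the paper's argument essentially line by line. As there, you fix the displacement so that $\mathsf d^U_k,\mathsf d^K_k$ are $\gamma$-independent, expand the quadratic in $1-\gamma$, minimize it to get $\gamma_k^\star=1+\mathsf B_k/\mathsf C_k$ with value $\mathsf A_k-\mathsf B_k^2/\mathsf C_k$, and read off the endpoints using $L$-smoothness at $\gamma=0$; your Cauchy--Schwarz treatment of the degenerate case $\mathsf A_k=0$, which the paper leaves implicit, is a small but welcome addition.
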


Two qualifications apply. $L_v$ is a \emph{uniform} bound on the drift ratio along a trajectory
that itself depends on $\gamma$, so the proposition identifies the drift-minimizing weight at each
step rather than the minimizer of $L_v$; and the method needs $\gamma>0$, which
$\gamma_k^\star$ can fail when $\mathsf B_k<-\mathsf C_k$. With those caveats the gain has two
parts. Moving from $\gamma=0$ to $\gamma=1$ replaces the increment of the full gradient, at most
$L\eta\|G_k\|$, by that of the opaque component alone, which in performative prediction scales with
the performative sensitivity and can be far smaller; this is the substitution $L\to L_v$ of
Corollary~\ref{cor:B}, which \citet{hikima2025guided} and \citet{fischer2026blackbox} do not
obtain because their estimators are redrawn each iteration and their rates carry $L$. Moving from
$\gamma=1$ to $\gamma_k^\star$ reduces the per-step drift by a further factor $1-\varphi_k^2$ when
the two components' increments are correlated: $\gamma$ is the least-squares regression coefficient of $-\mathsf d^U_k$ on $\mathsf d^K_k$,
a control-variate coefficient built from uncentred second moments rather than a covariance in the
strict sense, and neither endpoint is drift-minimizing unless $\varphi_k=0$. The remaining parameters have prescriptions of the same
character, \eqref{eq:rhostar} for $\rho$ and Appendix~\ref{app:params} for $\lambda$ and $\rho_K$,
each settable once $\gamma$ is fixed.

\subsection{Deployments, and how the batch sizes control them}\label{sec:deploy}

In a performative problem a query is an intervention: evaluating $\widehat F$ at a point requires
\emph{deploying} that decision and observing outcomes under it. Each iteration deploys three
decisions, $x_k$ for the known-component oracle and $x_k\pm\mu_ku_k$ for the two function values,
so the total is $3T$ regardless of the batch sizes. Deployments and samples are therefore distinct
resources, and where a decision is a price schedule or a lending rule, the number of distinct
decisions released is the cost that matters.

The two batch sizes act on this cost in the same direction. Averaging $n_F$ samples per query point
divides $\sigma_{D,k}^2$ by $n_F$ and permits a proportionally larger $\beta$; averaging $n_K$
samples of $\ghK$ permits a larger $\beta_K$. Since $T=\Theta(\Delta_v/(\beta\varepsilon^2))$, both
enlarge the step and reduce $T$. How far this goes is governed by two caps on $\beta$ that sit a
factor $d$ apart. Below $\beta_1:=\chi_1^2/\varsigma_6^2=\Theta(1/d^2)$ the damped second-moment
bias $11\beta\Sigma_b^2$ stays dominated by the undamped mean bias, the probe radius balances as in
Corollary~\ref{cor:B}, and $\mathsf N$ is flat in $\beta$; between $\beta_1$ and the bias-analysis
cap $\beta_2:=1/(3d)$ the second-moment bias takes over the radius balance and $\mathsf N$ grows
linearly in $\beta$.
\begin{center}\small
\setlength{\tabcolsep}{3.5pt}
\begin{tabular}{@{}llccc@{}}
\toprule
& $\beta$ & deployments $3T$ & function evals.\ $2n_FT$ & known samples $n_KT$\\
\midrule
$n_F=n_K=1$ & $\varepsilon^4/C_1$ & $\Theta(d^2\varepsilon^{-6})$ & $\Theta(d^2\varepsilon^{-6})$ & $\Theta(d^2\varepsilon^{-6})$\\
batches raised until $\beta=\beta_1$ & $\Theta(1/d^2)$ & $\Theta(d^2\varepsilon^{-2})$ & $\Theta(d^2\varepsilon^{-6})$ & $\Theta(d\,\varepsilon^{-4})$\\
batches raised until $\beta=\beta_2$ & $\Theta(1/d)$ & $\Theta(d\,\varepsilon^{-2})$ & $\Theta(d^3\varepsilon^{-6})$ & $\Theta(d\,\varepsilon^{-4})$\\
\bottomrule
\end{tabular}
\end{center}
So enlarging the batches up to $\beta_1$ is free in function evaluations while cutting the
deployment count by $\varepsilon^{-4}$, from $\Theta(d^2\varepsilon^{-6})$ to
$\Theta(d^2\varepsilon^{-2})$, plus the additive $\gamma L_K\Delta_F\varepsilon^{-2}$ of the
known-component branch (Appendix~\ref{app:deploy}). Going further, to $\beta_2$, removes one more
factor $d$ from the deployment count at the price of a factor $d$ in observations, a trade a
practitioner may well accept when a deployment is an intervention on a population. Three points are
worth separating. The analysis permits $n_F=n_K=1$, so no minibatching is required, but $n_F=1$ is
the wrong operating point in a performative problem, since it maximizes the number of distinct
decisions released for the same observations. The two caps have different origins, and neither is the stability boundary: $\beta_2=1/(3d)$ is the
limit of the sharp bias-vector domination argument (Appendix~\ref{app:bias}), while the sketch
contraction itself holds up to $2/c_d$ and Theorem~\ref{thm:main} uses $\beta\le1/c_d$; $\beta_1$
reflects the generic bound $\Sigma_b^2\le\tfrac14L^2\mu^2\varsigma_6^2$, which we show is loose by a
factor $d$ on examples; a general bound $\Sigma_b^2=\mathcal O(L^2\mu^2d^2)$ would move
$\beta_1$ to $\Theta(1/d)$ and make the last row free as well, and we leave it open. And a
multi-direction method cannot trade this way at all: its $N$ probes are $2N$ \emph{distinct}
decisions per update, which repeated sampling cannot replace, so with $N=\Theta(d^2\varepsilon^{-4})$
its deployment count equals its sample count, $\Theta(d^2\varepsilon^{-6})$.

\section{Related work}\label{sec:related}

\paragraph{Methods that exploit a known gradient component.} Using a cheap or partial signal
alongside an expensive one is the subject of \emph{optimization with access to auxiliary
information} \citep{chayti2024auxiliary} and the helper framework extending it to second order
\citep{chayti2024unified}; our setting is an instance in which the complement of the auxiliary
signal is accessible only through function values. Within performative prediction,
\citet{ray2022decision} and \citet{liu2024twotimescale} use one-point estimators with constants in
$\sup|f|$, \citet{hikima2025zeroth} replace that by a variance bound, and
\citet{hikima2025guided,hikima2025estimators} are discussed above. Within MPECs,
\citet{cui2023mpec,cui2023hierarchicalgames} apply zeroth-order tools to the whole composite
objective, \citet{fischer2026blackbox} estimate only the follower Jacobian, and \citet{sow2022pzobo}
smooth the inner solution map. All of these estimate a gradient or a Jacobian at the current iterate,
leaving a second moment of order $c_d$ redrawn each iteration: in the bound of
\citet{fischer2026blackbox} the leading $\mathcal O(d)$ term is common to their partial estimator and
to the black-box baseline. Tracking a residual makes the direction error contract instead, and needs
no composite structure, which is why it extends to performative RL.

\paragraph{Performative prediction and RL.} \citet{perdomo2020performative} introduced performative
prediction and the stability/optimality distinction; \citet{mendlerdunner2020stochastic},
\citet{mofakhami2023neural}, \citet{li2024stochastic} and \citet{jagadeesan2022regret} study
variants. Methods targeting performative \emph{optima} assume a parametric distribution map
\citep{izzo2021perfgd,izzo2022stateful,miller2021outside} or plug in an estimate of it
\citep{lin2024plugin}; \citet{cyffers2026optimal} study the choice of regularizer under shift.
Strategic classification \citep{levanon2021strategic,cyffers2024optimal} and Stackelberg games with
algorithmic agents \citep{frankel2025finite} supply two of our settings. In performative RL most work
targets \emph{stability} \citep{mandal2023performative,rank2024gradually,mandal2025linear};
\citet{basu2025pepg} give the first policy-gradient method for optimality, whose guarantee needs a
softmax PeMDP with a known feature map so that the response derivatives are in closed form. Where
such a model exists its use is preferable; the settings that motivate our work are those where it
does not, and \piso assumes only that the response can be queried.

\paragraph{Zeroth-order estimation, nonsmoothness, and sketching.} The estimator machinery is
standard \citep{nesterov2017random,ghadimi2013stochastic,shamir2017optimal,flaxman2005online};
recent work studies comparison-based random search
\citep{chayti2025randomsearch,elbakkali2026comparison} and heavy-tailed value noise
\citep{elbakkali2026heavytailed}, all treating the objective as a single black box. The mean-versus-
second-moment distinction for the bias is due to \citet{berahas2022theoretical}. For nonsmooth
objectives Goldstein stationarity is the appropriate target
\citep{clarke1990optimization,lin2022gradientfree,kornowski2024optimal}, and
\citet{fischer2026blackbox} give a notion tailored to a smooth leader; equilibrium response maps are
generally only Lipschitz \citep{lindberg2015traffic}, so our smooth analysis is complementary there.
Recursion \eqref{eq:teaser} is structurally the EF21 mechanism \citep{richtarik2021ef21} and a
sketch-and-project iteration \citep{gower2015randomized}; \citet{huang2022accelerated} study
zeroth-order momentum without identifying it as a sketch step size.

\section{Experiments}\label{sec:experiments}

We evaluate on five settings spanning the three areas: multiproduct pricing (six weekly instances,
$d=10$, $40$ buyers, $100$ simulations, $5000$ samples) and strategic classification (credit data of
\citet{yeh2009comparisons}, $d=12$, five response strengths) for performative prediction; the
$8\times8$ gridworld of \citet{mandal2023performative} for performative RL; and the $225$ routing
and $427$ security instances of \citet{fischer2026blackbox} for bilevel games. Baselines are the
strongest published method for each area operating under \eqref{eq:split}: GZO-NS and GZO-HS
\citep{hikima2025guided}, ZO-OGVR \citep{hikima2025zeroth}, a Gaussian two-point method ZO-TG, ZOS
and PZOS \citep{fischer2026blackbox}, and a vanilla policy gradient. On the RL setting we
additionally report PePG \citep{basu2025pepg} as a reference with privileged information: it is given
exact reward and transition response Jacobians, so it differentiates the environment's response
instead of querying it, which is the assumption our setting is defined by not making. Its curve is
not a bound on what other methods can attain, but it indicates what a differentiable response model
is worth. We run \piso and \pisotwo with
Gaussian probes and with a variant cycling through the columns of a random orthonormal frame. The latter has the
sphere family as its one-step marginal but is dependent across consecutive steps, so
Assumption~\ref{as:dir} does not cover it and the theory applies to it only heuristically. Full protocols, the remaining figures, a metrics table and all
parameter ablations are in Appendix~\ref{app:experiments}.

\begin{figure}[t]
\centering
\includegraphics[width=0.86\textwidth]{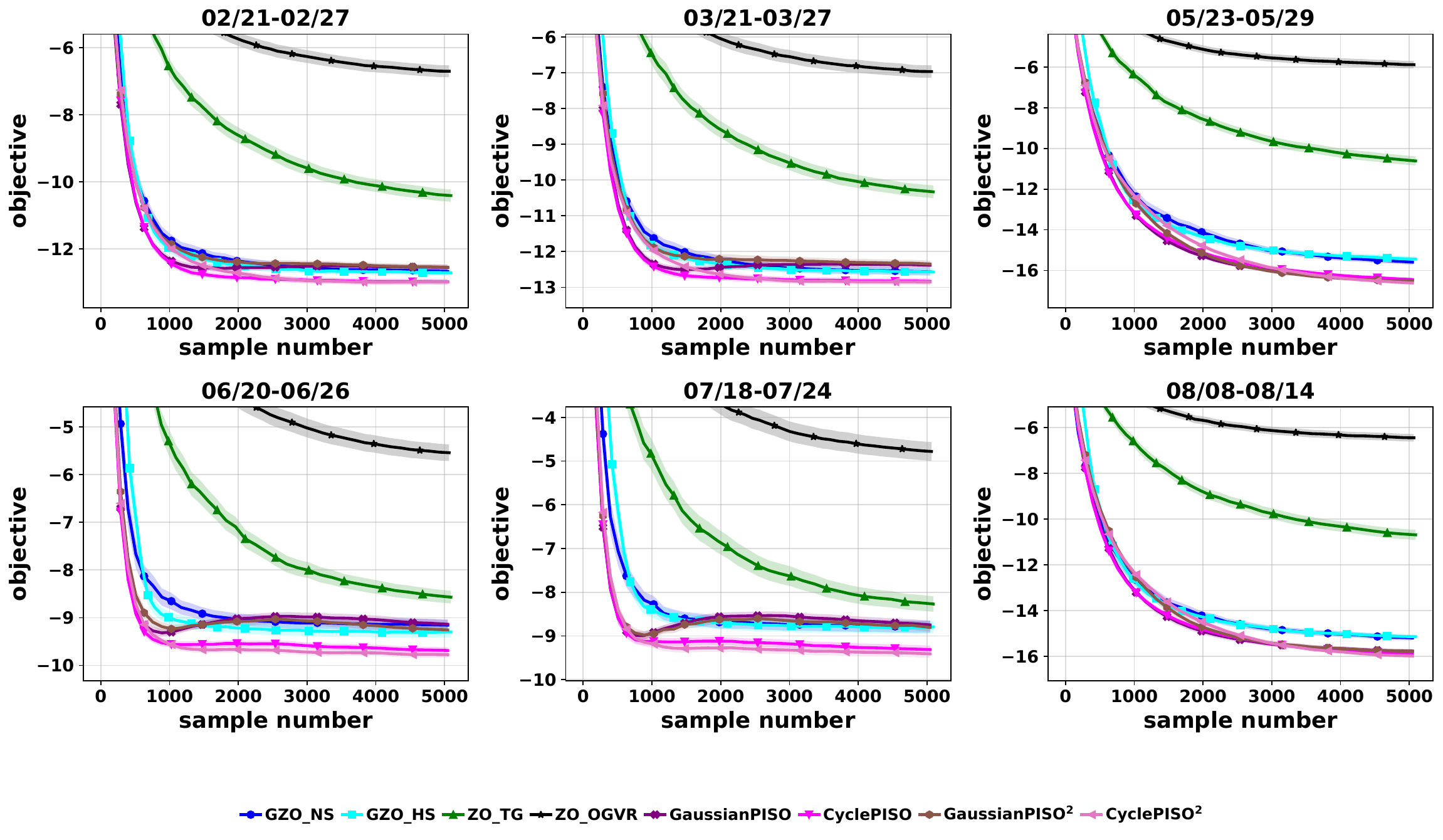}
\caption{Multiproduct pricing, six weekly instances. Objective (lower is better) against samples
consumed; means over $100$ simulations with one-standard-error bands. All four \piso variants
separate from the guided baselines early in the run and the cycling variants attain the lowest
objective on every instance. ZO-TG and ZO-OGVR, which do not use the computable component, remain
far behind.}
\label{fig:pricing}
\end{figure}

\begin{figure}[t]
\centering
\begin{minipage}{0.46\textwidth}\centering
\includegraphics[width=\textwidth]{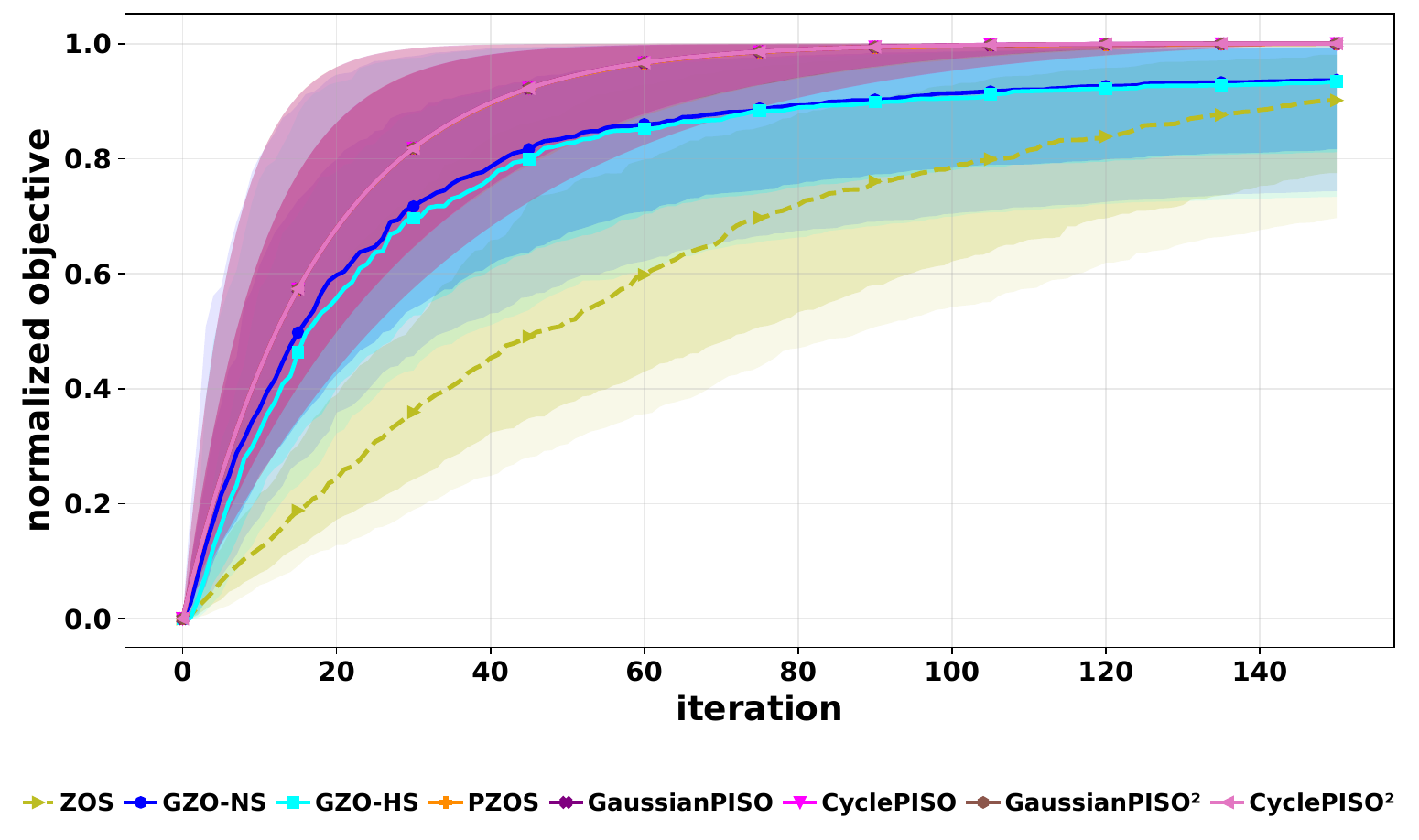}\\{\small(a) routing, $225$ instances}
\end{minipage}\hfill
\begin{minipage}{0.46\textwidth}\centering
\includegraphics[width=\textwidth]{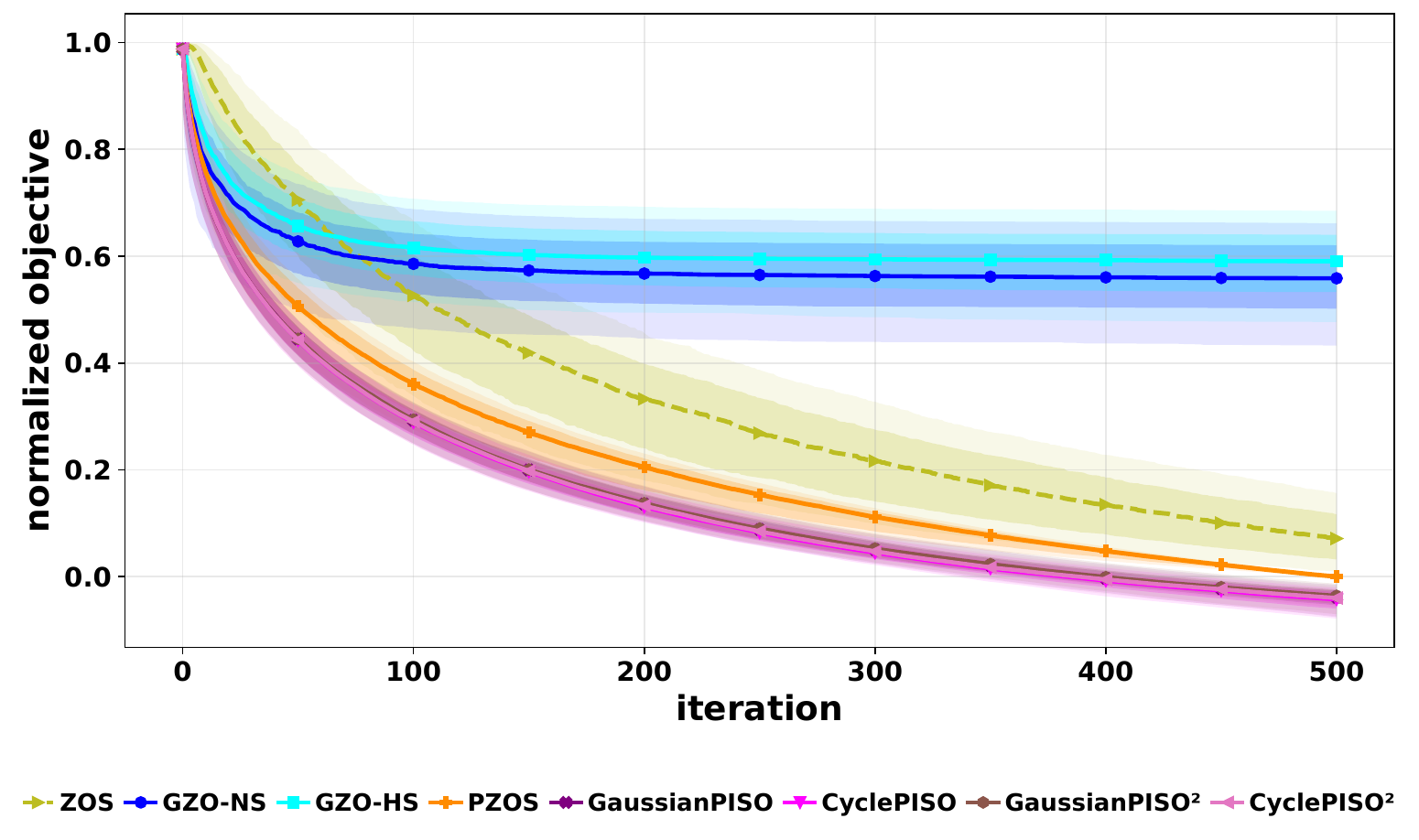}\\{\small(b) security, $427$ instances}
\end{minipage}
\caption{Bilevel games with an opaque lower level; per-instance normalized objective against
iteration, median with interquartile and $10$--$90$ percentile bands. (a) Routing is a maximization
problem: \piso attains the same endpoint as PZOS but reaches it well before the guided baselines.
(b) Security is a minimization problem: \piso separates from PZOS from the first iterations and ends
below it.}
\label{fig:bilevel}
\end{figure}

On pricing (Figure~\ref{fig:pricing}) the ordering is stable across all six instances: the cycling
variants of \piso and \pisotwo are best, the Gaussian variants follow, then the guided baselines, then
the two methods that ignore $\gK$. Strategic classification is more clear-cut: every \piso variant
attains a test loss between $0.618$ and $0.641$ at every response strength against $0.821$ for the
best baseline, with accuracy higher by four to ten points, and the methods that ignore $\gK$ degrade as
the response strengthens while the \piso variants are flat (Table~\ref{tab:classification}). On the
bilevel games (Figure~\ref{fig:bilevel}) two outcomes are worth separating: on routing \piso and PZOS
are indistinguishable at convergence, confirmed by per-instance scatter plots lying on the diagonal,
and what \piso gains is speed; on security \piso is below PZOS throughout and across the dimension
range $d\in[100,300]$. On performative RL, \piso is best among methods that query the environment,
with the skyline faster still (Figure~\ref{fig:rl}).

The ablations (Appendix~\ref{app:ablations}) test the analysis directly. The $\rho$ sweep locates the
optimum at $\rho=0.92$ in all four variants, the nearest grid point to the predicted
$\rho^\star\in\{0.900,0.917\}$, and every value at or below the predicted neutrality threshold
$1-2/c_d$ descends and then \emph{rises}, which is the signature of a residual iteration whose sketch
step size has left its stability window. The $\rho_K$ sweep is informative in the other direction:
with the growing batches used, $\rho_K=0$ is already among the best choices, which is what the
analysis predicts when $n_K$ is large. The components ablation confirms that neither the known
component alone nor an uncentred zeroth-order correction is stable on its own.

\section{Conclusion}\label{sec:conclusion}

We introduced PISO for stochastic optimization with a partially accessible gradient, a structure shared by performative prediction, performative reinforcement learning, and bilevel optimization with a black-box lower level. PISO tracks the residual and centres each zeroth-order observation on its current estimate, making the probing variance depend on the tracking error rather than the residual’s magnitude. Our analysis establishes nonconvex convergence guarantees with one probe direction per update and no uniform bound on the unavailable gradient component. These guarantees capture gains from slowly varying residuals, while full-objective smoothness still controls finite-difference bias. Experiments across five settings demonstrate improvements in solution quality or convergence speed over baselines with comparable oracle access.

Our analysis assumes a smooth objective, which equilibrium response maps need not produce. Extending residual tracking to Goldstein stationarity would broaden its applicability to nonsmooth bilevel problems. Analysing the empirically strong orthogonal cycling variants, whose dependent probes fall outside our assumptions, remains open.

\subsection*{AI use statement}

We used a large language model as an assistant for restructuring and editing the manuscript, checking
and tightening proofs, drafting parts of the experiment code, and suggesting related work. The authors
verified all such output. The research ideas, the method and the main results are the authors' own,
and the authors take full responsibility for the content of this work.

\section*{Reproducibility statement}
All proofs are in Appendices~\ref{app:structure}--\ref{app:queries}, with every constant derived
explicitly and the assumptions restated in Appendix~\ref{app:notation}. Experimental protocols,
problem definitions, baseline configurations, and per-setting hyperparameters are in
Appendix~\ref{app:experiments}. The classification data is the public credit dataset of
\citet{yeh2009comparisons}.

\bibliographystyle{iclr2027_conference}
\bibliography{refs}

\appendix
\clearpage
\section*{Appendix Contents}
\begingroup
\setcounter{tocdepth}{2}
\makeatletter
\@starttoc{apc}
\makeatother
\endgroup

\makeatletter
\let\PISO@orig@addcontentsline\addcontentsline
\renewcommand{\addcontentsline}[3]{%
  \PISO@orig@addcontentsline{#1}{#2}{#3}%
  \def\PISO@contentsfile{#1}%
  \def\PISO@maintoc{toc}%
  \ifx\PISO@contentsfile\PISO@maintoc
    \PISO@orig@addcontentsline{apc}{#2}{#3}%
  \fi
}
\makeatother

\newpage

\section{Overview of the appendix and summary of notation}\label{app:notation}

Appendix~\ref{app:structure} proves the structural identity (Lemma~\ref{lem:structure}) and explains
what centring buys. Appendix~\ref{app:families} collects the properties of the probe distribution and
proves the constants of Table~\ref{tab:families}. Appendix~\ref{app:bias} treats the
finite-difference bias, separating its mean from its second moment; this separation is what controls
the dimension dependence of the rates. Appendix~\ref{app:tracking} proves the tracking lemma, the
technical core of the analysis. Appendix~\ref{app:main} proves Theorem~\ref{thm:main}, and
Appendix~\ref{app:complexity} the complexity corollaries. Appendix~\ref{app:params} proves the
statements of Appendix~\ref{app:params}, and Appendix~\ref{app:queries} the probe-count claim of
Remark~\ref{rem:queries}.

Every symbol below has one meaning throughout. Pairs that look similar are listed together so they
can be checked at a glance.

\begin{center}\small
\setlength{\tabcolsep}{5pt}
\begin{tabular}{@{}llll@{}}
\toprule
$x_k$ & iterate & $u_k,s_k$ & probe and score\\
$m_k$ & residual state & $P_k=s_ku_k^\top$ & sketch matrix\\
$R_k$ & residual observation, \eqref{eq:centred} & $c_d$ & sketch constant, \eqref{eq:D2app}\\
$G_k$ & search direction & $\varsigma^2,\varsigma_4^2,\varsigma_6^2,\varsigma_8^2$ & probe moments, \eqref{eq:D3app}\\
$v(x)$ & residual, \eqref{eq:residual} & $\nu,\kappa$ & smoothing measure and scale, \eqref{eq:D4app}\\
$E_k=m_k-v(x_k)$ & tracking error & $\chi_1,\chi_2$ & smoothing moments, \eqref{eq:chidef}\\
$\Delta_k=G_k-\nabla F(x_k)$ & direction error & $b_k$ & finite-difference bias (scalar)\\
$\Phi_k$ & Lyapunov function & $\bar b_k,\bar B_k$ & bias vectors, \eqref{eq:biasvec}\\
$\omega$ & weight on $\|E_k\|^2$ in $\Phi_k$ & $\Sigma_{b,k}^2$ & $\Ek[b_k^2\|s_k\|^2]$\\
$\rho,\ \beta=1-\rho$ & momentum, sketch step & $\delta_k$ & known-oracle error, scaled by $\gamma$\\
$\lambda,\ \bt=1-(1-\lambda)\rho$ & mixing, readout gain & $\sigma_{K,k}^2,\sigma_{D,k}^2$ & oracle variances\\
$\theta(b)=1-2b+c_db^2$ & contraction factor & $\xi_k$ & directional-oracle noise\\
$\gamma$ & known-component weight & $\delta_{\mathrm{tol}}$ & solver tolerance (Regime C)\\
$\eta,\mu_k$ & step size, probe radius & $n_F,n_K$ & per-iteration batch sizes\\
$L,L_K,L_v,Q$ & smoothness constants & $\mathsf N$ & total function evaluations\\
\bottomrule
\end{tabular}
\end{center}

We abbreviate $\Ek[\,\cdot\,]:=\E[\,\cdot\mid\Fk]$, where $\Fk$ is the $\sigma$-field generated by
all randomness drawn before iteration $k$, so that $x_0,\dots,x_k$, $m_0,\dots,m_k$ and
$h_0,\dots,h_k$ are all $\Fk$-measurable while the iteration-$k$ draws $(u_k,s_k)$, $\ghK(x_k)$ and
the function-value noise are not. We write
$v_k:=v(x_k)$ and $\nabla F_k:=\nabla F(x_k)$, and set $\theta:=\theta(\bt)$.

\paragraph{Standing assumptions.} We restate them for reference.

\begin{assumption}[Smoothness; restatement of Assumption~\ref{as:smooth}]\label{as:smoothapp}
$F$ is $L$-smooth and bounded below by $F^\star$, with $\Delta_F:=F(x_0)-F^\star$; and $\gK$ is
$L_K$-Lipschitz. We write $L_v$ for the smallest constant satisfying
\begin{equation}\label{eq:pathwise}
\Ek\|v_{k+1}-v_k\|^2 \;\le\; L_v^2\,\eta^2\,\Ek\|G_k\|^2 .
\end{equation}
\end{assumption}

Finiteness of $L_v$ is implied rather than assumed: $v=\nabla F-\gamma\gK$ is
$(L+\gamma L_K)$-Lipschitz, so \eqref{eq:pathwise} holds with $L_v\le L+\gamma L_K$ because
$\|x_{k+1}-x_k\|=\eta\|G_k\|$. Using the smallest valid constant along the trajectory rather than the
global Lipschitz modulus only strengthens the results.

\begin{assumption}[Oracles; restatement of Assumption~\ref{as:oracles}]\label{as:oraclesapp}
$\Ek[\ghK(x_k)]=\gK(x_k)$ and $\Ek\|\ghK(x_k)-\gK(x_k)\|^2\le\sigma_{K,k}^2$, and $\ghK(x_k)$ is drawn
independently of $(u_k,s_k)$; consequently $\Ek[\delta_k]=0$ and
$\Ek\|\delta_k\|^2\le\gamma^2\sigma_{K,k}^2$. In \eqref{eq:twopoint}, $b_k$ is a deterministic
function of $(x_k,u_k,\mu_k)$, and
$\E[\xi_k\mid\Fk,u_k,s_k,\delta_k]=0$ with $\E[\xi_k^2\mid\Fk,u_k,s_k]\le\sigma_{D,k}^2$.
\end{assumption}

Note that $\xi_k$ is required to be conditionally mean-zero, not independent of $u_k$. This matters:
in a performative problem the noise of $\widehat F(x_k\pm\mu_ku_k)$ depends on $u_k$, because a
different decision is deployed at each of the two query points.

\begin{assumption}[Probe distribution; restatement of Assumption~\ref{as:dir}]\label{as:dirapp}
The pair $(u_k,s_k)$ is drawn i.i.d.\ given $\Fk$ and independently of $\delta_k$; no independence
from $\xi_k$ is assumed, only the conditional mean and variance conditions of
Assumption~\ref{as:oraclesapp}. It satisfies, for every $\Fk$-measurable $z$,
\begin{align}
\text{(D1)}\quad & \Ek\big[s_ku_k^\top\big]=\Id, \label{eq:D1app}\\
\text{(D2)}\quad & \Ek\big\|\inner{z}{u_k}s_k\big\|^2\le c_d\|z\|^2, \label{eq:D2app}\\
\text{(D3)}\quad & \Ek\|s_k\|^2\le\varsigma^2,\quad
\Ek\big[\|s_k\|^2\|u_k\|^2\big]\le\varsigma_4^2,\quad
\Ek\big[\|u_k\|^4\|s_k\|^2\big]\le\varsigma_6^2,\quad
\Ek\big[\|u_k\|^6\|s_k\|^2\big]\le\varsigma_8^2, \label{eq:D3app}\\
\text{(D4)}\quad & \E\Big[\tfrac{F(x+\mu u)-F(x-\mu u)}{2\mu}\,s\Big]=\nabla F_{\nu,\kappa\mu}(x),
\qquad F_{\nu,\tau}(x):=\E_{w\sim\nu}\big[F(x+\tau w)\big], \label{eq:D4app}
\end{align}
for some symmetric probability measure $\nu$ on $\R^d$ and scale $\kappa>0$. Given (D4) we set
\begin{equation}\label{eq:chidef}
\chi_1 \;:=\; \kappa\,\E_\nu\|w\| , \qquad \chi_2 \;:=\; \kappa^2\,\E_\nu\|w\|^2 .
\end{equation}
\end{assumption}

Only (D1) and (D2) enter the contraction; (D3) and (D4) enter only through noise and bias constants.

\section{The structural identity}\label{app:structure}

\begin{proof}[Proof of Lemma~\ref{lem:structure}]
\emph{(i) Two-timescale readout.} Substituting $m_{k+1}=\rho m_k+(1-\rho)R_k$ into the readout,
\[
\lambda R_k+(1-\lambda)m_{k+1}
=\lambda R_k+(1-\lambda)\big[\rho m_k+(1-\rho)R_k\big]
=(1-\lambda)\rho\,m_k+\big[\lambda+(1-\lambda)(1-\rho)\big]R_k .
\]
Expanding the bracket,
$\lambda+(1-\lambda)(1-\rho)=\lambda+1-\rho-\lambda+\lambda\rho=1-\rho(1-\lambda)=\bt$, while the
coefficient of $m_k$ is $(1-\lambda)\rho=1-\bt$. This proves (i).

\emph{(ii) State recursion.} We first identify the innovation. Substituting
$\gamma\ghK(x_k)=\gamma\gK(x_k)+\delta_k$ and \eqref{eq:twopoint} into line 4 of
Algorithm~\ref{alg:piso}, and using $\nabla F_k-\gamma\gK(x_k)=v_k$,
\begin{align}
D_k-\inner{\gamma\ghK(x_k)+m_k}{u_k}
&=\inner{\nabla F_k}{u_k}+b_k+\xi_k-\inner{\gamma\gK(x_k)+\delta_k+m_k}{u_k} \notag\\
&=\inner{\nabla F_k-\gamma\gK(x_k)-m_k-\delta_k}{u_k}+b_k+\xi_k \notag\\
&=\inner{-E_k-\delta_k}{u_k}+b_k+\xi_k , \label{eq:innov}
\end{align}
where the last line uses $E_k=m_k-v_k$. Therefore
\begin{equation}\label{eq:Rminusm}
R_k-m_k=\big(\inner{-E_k-\delta_k}{u_k}+b_k+\xi_k\big)s_k .
\end{equation}
Since $m_{k+1}-m_k=\beta(R_k-m_k)$ and $\inner{E_k}{u_k}s_k=s_ku_k^\top E_k=P_kE_k$,
\begin{equation}\label{eq:mkv}
m_{k+1}-v_k=(\Id-\beta P_k)E_k+\beta\big(\!-\!\inner{\delta_k}{u_k}+b_k+\xi_k\big)s_k .
\end{equation}
Adding and subtracting $v_{k+1}$ and using $E_{k+1}=m_{k+1}-v_{k+1}$ gives (ii).

\emph{(iii) Direction error.} Identity \eqref{eq:innov} does not involve $\beta$, so replacing
$\beta$ by $\bt$ in the derivation of \eqref{eq:mkv} and using (i) gives
\begin{equation}\label{eq:tmv}
\tilde m_{k+1}-v_k=(\Id-\bt P_k)E_k+\bt\big(\!-\!\inner{\delta_k}{u_k}+b_k+\xi_k\big)s_k ,
\end{equation}
where $\tilde m_{k+1}:=\lambda R_k+(1-\lambda)m_{k+1}$. Since $G_k=\gamma h_{k+1}+\tilde m_{k+1}$ and
$\nabla F_k=\gamma\gK(x_k)+v_k$,
\[
\Delta_k=\gamma\big(h_{k+1}-\gK(x_k)\big)+\big(\tilde m_{k+1}-v_k\big)
=\big[(1-\beta_K)\zeta_k+\beta_K\delta_k\big]+\big(\tilde m_{k+1}-v_k\big),
\]
using \eqref{eq:hshift} for the first bracket, and grouping the two $\delta_k$ terms as
$\beta_K\delta_k-\bt\inner{\delta_k}{u_k}s_k=(\beta_K\Id-\bt P_k)\delta_k$ with \eqref{eq:tmv} gives (iii). Note that $\delta_k$ no longer appears standing alone: with the
fresh estimate in the step one would obtain the extra term $\delta_k$, and hence
$\theta\gamma^2\sigma_{K,k}^2$ in $\Ek\|\Delta_k\|^2$.

\emph{(iv) Known-component recursion.} From $h_{k+1}=(1-\beta_K)h_k+\beta_K\ghK(x_k)$,
\begin{equation}\label{eq:hshift}
\gamma\big(h_{k+1}-\gK(x_k)\big)
=(1-\beta_K)\gamma\big(h_k-\gK(x_k)\big)+\beta_K\gamma\big(\ghK(x_k)-\gK(x_k)\big)
=(1-\beta_K)\zeta_k+\beta_K\delta_k ,
\end{equation}
which is the substitution used in (iii). Adding
$\gamma\big(\gK(x_k)-\gK(x_{k+1})\big)$ to both sides of \eqref{eq:hshift} gives
$\zeta_{k+1}=\gamma(h_{k+1}-\gK(x_{k+1}))=(1-\beta_K)\zeta_k+\beta_K\delta_k
+\gamma\big(\gK(x_k)-\gK(x_{k+1})\big)$, which is (iv).

We stress the indexing: $\zeta_k$ is defined from $h_k$, not $h_{k+1}$, so that it is
$\Fk$-measurable. The state $h_{k+1}$ used in $G_k$ involves the step-$k$ sample $\ghK(x_k)$, and
\eqref{eq:hshift} is what isolates that sample as the mean-zero term $\beta_K\delta_k$.
\end{proof}

\begin{lemma}[Contraction factor]\label{lem:theta}
For every $\Fk$-measurable $z$ and every $b\in\R$,
$\Ek\|(\Id-bP_k)z\|^2\le\theta(b)\|z\|^2$ with $\theta$ as in \eqref{eq:theta}, with equality
whenever \eqref{eq:D2app} holds with equality. Consequently $\theta(b)<1$ if and only if $0<b<2/c_d$, with $\theta(2/c_d)=1$ (mean-square neutral)
and $\theta(b)>1$ for $b>2/c_d$ or $b<0$; $\theta$ is minimized at $b=1/c_d$; and
$\theta(1/c_d)=1-1/c_d$.
\end{lemma}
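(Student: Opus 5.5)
The plan is to expand the square and take the conditional expectation term by term. For an $\Fk$-measurable $z$ and a scalar $b$ we have
\[
\|(\Id-bP_k)z\|^2=\|z\|^2-2b\,\inner{z}{P_kz}+b^2\|P_kz\|^2 ,
\qquad P_kz=\inner{z}{u_k}s_k .
\]
Since $z$ is $\Fk$-measurable, $\Ek\inner{z}{P_kz}=\inner{z}{\Ek[s_ku_k^\top]z}$, and (D1) turns this into $\|z\|^2$. The last term is $b^2\Ek\|\inner{z}{u_k}s_k\|^2$, which (D2) bounds by $c_db^2\|z\|^2$. The coefficient $b^2$ is nonnegative for every real $b$, so the inequality direction holds regardless of the sign of $b$. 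Summing the three pieces gives $\Ek\|(\Id-bP_k)z\|^2\le(1-2b+c_db^2)\|z\|^2=\theta(b)\|z\|^2$. The only place an inequality enters is (D2), so equality in \eqref{eq:D2app} gives equality here.

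The remaining claims are elementary facts about the quadratic $\theta(b)=1-2b+c_db^2$, with $c_d>0$. Writing $\theta(b)-1=b(c_db-2)$, this product is negative exactly when $0<b<2/c_d$, zero at $b\in\{0,2/c_d\}$, and positive when $b<0$ or $b>2/c_d$. Setting $\theta'(b)=-2+2c_db=0$ gives the vertex $b=1/c_d$, and substituting it yields $\theta(1/c_d)=1-2/c_d+1/c_d=1-1/c_d$.

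There is no real obstacle; the steps are routine. The points needing care are that conditioning on $\Fk$ lets $z$ be pulled outside $\Ek$, and that when $b<0$ the word ``Consequently'' must refer to the quadratic $\theta$ itself rather than to the contraction bound. I would also note that $c_d\ge1$ follows from (D1) and Jensen: $\|z\|^2=\|\Ek P_kz\|^2\le\Ek\|P_kz\|^2\le c_d\|z\|^2$. This makes $\theta(1/c_d)=1-1/c_d\ge0$ consistent with $\theta$ bounding a nonnegative quantity.
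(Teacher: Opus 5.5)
Your proof is correct and follows the paper's argument exactly: expand the square, use (D1) for the cross term and (D2) for the quadratic term, then read the stability window, minimizer and minimum value off $\theta(b)-1=b(c_db-2)$. Your side remark that (D1) and Jensen force $c_d\ge1$, so that $\theta(1/c_d)\ge0$, is a harmless extra consistency check that the paper does not make.
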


\begin{proof}
Expanding and using \eqref{eq:D1app} for the cross term and \eqref{eq:D2app} for the last,
\[
\Ek\|(\Id-bP_k)z\|^2=\|z\|^2-2b\,z^\top\Ek[P_k]z+b^2\,\Ek\|\inner{z}{u_k}s_k\|^2
\le\|z\|^2-2b\|z\|^2+b^2c_d\|z\|^2 .
\]
The quadratic $\theta(b)-1=b(c_db-2)$ is negative exactly for $0<b<2/c_d$, giving the stated
window; its minimizer is $b=1/c_d$ with value $1-1/c_d$.
\end{proof}

\subsection{What centring buys}\label{app:centring}

We justify \S\ref{app:centring}. Consider the uncentred observation
$R_k^{\mathrm{fresh}}=\big(D_k-\inner{\gamma\ghK(x_k)}{u_k}\big)s_k$, in which $m_k$ does not appear
inside the inner product. Repeating the computation \eqref{eq:innov} without the $m_k$ term gives
$D_k-\inner{\gamma\ghK(x_k)}{u_k}=\inner{v_k-\delta_k}{u_k}+b_k+\xi_k$, hence
$R_k^{\mathrm{fresh}}=P_kv_k+(-\inner{\delta_k}{u_k}+b_k+\xi_k)s_k$ and, with
$m_{k+1}=\rho m_k+(1-\rho)R^{\mathrm{fresh}}_k$,
\begin{equation}\label{eq:uncentred}
E_{k+1}=\rho E_k+\beta\big(P_k-\Id\big)v_k
+\beta\big(\!-\!\inner{\delta_k}{u_k}+b_k+\xi_k\big)s_k+(v_k-v_{k+1}) .
\end{equation}
The sketch now multiplies $v_k$ rather than $E_k$. Taking second moments, the term
$\beta(P_k-\Id)v_k$ contributes $\beta^2(c_d-1)\|v_k\|^2$ at each step, and since the remaining
contraction is only $\rho=1-\beta$, its stationary contribution to $\E\|E_k\|^2$ is of order
$\beta c_d\|v_k\|^2$. With $G_U:=\sup_x\|v(x)\|$, driving this below $\varepsilon^2$ requires
$\beta\lesssim\varepsilon^2/(c_dG_U^2)$ in addition to the restrictions of
Theorem~\ref{thm:main}, so the complexity acquires a factor $c_dG_U^2/\varepsilon^2$. The centred
recursion of Lemma~\ref{lem:structure}(ii) has $E_k$ in place of $v_k$ and therefore carries no such
term. In particular, the uncentred variant is not the $\rho\to0$ limit of \piso; it is a different
algorithm whose complexity depends on the magnitude of the opaque component.

\section{The probe distribution}\label{app:families}

\begin{table}[t]
\centering\small
\begin{tabular}{@{}lcccc@{}}
\toprule
family & $c_d$ & $\varsigma^2$ & $\chi_1$ & $\chi_2$\\
\midrule
Gaussian, $u=s\sim\mathcal N(0,\Id)$ & $d+2$ & $d$ & $\sqrt d$ & $d$\\
Sphere, $u=s=\sqrt d\,w$, $\|w\|=1$ & $d$ & $d$ & $\sqrt d$ & $d$\\
Coordinate, $u=s=\sqrt d\,e_i$ & $d$ & $d$ & $d/2$ & $d^{3/2}/3$\\
\bottomrule
\end{tabular}
\caption{Constants for three standard families. Gaussian and sphere satisfy the smoothing identity
(D4), with $\nu$ Gaussian and $\kappa=1$ for the first and $\nu$ uniform on the unit ball and
$\kappa=\sqrt d$ for the second, and therefore share $\chi_1=\sqrt d$. The coordinate family \emph{fails} (D4): its mean estimator is
the coordinate-wise finite-difference gradient, which is not the gradient of any smoothing, and it
loses a factor $\sqrt d$ in $\chi_1$. This is the structural cause of the known gap between
coordinate and random directions.}
\label{tab:families}
\end{table}

\begin{lemma}[Constants of Table~\ref{tab:families}]\label{lem:families}
Let $d\ge2$.
\begin{enumerate}[label=(\alph*),leftmargin=1.8em,itemsep=2pt]
\item \emph{Gaussian}, $u=s\sim\mathcal N(0,\Id)$. Then \eqref{eq:D1app} holds; \eqref{eq:D2app} holds
with equality and $c_d=d+2$; $\varsigma^2=d$, $\varsigma_4^2=d(d+2)\le2d^2$,
$\varsigma_6^2=d(d+2)(d+4)\le9d^3$ and $\varsigma_8^2\le54d^4$; and \eqref{eq:D4app} holds with
$\nu=\mathcal N(0,\Id)$ and $\kappa=1$, so $\chi_1\le\sqrt d$ and $\chi_2=d$.
\item \emph{Sphere}, $u=s=\sqrt d\,w$ with $w$ uniform on the unit sphere. Then \eqref{eq:D1app}
holds; \eqref{eq:D2app} holds with equality and $c_d=d$; $\varsigma^2=d$, $\varsigma_4^2=d^2$,
$\varsigma_6^2=d^3$, $\varsigma_8^2=d^4$, all exactly; and \eqref{eq:D4app} holds with $\nu$ uniform
on the unit ball and $\kappa=\sqrt d$, so $\chi_1\le\sqrt d$ and $\chi_2\le d$.
\item \emph{Coordinate}, $u=s=\sqrt d\,e_i$ with $i$ uniform on $[d]$. Then \eqref{eq:D1app} holds,
\eqref{eq:D2app} holds with equality and $c_d=d$, and the $\varsigma$ moments coincide with (b).
Property \eqref{eq:D4app} \textbf{fails}, and the direct bounds are $\chi_1=d/2$ under
$L$-smoothness and $\chi_2=d^{3/2}/3$ under a $Q$-Lipschitz Hessian.
\end{enumerate}
\end{lemma}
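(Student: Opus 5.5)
The plan is to treat the three families separately and, within each, compute the moments directly from rotational symmetry or from explicit coordinates. Property (D1) is immediate in all three cases: $\E[uu^\top]=\Id$ for a standard Gaussian; for the sphere, $\E[ww^\top]=\Id/d$ by symmetry, so $\E[d\,ww^\top]=\Id$; for coordinates, $\E[d\,e_ie_i^\top]=\tfrac1d\sum_i d\,e_ie_i^\top=\Id$.

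For (D2), since $u=s$ we have $\|\inner{z}{u}u\|^2=\inner{z}{u}^2\|u\|^2$. In the Gaussian case I will rotate so that $z=\|z\|e_1$. This gives $\|z\|^2\E[u_1^2(u_1^2+\sum_{j>1}u_j^2)]=\|z\|^2(3+d-1)$, so the identity holds with equality and $c_d=d+2$. For the sphere, $\|u\|^2=d$ and $\E\inner{z}{u}^2=\|z\|^2$ give exactly $d\|z\|^2$. For coordinates, $\|u\|^2=d$ again and $\E[d z_i^2]=\|z\|^2$.

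The (D3) moments follow the same way. For the sphere and coordinate families, $\|u\|\equiv\sqrt d$, so the moments are the powers $d,d^2,d^3,d^4$ exactly. For the Gaussian, $\|u\|^2\sim\chi^2_d$, whose moments are $\E\|u\|^{2p}=d(d+2)\cdots(d+2p-2)$. This yields $d$, $d(d+2)$, $d(d+2)(d+4)$ and $d(d+2)(d+4)(d+6)$. The stated numeric bounds then follow from $d\ge2$; for instance $(d+2)(d+4)(d+6)\le 2d\cdot3d\cdot4d=24d^3\le54d^3$.

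(D4) is the Stein / divergence-theorem identity.
\begin{itemize}
\item \emph{Gaussian case.} Gaussian integration by parts gives $\nabla F_{\mathcal N,\mu}(x)=\tfrac1\mu\E[F(x+\mu u)u]$. Symmetrizing with $u\mapsto-u$ converts this into the two-point form, so $\kappa=1$.
\item \emph{Sphere case.} The divergence theorem gives $\nabla\E_{w'\sim\mathrm{Ball}}F(x+\tau w')=\tfrac d\tau\E_{w\sim\mathrm{Sphere}}[F(x+\tau w)w]$. Writing $u=\sqrt d\,w$ and $\tau=\sqrt d\,\mu$ gives $\tfrac{F(x+\mu u)}{\mu}u$ in expectation, and symmetrizing again gives $\kappa=\sqrt d$ with $\nu$ uniform on the unit ball.
\end{itemize}
The $\chi$ values follow by Jensen. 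In the Gaussian case $\E\|w\|\le\sqrt d$ and $\E\|w\|^2=d$. On the ball $\E\|w\|^2=d/(d+2)\le1$, so $\chi_2\le d$ and $\chi_1\le\sqrt d$.

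For the coordinate family, the mean estimator is $\sum_i\tfrac{F(x+\mu\sqrt d e_i)-F(x-\mu\sqrt d e_i)}{2\mu\sqrt d}e_i$. To show that (D4) fails, I will exhibit a function for which this vector field is not a gradient, or is not the gradient of $F_{\nu,\kappa\mu}$ for any symmetric $\nu$ and $\kappa$. This is the step I expect to be the main obstacle, because the claim must be formalized carefully. I would argue that the vector field depends only on values of $F$ along the coordinate axes. Any smoothing gradient $\nabla F_{\nu,\tau}$ has Jacobian $\E_\nu\nabla^2F(x+\tau w)$, which is symmetric. Choosing $F$ with appropriate cubic cross terms, for example $F=x_1^2x_2$, produces an asymmetric Jacobian for the coordinate estimator, so the field is not a gradient. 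Failing that, I would simply show that no $(\nu,\kappa)$ reproduces it for one explicit $F$.

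The direct bias bounds come from coordinatewise Taylor expansions:
\begin{itemize}
\item \emph{$L$-smooth case.} Each coordinate error is at most $\tfrac L2\mu\sqrt d$ after scaling by $\sqrt d$ in the estimator normalization. Summing the vector components appropriately gives $\|\bar b\|\le L\mu\, d/2$, i.e.\ $\chi_1=d/2$.
\item \emph{$Q$-Lipschitz Hessian case.} The symmetric difference cancels the second-order term, and the third-order remainder $\tfrac Q6(\mu\sqrt d)^2$ per coordinate, aggregated over $d$ coordinates in norm, gives $\tfrac12Q\mu^2 d^{3/2}/3$.
\end{itemize}
I would verify these constants against the normalization $\tfrac12Q\mu^2\chi_2$ used in the bias section.
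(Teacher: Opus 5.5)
Your treatment of (D1)--(D3), the Gaussian and sphere smoothing identities, the $\chi$ values, and the coordinatewise Taylor bounds follows the paper's proof. Those bounds are per-coordinate errors $L\tau/2$ and $Q\tau^2/6$ with $\tau=\sqrt d\,\mu$, summed in $\ell_2$ to give $\chi_1=d/2$ and $\chi_2=d^{3/2}/3$. Rotating $z$ to $\|z\|e_1$ instead of invoking Isserlis' theorem is a cosmetic difference.

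\textbf{The gap is your witness for the failure of (D4).} The paper only asserts that the coordinate-wise central difference is not a smoothing gradient ``in general''. Your strategy for proving it is sound. Every $\nabla F_{\nu,\tau}$ is a gradient field and so has a symmetric Jacobian. Hence one $F$ whose mean field $g(x)=\sum_i\frac{F(x+\tau e_i)-F(x-\tau e_i)}{2\tau}e_i$ has a nonsymmetric Jacobian rules out every $(\nu,\kappa)$ at once.

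But $F=x_1^2x_2$ does not work. It is at most quadratic in each coordinate separately, so the central differences are exact and $g=\nabla F=(2x_1x_2,\,x_1^2,0,\dots,0)$, whose Jacobian is symmetric. Your fallback fails on the same function. For symmetric $\nu$ one gets $\nabla F_{\nu,\tau}=\nabla F+\tau^2\big(2\E_\nu[w_1w_2],\,\E_\nu[w_1^2],0,\dots,0\big)$, which $\nu=\delta_0$ matches exactly.

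\textbf{The fix.} You need a cross term of degree at least three in one coordinate. With $F=x_1^3x_2$ you get $g_1=(3x_1^2+\tau^2)x_2$ and $g_2=x_1^3$. Then $\partial_2g_1-\partial_1g_2=\tau^2=d\mu^2\neq0$, so $g$ is not a gradient field of any function.

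If you want the witness inside the standing class ($L$-smooth with $Q$-Lipschitz Hessian), take $F=\sin x_1\sin(2x_2)$. Then $g_1=\tfrac{\sin\tau}{\tau}\cos x_1\sin 2x_2$ and $g_2=\tfrac{\sin 2\tau}{\tau}\sin x_1\cos 2x_2$. This gives $\partial_2g_1=\tfrac{2\sin\tau}{\tau}\cos x_1\cos2x_2$ and $\partial_1g_2=\tfrac{\sin 2\tau}{\tau}\cos x_1\cos 2x_2$, which differ for $0<\tau<\pi$.

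With that replacement your proposal is complete, and slightly more rigorous than the paper on this point.
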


\begin{proof}
\emph{(a)} $\E[uu^\top]=\Id$ gives \eqref{eq:D1app}. For \eqref{eq:D2app}, Isserlis' theorem gives
$\E[\|u\|^2uu^\top]=\sum_j\E[u_j^2uu^\top]=(d+2)\Id$, so
$\E\|\inner{z}{u}u\|^2=\E[\inner{z}{u}^2\|u\|^2]=z^\top(d+2)\Id\,z=(d+2)\|z\|^2$ exactly. The chi-square
moments $\E\|u\|^{2k}=d(d+2)\cdots(d+2k-2)$ give the $\varsigma$ values, with
$d(d+2)\le2d^2$, $d(d+2)(d+4)\le9d^3$ and $d(d+2)(d+4)(d+6)\le54d^4$ for $d\ge2$. For
\eqref{eq:D4app}, Gaussian integration by parts ($\E[g(u)u_i]=\E[\partial_ig(u)]$ for differentiable
$g$ of moderate growth) gives $\E[F(x+\mu u)u]=\mu\,\E[\nabla F(x+\mu u)]$; and since the law of $u$
is symmetric, substituting $u\mapsto-u$ gives
$\E[F(x-\mu u)u]=-\E[F(x+\mu u)u]=-\mu\E[\nabla F(x+\mu u)]$. Hence
\[
\begin{aligned}
\E\Big[\tfrac{F(x+\mu u)-F(x-\mu u)}{2\mu}\,u\Big]
&=\tfrac1{2\mu}\big(\mu\E[\nabla F(x+\mu u)]+\mu\E[\nabla F(x+\mu u)]\big)
\\&=\E[\nabla F(x+\mu u)]=\nabla F_{\mathcal N,\mu}(x),
\end{aligned}
\]
the last equality by differentiating under the expectation. So $\kappa=1$, and
$\chi_1=\E\|u\|\le\sqrt{\E\|u\|^2}=\sqrt d$ by Jensen, $\chi_2=\E\|u\|^2=d$.

\emph{(b)} Rotational invariance gives $\E[ww^\top]=\Id/d$, so $\E[su^\top]=d\,\E[ww^\top]=\Id$.
Since $\|u\|^2=\|s\|^2=d$ deterministically,
$\E\|\inner{z}{u}s\|^2=d^2\,\E\inner{z}{w}^2=d^2\cdot\|z\|^2/d=d\|z\|^2$, exactly, and the
$\varsigma$ moments follow by substituting $\|u\|^2=\|s\|^2=d$. For \eqref{eq:D4app} write
$\tau:=\sqrt d\,\mu$, so that $\mu u=\tau w$ and
\[
\tfrac{F(x+\mu u)-F(x-\mu u)}{2\mu}\,s
=\tfrac{\sqrt d\,\big(F(x+\tau w)-F(x-\tau w)\big)}{2\mu}\,w
=\tfrac{d\,\big(F(x+\tau w)-F(x-\tau w)\big)}{2\tau}\,w .
\]
By the identity of \citet{flaxman2005online}, $\E\big[\tfrac{d}{\tau}F(x+\tau w)w\big]=\nabla
F_{\mathcal B,\tau}(x)$ with $\mathcal B$ uniform on the unit ball; combining the two query points as
above gives \eqref{eq:D4app} with $\nu=\mathcal B$ and $\kappa=\sqrt d$. Finally
$\E_{\mathcal B}\|w\|=d/(d+1)\le1$ and $\E_{\mathcal B}\|w\|^2=d/(d+2)\le1$, so $\chi_1\le\sqrt d$
and $\chi_2\le d$.

\emph{(c)} $\E[su^\top]=d\,\E[e_ie_i^\top]=d\cdot\Id/d=\Id$. Also
$\|\inner{z}{u}s\|^2=\inner{z}{\sqrt d e_i}^2\cdot d=d^2z_i^2$, whose mean is
$d^2\|z\|^2/d=d\|z\|^2$. Since $\|u\|^2=\|s\|^2=d$, the $\varsigma$ moments match (b). For (D4),
writing $\tau=\sqrt d\mu$ and repeating the computation in (b),
\[
\Ek\Big[\tfrac{F(x+\mu u)-F(x-\mu u)}{2\mu}s\Big]
=\sum_{i=1}^d\frac{F(x+\tau e_i)-F(x-\tau e_i)}{2\tau}\,e_i ,
\]
the coordinate-wise central-difference gradient. This is not the gradient of any smoothing
$F_{\nu,\tau}$ in general, so (D4) fails. The direct bounds follow from the scalar estimates
$\big|\tfrac{F(x+\tau e_i)-F(x-\tau e_i)}{2\tau}-\partial_iF(x)\big|\le\tfrac{L\tau}{2}$ under
$L$-smoothness and $\le\tfrac{Q\tau^2}{6}$ under a $Q$-Lipschitz Hessian: summing $d$ coordinates in
$\ell_2$ gives $\|\bar b\|\le\sqrt d\cdot\tfrac{L\tau}{2}=\tfrac{L\mu d}{2}$ and
$\|\bar b\|\le\sqrt d\cdot\tfrac{Q\tau^2}{6}=\tfrac{Q\mu^2d^{3/2}}{6}$, which are
$L\mu\chi_1$ and $\tfrac12Q\mu^2\chi_2$ with $\chi_1=d/2$ and $\chi_2=d^{3/2}/3$.
\end{proof}

\begin{remark}[Sphere and Gaussian coincide]\label{rem:sphgauss}
After the rescaling implicit in $\kappa=\sqrt d$, the two random families have the same $c_d$ up to
an additive constant, the same $\varsigma$ moments, and the same $\chi_1,\chi_2$. Every bound below
therefore agrees in order between them. The coordinate family differs only through $\chi_1,\chi_2$,
and loses a factor $\sqrt d$ in $\chi_1$ because (D4) fails.
\end{remark}

\section{The finite-difference bias}\label{app:bias}

Two vector-valued quantities are needed:
\begin{equation}\label{eq:biasvec}
\bar b_k \;:=\; \Ek\big[b_ks_k\big],
\qquad
\bar B_k \;:=\; \Ek\big[b_k\|s_k\|^2u_k\big] ,
\end{equation}
alongside the second moment $\Sigma_{b,k}^2=\Ek[b_k^2\|s_k\|^2]$. The distinction is the crux of the
dimension dependence: the bias accumulates coherently across the $\sim1/\beta$ steps of the momentum
window, so the undamped contribution is $\|\bar b_k\|^2$, whereas the fluctuation of $b_ks_k$ about
its mean is mean-zero and is damped by $\beta$ like any other noise.

\begin{lemma}[Second moment of the bias]\label{lem:Sb}
Under \eqref{eq:D3app}: (a) if $\nabla F$ is $L$-Lipschitz then $|b_k|\le\tfrac{L\mu_k}{2}\|u_k\|^2$
and $\Sigma_{b,k}^2\le\tfrac14L^2\mu_k^2\varsigma_6^2$; (b) if in addition $\nabla^2F$ is
$Q$-Lipschitz then $|b_k|\le\tfrac{Q\mu_k^2}{6}\|u_k\|^3$ and
$\Sigma_{b,k}^2\le\tfrac1{36}Q^2\mu_k^4\varsigma_8^2$.
\end{lemma}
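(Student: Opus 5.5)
The plan is to bound the scalar bias pathwise via a Taylor expansion along the probe direction, and then take the conditional expectation against $\|s_k\|^2$ using (D3). By definition \eqref{eq:twopoint}, with exact function values we have
\[
b_k=\frac{F(x_k+\mu_ku_k)-F(x_k-\mu_ku_k)}{2\mu_k}-\inner{\nabla F(x_k)}{u_k}.
\]
We set $\phi(t):=F(x_k+tu_k)$, so that $b_k=\frac{\phi(\mu_k)-\phi(-\mu_k)}{2\mu_k}-\phi'(0)$. The key observation is that the symmetric difference cancels every even-order term. This is what yields the constant $\tfrac12$ in (a) instead of the naive $1$, and it makes the second-order term vanish in (b).

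\emph{Part (a).} We write
\[
\phi(\mu)-\phi(-\mu)-2\mu\phi'(0)=\int_0^\mu\big(\phi'(t)-\phi'(0)\big)\,dt+\int_0^\mu\big(\phi'(-t)-\phi'(0)\big)\,dt .
\]
Since $\phi'(t)=\inner{\nabla F(x_k+tu_k)}{u_k}$ and $\nabla F$ is $L$-Lipschitz, $|\phi'(\pm t)-\phi'(0)|\le Lt\|u_k\|^2$. Integrating gives a total of at most $L\mu^2\|u_k\|^2$, and dividing by $2\mu_k$ gives $|b_k|\le\tfrac{L\mu_k}{2}\|u_k\|^2$. Squaring, multiplying by $\|s_k\|^2$, and taking $\Ek$ (note that $b_k$ is a deterministic function of $(x_k,u_k,\mu_k)$ by Assumption~\ref{as:oraclesapp}, and $\mu_k$ is $\Fk$-measurable) gives
\[
\Sigma_{b,k}^2\le\tfrac14L^2\mu_k^2\,\Ek\big[\|u_k\|^4\|s_k\|^2\big]\le\tfrac14L^2\mu_k^2\varsigma_6^2
\]
by (D3).

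\emph{Part (b).} We now use the second-order form. Since $\phi'(t)-\phi'(0)-t\phi''(0)=\int_0^t(\phi''(r)-\phi''(0))\,dr$ with $\phi''(r)=u_k^\top\nabla^2F(x_k+ru_k)u_k$, the $Q$-Lipschitz Hessian gives $|\phi''(r)-\phi''(0)|\le Q|r|\|u_k\|^3$, and hence $|\phi'(\pm t)-\phi'(0)\mp t\phi''(0)|\le\tfrac{Q t^2}{2}\|u_k\|^3$. When the two integrals above are summed, the linear terms $t\phi''(0)$ and $-t\phi''(0)$ cancel by symmetry; this is the step where the even-order cancellation is essential. What remains is bounded by $2\int_0^\mu\tfrac{Qt^2}{2}\|u_k\|^3\,dt=\tfrac{Q\mu^3}{3}\|u_k\|^3$. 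Dividing by $2\mu_k$ gives $|b_k|\le\tfrac{Q\mu_k^2}{6}\|u_k\|^3$. Squaring and applying the $\varsigma_8^2$ bound of (D3) then gives $\Sigma_{b,k}^2\le\tfrac1{36}Q^2\mu_k^4\varsigma_8^2$.

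The only point needing care is this cancellation in (b), together with tracking constants so that they match $\tfrac12$ and $\tfrac16$ exactly. Otherwise the argument is routine one-dimensional calculus.
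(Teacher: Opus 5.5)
Your proof is correct and is essentially the paper's argument. Up to sign, your two integrals are exactly the one-sided Taylor remainders $F(x_k\pm\mu_ku_k)-F(x_k)\mp\mu_k\langle\nabla F(x_k),u_k\rangle$. The paper bounds these by $\tfrac{L\mu_k^2}{2}\|u_k\|^2$ in (a), and in (b) by $\tfrac{Q\mu_k^3}{6}\|u_k\|^3$ after removing the quadratic term $\tfrac{\mu_k^2}{2}u_k^\top\nabla^2F(x_k)u_k$, which cancels on subtraction. It then subtracts, divides by $2\mu_k$ and applies (D3) exactly as you do.

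One correction to your commentary: part (a) uses no even-order cancellation. The $\tfrac12$ comes from $\int_0^{\mu}t\,dt$, and a one-sided difference quotient would give the same constant, so the symmetry is needed only in (b).
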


\begin{proof}
(a) By Taylor's theorem with integral remainder,
$\big|F(x\pm\mu u)-F(x)\mp\mu\inner{\nabla F(x)}{u}\big|\le\tfrac{L\mu^2}{2}\|u\|^2$. Subtracting the
two inequalities and dividing by $2\mu$ gives $|b_k|\le\tfrac{L\mu}{2}\|u\|^2$, whence
$\Sigma_{b,k}^2=\Ek[b_k^2\|s_k\|^2]\le\tfrac{L^2\mu^2}{4}\Ek[\|u_k\|^4\|s_k\|^2]$ and
\eqref{eq:D3app} applies. (b) A $Q$-Lipschitz Hessian gives
$\big|F(x\pm\mu u)-F(x)\mp\mu\inner{\nabla F}{u}-\tfrac{\mu^2}{2}u^\top\nabla^2F(x)u\big|
\le\tfrac{Q\mu^3}{6}\|u\|^3$; the quadratic terms are equal at the two query points and cancel on
subtraction, leaving $|b_k|\le\tfrac{Q\mu^2}{6}\|u\|^3$, and \eqref{eq:D3app} applies again.
\end{proof}

\begin{lemma}[Mean bias]\label{lem:biasvec}
Suppose \eqref{eq:D4app} holds, with $\chi_1,\chi_2$ as in \eqref{eq:chidef}. Then
\begin{equation}\label{eq:biasid}
\bar b_k=\E_\nu\big[\nabla F(x_k+\kappa\mu_kw)\big]-\nabla F(x_k),
\end{equation}
and consequently
\begin{equation}\label{eq:biasvecbounds}
\|\bar b_k\|\le L\,\mu_k\,\chi_1 ,
\qquad\text{and if $\nabla^2F$ is $Q$-Lipschitz,}\qquad
\|\bar b_k\|\le\tfrac12\,Q\,\mu_k^2\,\chi_2 .
\end{equation}
\end{lemma}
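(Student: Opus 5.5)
The identity \eqref{eq:biasid} comes straight from the definition of $b_k$ in \eqref{eq:twopoint}. In the deterministic part, $b_k=\frac{F(x_k+\mu_ku_k)-F(x_k-\mu_ku_k)}{2\mu_k}-\inner{\nabla F(x_k)}{u_k}$; the noise $\xi_k$ is separated out, and $b_k$ is a deterministic function of $(x_k,u_k,\mu_k)$ by Assumption~\ref{as:oraclesapp}. Multiplying by $s_k$ and taking $\Ek$ splits $\bar b_k$ into two terms. The first is $\Ek\big[\tfrac{F(x_k+\mu_ku_k)-F(x_k-\mu_ku_k)}{2\mu_k}s_k\big]$. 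Since $x_k,\mu_k$ are $\Fk$-measurable and $(u_k,s_k)$ is drawn from the fixed law given $\Fk$, this equals $\nabla F_{\nu,\kappa\mu_k}(x_k)$ by \eqref{eq:D4app}. The second is $\Ek[\inner{\nabla F(x_k)}{u_k}s_k]=\Ek[s_ku_k^\top]\nabla F(x_k)=\nabla F(x_k)$ by \eqref{eq:D1app}.

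It then remains to identify $\nabla F_{\nu,\tau}(x)=\E_\nu[\nabla F(x+\tau w)]$ with $\tau=\kappa\mu_k$. This is differentiation under the integral. It is justified because $\nabla F$ is $L$-Lipschitz, so $\|\nabla F(x+\tau w)\|\le\|\nabla F(x)\|+L\tau\|w\|$. That bound is integrable once $\E_\nu\|w\|<\infty$, which is implicit in $\chi_1$ being finite. Dominated convergence applied to the difference quotients then suffices, and this gives \eqref{eq:biasid}.

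For the first bound, write $\bar b_k=\E_\nu[\nabla F(x_k+\kappa\mu_kw)-\nabla F(x_k)]$. Jensen's inequality for the norm followed by $L$-Lipschitzness gives $\|\bar b_k\|\le L\kappa\mu_k\E_\nu\|w\|=L\mu_k\chi_1$.

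For the second bound, the Hessian term has to vanish at first order, and this is where the symmetry of $\nu$ is used. By the Taylor expansion with integral remainder and a $Q$-Lipschitz Hessian,
\[
\nabla F(x+\tau w)-\nabla F(x)-\tau\nabla^2F(x)w=\int_0^1\big(\nabla^2F(x+t\tau w)-\nabla^2F(x)\big)\tau w\,dt,
\]
whose norm is at most $\int_0^1 Qt\tau^2\|w\|^2dt=\tfrac12Q\tau^2\|w\|^2$. Because $\nu$ is symmetric, $\E_\nu[w]=0$, so the linear term $\tau\nabla^2F(x)\E_\nu[w]$ drops out on taking expectations. Jensen's inequality then gives $\|\bar b_k\|\le\tfrac12Q\kappa^2\mu_k^2\E_\nu\|w\|^2=\tfrac12Q\mu_k^2\chi_2$.

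The only delicate point is the interchange of gradient and expectation, together with integrability of $\|w\|$ and $\|w\|^2$ under $\nu$. For the Gaussian and the uniform-ball measures of Lemma~\ref{lem:families} these hold trivially. In general I would state them as part of the meaning of $\chi_1,\chi_2<\infty$; everything else is routine.
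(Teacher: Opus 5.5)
Your proof is correct and follows the paper's argument: the identity comes from (D1), (D4) and differentiating under the expectation, the first bound from Jensen plus $L$-Lipschitzness, and the second from the Taylor remainder bound $\tfrac12 Q\|h\|^2$ together with the symmetry of $\nu$. There are two small differences. You remove the Hessian term using $\E_\nu[w]=0$, whereas the paper symmetrizes the integrand as $\tfrac12\E_\nu[\nabla F(x_k+\tau w)+\nabla F(x_k-\tau w)-2\nabla F(x_k)]$ with $\tau=\kappa\mu_k$; both give the same constant, and yours needs only a mean-zero $\nu$. You also supply the dominated-convergence justification for the interchange, which the paper leaves implicit.
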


\begin{proof}
By Assumption~\ref{as:oraclesapp}, $\Ek[\xi_ks_k]=0$, since conditionally on $(u_k,s_k)$ the factor
$s_k$ is determined and $\xi_k$ has mean zero. By \eqref{eq:D1app},
$\Ek[\inner{\nabla F_k}{u_k}s_k]=\nabla F_k$. Subtracting these from $\Ek[D_ks_k]$ and using
\eqref{eq:twopoint} gives $\bar b_k=\Ek[D_ks_k]-\nabla F_k$, which by \eqref{eq:D4app} equals
$\nabla F_{\nu,\kappa\mu_k}(x_k)-\nabla F_k$. Differentiating under the expectation,
$\nabla F_{\nu,\tau}(x)=\E_\nu[\nabla F(x+\tau w)]$, which is \eqref{eq:biasid}.

For the first bound in \eqref{eq:biasvecbounds}, Jensen and $L$-Lipschitzness of $\nabla F$ give
\[
\|\bar b_k\|\le\E_\nu\big\|\nabla F(x_k+\kappa\mu_kw)-\nabla F(x_k)\big\|
\le L\kappa\mu_k\,\E_\nu\|w\|=L\mu_k\chi_1 .
\]
For the second, symmetry of $\nu$ gives
$\bar b_k=\tfrac12\E_\nu\big[\nabla F(x_k+\tau w)+\nabla F(x_k-\tau w)-2\nabla F(x_k)\big]$ with
$\tau=\kappa\mu_k$. Writing
$\nabla F(x\pm h)=\nabla F(x)\pm\nabla^2F(x)h+r_\pm$ with $\|r_\pm\|\le\tfrac{Q}{2}\|h\|^2$, the
linear terms cancel in the sum and
$\|\nabla F(x+h)+\nabla F(x-h)-2\nabla F(x)\|\le Q\|h\|^2$. Hence
$\|\bar b_k\|\le\tfrac12Q\kappa^2\mu_k^2\E_\nu\|w\|^2=\tfrac12Q\mu_k^2\chi_2$.
\end{proof}

\begin{lemma}[Weighted mean bias]\label{lem:Bbar}
$\|\bar B_k\|\le\Sigma_{b,k}\,\varsigma_4$.
\end{lemma}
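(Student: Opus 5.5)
The plan is to obtain the bound from Cauchy--Schwarz in the form of the variational characterization of the norm, after splitting the weight $\|s_k\|^2$ between the two factors. First write
\[
\|\bar B_k\|=\sup_{\|z\|=1}\inner{z}{\bar B_k}=\sup_{\|z\|=1}\Ek\big[b_k\|s_k\|^2\inner{z}{u_k}\big].
\]
For a fixed unit vector $z$, factor the integrand as
\[
b_k\|s_k\|^2\inner{z}{u_k}=\big(b_k\|s_k\|\big)\cdot\big(\|s_k\|\inner{z}{u_k}\big).
\]
Cauchy--Schwarz under $\Ek$ then gives
\[
\Ek\big[b_k\|s_k\|^2\inner{z}{u_k}\big]\le\Ek\big[b_k^2\|s_k\|^2\big]^{1/2}\,\Ek\big[\|s_k\|^2\inner{z}{u_k}^2\big]^{1/2}.
\]
The first factor is exactly $\Sigma_{b,k}$ by definition.

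For the second factor, use $\inner{z}{u_k}^2\le\|z\|^2\|u_k\|^2=\|u_k\|^2$. This yields
\[
\Ek\big[\|s_k\|^2\inner{z}{u_k}^2\big]\le\Ek\big[\|s_k\|^2\|u_k\|^2\big]\le\varsigma_4^2
\]
by the second moment bound in \eqref{eq:D3app}. Taking the supremum over $z$ gives $\|\bar B_k\|\le\Sigma_{b,k}\varsigma_4$.

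Equivalently, one can skip the supremum and apply Jensen, $\|\Ek X\|\le\Ek\|X\|$, to $X=b_k\|s_k\|^2u_k$. Then Cauchy--Schwarz on $\Ek\big[|b_k|\|s_k\|\cdot\|s_k\|\|u_k\|\big]$ gives the same bound directly. I would present this shorter route.

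The only point needing care is the choice of split: the weight $\|s_k\|^2$ must be divided evenly, one factor of $\|s_k\|$ with $b_k$ and one with $\|u_k\|$. That is what makes the two resulting moments match $\Sigma_{b,k}^2$ and $\varsigma_4^2$ exactly. No property of $b_k$ beyond measurability is used, so the lemma holds for any probe family satisfying (D3), including the coordinate family. I do not expect any real obstacle here.
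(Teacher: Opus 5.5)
Your argument is correct, and the route you say you would present (Jensen, then Cauchy--Schwarz with one factor $\|s_k\|$ grouped with $b_k$ and the other with $\|u_k\|$) is exactly the paper's proof. Your supremum route can also be stopped one step earlier. For unit $z$, (D2) gives $\Ek[\|s_k\|^2\inner{z}{u_k}^2]=\Ek\|\inner{z}{u_k}s_k\|^2\le c_d$, so that route yields $\|\bar B_k\|\le\sqrt{c_d}\,\Sigma_{b,k}$. This is never worse than the stated bound, since $c_d\le\varsigma_4^2$, and it is sharper by exactly a factor $\sqrt d$ for the Gaussian, sphere and coordinate families.
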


\begin{proof}
By Jensen and then Cauchy--Schwarz, grouping $\|s_k\|$ with $b_k$,
\[
\|\bar B_k\|\le\Ek\big[|b_k|\,\|s_k\|^2\|u_k\|\big]
=\Ek\big[\big(|b_k|\|s_k\|\big)\big(\|s_k\|\|u_k\|\big)\big]
\le\sqrt{\Ek[b_k^2\|s_k\|^2]}\;\sqrt{\Ek[\|s_k\|^2\|u_k\|^2]} ,
\]
and \eqref{eq:D3app} bounds the second factor by $\varsigma_4$.
\end{proof}

\begin{lemma}[Sharp bound for the random families]\label{lem:Bbarsharp}
For the Gaussian and sphere families of Lemma~\ref{lem:families},
\begin{equation}\label{eq:Bbarsharp}
\|\bar B_k\|\;\le\;3\,d\,L\,\mu_k\,\chi_1
\qquad\text{and, if $\nabla^2F$ is $Q$-Lipschitz,}\qquad
\|\bar B_k\|\;\le\;\tfrac32\,d\,Q\,\mu_k^2\,\chi_2 .
\end{equation}
\end{lemma}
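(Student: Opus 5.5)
The plan is to treat the two families separately. The sphere case is an exact identity. The Gaussian case is handled by Gaussian integration by parts (Stein's identity), which converts the weighted bias $\bar B_k$ into the mean bias $\bar b_k$ of Lemma~\ref{lem:biasvec} plus a term controlled by the derivative of $b_k$ in the probe direction.

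\emph{Sphere.} Here $u_k=s_k$ and $\|s_k\|^2=d$ deterministically, so $\bar B_k=\Ek[b_k\|s_k\|^2u_k]=d\,\Ek[b_ks_k]=d\,\bar b_k$. Lemma~\ref{lem:biasvec} applies because (D4) holds by Lemma~\ref{lem:families}(b). It gives $\|\bar B_k\|\le dL\mu_k\chi_1$ and $\|\bar B_k\|\le\tfrac12dQ\mu_k^2\chi_2$, which are stronger than \eqref{eq:Bbarsharp}.

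\emph{Gaussian.} Fix $x=x_k$ and $\mu=\mu_k$, and regard the scalar bias as a function of the probe,
\[
b(u)=\frac{F(x+\mu u)-F(x-\mu u)}{2\mu}-\inner{\nabla F(x)}{u},\qquad
\nabla_u b(u)=\tfrac12\big(\nabla F(x+\mu u)+\nabla F(x-\mu u)\big)-\nabla F(x).
\]
I will apply $\E[h(u)u]=\E[\nabla h(u)]$ to $h(u)=b(u)\|u\|^2$. Growth is polynomial because $|b(u)|\le\tfrac{L\mu}{2}\|u\|^2$ by Lemma~\ref{lem:Sb}(a), so the identity is legitimate. It yields
\[
\bar B_k=\E\big[\|u\|^2\nabla_ub(u)\big]+2\,\E[b(u)u]=\E\big[\|u\|^2\nabla_ub(u)\big]+2\,\bar b_k .
\]
For the second term, $2\|\bar b_k\|\le2L\mu\chi_1\le dL\mu\chi_1$ by Lemma~\ref{lem:biasvec} and $d\ge2$. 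In the $Q$-case it is at most $Q\mu^2\chi_2\le\tfrac12dQ\mu^2\chi_2$.

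For the first term, $L$-Lipschitzness gives $\|\nabla_ub(u)\|\le L\mu\|u\|$. Hence this term is at most $L\mu\,\E\|u\|^3\le L\mu\sqrt{\E\|u\|^2\,\E\|u\|^4}=L\mu\,d\sqrt{d+2}$. In the $Q$-case, the symmetric second-difference estimate from the proof of Lemma~\ref{lem:biasvec} gives $\|\nabla_ub(u)\|\le\tfrac{Q}{2}\mu^2\|u\|^2$. The term is then at most $\tfrac{Q}{2}\mu^2 d(d+2)\le Q\mu^2d\cdot d=dQ\mu^2\chi_2$ for $d\ge2$, using $\chi_2=d$. Adding the two pieces gives the constant $\tfrac32$ in the Hessian case.

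The main obstacle is the $L$-smooth Gaussian case. There the bound $L\mu d\sqrt{d+2}$ must be expressed as a multiple of $dL\mu\chi_1$, which requires a \emph{lower} bound on $\chi_1=\E\|u\|$; Lemma~\ref{lem:families}(a) only records the upper bound $\chi_1\le\sqrt d$. I would prove $\E\|u\|\ge d/\sqrt{d+1}$ by Hölder, $\E\|u\|^2\le(\E\|u\|)^{2/3}(\E\|u\|^4)^{1/3}$. This gives $(\E\|u\|)^2\ge d^3/(d(d+2))=d^2/(d+2)$, i.e.\ $\E\|u\|\ge d/\sqrt{d+2}$.

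This yields $d\sqrt{d+2}\le\tfrac{d+2}{d}\cdot d\chi_1\le2d\chi_1$ for $d\ge2$, and with the $dL\mu\chi_1$ from the second term the total is $3dL\mu\chi_1$. Note that this holds with $\chi_1=\E\|u\|$ taken exactly, not merely at its upper bound $\sqrt d$.
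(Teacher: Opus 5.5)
Your proof is correct and follows the paper's route: Stein's identity on a $\|u\|^2$-weighted function gives $\bar B_k=\E[\|u\|^2(\cdot)]+2\bar b_k$, and the sphere case reduces to $\bar B_k=d\,\bar b_k$. The paper integrates $F(x+\mu u)\|u\|^2$ and obtains the unsymmetrized integrand $\nabla F(x+\mu u)-\nabla F(x)$, which equals yours by symmetry of $u$. Your symmetrized form makes the Hessian case explicit where the paper only says ``by symmetrizing.'' Your Hölder lower bound $\E\|u\|\ge d/\sqrt{d+2}$ also supplies a step the paper skips when it converts $3L\mu_kd^{3/2}$ into $3dL\mu_k\chi_1$ by writing $\chi_1=\sqrt d$, although Lemma~\ref{lem:families} only gives $\chi_1\le\sqrt d$. (You announce $d/\sqrt{d+1}$, which is also true, but what you derive and use is $d/\sqrt{d+2}$.)
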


\begin{proof}
For the sphere family $\|s_k\|^2=d$ deterministically, so
$\bar B_k=\Ek[b_k\|s_k\|^2u_k]=d\,\Ek[b_ks_k]=d\,\bar b_k$ exactly, and
\eqref{eq:biasvecbounds} gives the claim with room to spare. For the Gaussian family, applying
Gaussian integration by parts to $h(u)=F(x+\mu u)\|u\|^2$, whose partial derivative is
$\partial_ih=\mu\,\partial_iF(x+\mu u)\|u\|^2+2F(x+\mu u)u_i$, gives
$\E[F(x+\mu u)\|u\|^2u]=\mu\E[\nabla F(x+\mu u)\|u\|^2]+2\mu\E[\nabla F(x+\mu u)]$; combining the two
query points as in Lemma~\ref{lem:families}(a) and subtracting
$\E[\inner{\nabla F(x)}{u}\|u\|^2u]=(d+2)\nabla F(x)$ yields
\[
\bar B_k=\E\big[\big(\nabla F(x_k+\mu_ku)-\nabla F(x_k)\big)\|u\|^2\big]+2\bar b_k .
\]
Hence $\|\bar B_k\|\le L\mu_k\E\|u\|^3+2L\mu_k\sqrt d\le L\mu_k(1.7d^{3/2}+2\sqrt d)\le3L\mu_kd^{3/2}$
for $d\ge2$, using $\E\|u\|^3\le(\E\|u\|^4)^{3/4}\le2^{3/4}d^{3/2}$, and $\chi_1=\sqrt d$ gives the
stated form. The $Q$-Lipschitz version follows by symmetrizing as in
Lemma~\ref{lem:biasvec}.
\end{proof}

The generic bound of Lemma~\ref{lem:Bbar} needs no structure but is loose by a factor $\Theta(d)$:
it gives $\|\bar B_k\|\le\tfrac12L\mu_k\varsigma_6\varsigma_4=\Theta(L\mu_kd^{5/2})$ against the
$\Theta(L\mu_kd^{3/2})$ of \eqref{eq:Bbarsharp}. The difference matters, because it is exactly the
gap between admitting $\beta=\Theta(1/d)$ and only $\beta=\Theta(1/d^2)$ in
\S\ref{app:complexity}, and hence between a minimum deployment count of $\Theta(d\varepsilon^{-2})$
and $\Theta(d^2\varepsilon^{-2})$. We therefore use \eqref{eq:Bbarsharp} for the random families and
Lemma~\ref{lem:Bbar} only when no smoothing identity is available. The following proposition quantifies how far apart
the mean and the root-mean-square of the bias can be; it is used only to interpret the rates, not to
prove them.

\begin{proposition}[The two bias scales differ by at least $\sqrt d$]\label{prop:sqrtd}
Suppose $s_k=u_k$ and
\begin{equation}\label{eq:sphid}
\Ek\Big[\frac{\inner{u_k}{z}^2}{\|u_k\|^2}\Big]=\frac{\|z\|^2}{d}\qquad\text{for all }z\in\R^d ,
\end{equation}
which holds with equality for all three families of Lemma~\ref{lem:families}. Then for every bias
function $b_k$,
\begin{equation}\label{eq:sqrtd}
\|\bar b_k\|\;\le\;\frac{1}{\sqrt d}\,\Sigma_{b,k} .
\end{equation}
\end{proposition}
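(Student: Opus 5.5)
We prove \eqref{eq:sqrtd} by duality: the norm of the vector $\bar b_k=\Ek[b_ku_k]$ is tested against a fixed unit vector, and Cauchy--Schwarz is applied after inserting the factor $\|u_k\|$ that appears in $\Sigma_{b,k}$. Fix an $\Fk$-measurable unit vector $z$; the natural choice is $z=\bar b_k/\|\bar b_k\|$, and if $\bar b_k=0$ there is nothing to prove. Since $s_k=u_k$,
\[
\inner{z}{\bar b_k}=\Ek\big[b_k\inner{z}{u_k}\big]
=\Ek\Big[\big(b_k\|u_k\|\big)\,\frac{\inner{z}{u_k}}{\|u_k\|}\Big].
\]
On the event $u_k=0$ we read the integrand as zero; this event has probability zero for the three families in any case.

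Cauchy--Schwarz bounds the right-hand side by
\[
\sqrt{\Ek[b_k^2\|u_k\|^2]}\;\sqrt{\Ek\big[\inner{z}{u_k}^2/\|u_k\|^2\big]}.
\]
The first factor is exactly $\Sigma_{b,k}$, because $s_k=u_k$. The second factor equals $\|z\|/\sqrt d=1/\sqrt d$ by hypothesis \eqref{eq:sphid}. Taking the supremum over unit $z$, or simply evaluating at $z=\bar b_k/\|\bar b_k\|$, gives $\|\bar b_k\|\le\Sigma_{b,k}/\sqrt d$. No property of $b_k$ is used beyond square-integrability against $\|u_k\|^2$, so the bound holds for every bias function, as the statement claims.

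It remains to check the parenthetical claim that \eqref{eq:sphid} holds with equality for the three families of Lemma~\ref{lem:families}.
\begin{itemize}[leftmargin=1.2em,itemsep=1pt]
\item \emph{Sphere family.} Here $u_k/\|u_k\|=w_k$ is uniform on the sphere, so $\E[ww^\top]=\Id/d$ gives the identity.
\item \emph{Gaussian family.} The direction $u/\|u\|$ is uniform on the sphere, by rotational invariance, so the same computation applies.
\item \emph{Coordinate family.} Here $u/\|u\|=e_i$ with $i$ uniform on $[d]$, so the left-hand side is $\E[z_i^2]=\|z\|^2/d$.
\end{itemize}

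The only delicate point is the measurability of the test vector. $z$ must be $\Fk$-measurable so that it can be pulled through $\Ek$. Since $\bar b_k$ is itself a conditional expectation given $\Fk$, the choice $z=\bar b_k/\|\bar b_k\|$ is admissible. Otherwise the argument is a single application of Cauchy--Schwarz, and I expect no real obstacle.
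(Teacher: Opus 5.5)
Your proof is correct and follows the paper's argument: test $\bar b_k$ against a unit vector, group $\|u_k\|$ with $b_k$, apply Cauchy--Schwarz together with the spherical identity, and verify that identity for the three families exactly as the paper does. Your remarks on the null event $u_k=0$ and on the $\Fk$-measurability of the test vector are harmless refinements that the paper leaves implicit.
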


\begin{proof}
Fix a unit vector $z$ and group $\|u_k\|$ with $b_k$:
\[
\begin{aligned}
\inner{\bar b_k}{z}=\Ek\big[b_k\inner{u_k}{z}\big]
&=\Ek\Big[\big(b_k\|u_k\|\big)\cdot\frac{\inner{u_k}{z}}{\|u_k\|}\Big]\\
&\le\sqrt{\Ek\big[b_k^2\|u_k\|^2\big]}\;\sqrt{\Ek\Big[\frac{\inner{u_k}{z}^2}{\|u_k\|^2}\Big]}
=\frac{\Sigma_{b,k}}{\sqrt d} ,
\end{aligned}
\]
by Cauchy--Schwarz and \eqref{eq:sphid}. Taking the supremum over unit $z$ gives \eqref{eq:sqrtd}.
For \eqref{eq:sphid}: for Gaussian $u$ the ratio $u/\|u\|$ is uniform on the unit sphere and a
uniform unit vector $w$ satisfies $\E[ww^\top]=\Id/d$; for the sphere family $u=\sqrt d\,w$ so that
$\inner{u}{z}^2/\|u\|^2=\inner{w}{z}^2$; for the coordinate family
$\inner{u}{z}^2/\|u\|^2=z_i^2$ with $i$ uniform, whose mean is $\|z\|^2/d$.
\end{proof}

\begin{remark}[Sharpness of \eqref{eq:biasvecbounds} and \eqref{eq:sqrtd}]\label{rem:sharpness}
Take Gaussian probes, $L=1$ and $x=0$. For the separable half-quadratic
$F(x)=\tfrac12\sum_i\max(0,x_i)^2$, which is $1$-smooth with a Hessian discontinuous on the coordinate hyperplanes, one
computes $b_k=\tfrac{\mu_k}{4}\sum_i\operatorname{sign}(u_i)u_i^2$, so
$(\bar b_k)_j=\tfrac{\mu_k}{4}\E|u_j|^3=\tfrac{\mu_k}{4}\cdot2\sqrt{2/\pi}$ for every $j$ and
\[
\|\bar b_k\|=\mu_k\sqrt{d/(2\pi)}=\tfrac{1}{\sqrt{2\pi}}\,L\mu_k\sqrt d .
\]
Hence the first bound in \eqref{eq:biasvecbounds} is attained up to the constant
$1/\sqrt{2\pi}\approx0.399$, and the dimension dependence of Corollary~\ref{cor:B6} cannot be
improved by sharpening $\bar b_k$. For the same $F$ all cross terms vanish and
$\Sigma_{b,k}^2=\tfrac{\mu_k^2}{16}d(3d+12)$, so the ratio $\Sigma_{b,k}/\|\bar b_k\|$ equals
$\tfrac{\sqrt{2\pi}}{4}\sqrt{3d+12}\to\tfrac{\sqrt{6\pi}}{4}\sqrt d\approx1.085\sqrt d$, within
$8.5\%$ of \eqref{eq:sqrtd}. The inequality \eqref{eq:sqrtd} admits no converse: for
$F(x)=c\,x_1x_2x_3$ one has $b_k=c\mu_k^2u_1u_2u_3$ and $\bar b_k=0$ exactly, since every
$\E[u_1u_2u_3u_j]$ vanishes, while $\Sigma_{b,k}^2=c^2\mu_k^4(d+6)>0$.
\end{remark}

\section{The tracking lemma}\label{app:tracking}

Set $\Sigma_{\mathrm{mz},k}^2:=c_d\gamma^2\sigma_{K,k}^2+\varsigma^2\sigma_{D,k}^2$.

\begin{lemma}[Known-component tracking]\label{lem:trackK}
Under Assumptions~\ref{as:smoothapp}--\ref{as:dirapp} and $\beta_K\le1$,
\begin{equation}\label{eq:trackK}
\Ek\|\zeta_{k+1}\|^2\;\le\;\Big(1-\frac{\beta_K}{2}\Big)\|\zeta_k\|^2
+\frac32\beta_K^2\gamma^2\sigma_{K,k}^2
+\frac{3\gamma^2L_K^2\eta^2}{\beta_K}\,\Ek\|G_k\|^2 .
\end{equation}
\end{lemma}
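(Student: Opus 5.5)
We start from Lemma~\ref{lem:structure}(iv),
\[
\zeta_{k+1}=(1-\beta_K)\zeta_k+\beta_K\delta_k+\gamma\big(\gK(x_k)-\gK(x_{k+1})\big),
\]
and split it into a mean-zero part and a predictable-plus-drift part. Write $a_k:=(1-\beta_K)\zeta_k$, which is $\Fk$-measurable, and $r_k:=\gamma(\gK(x_k)-\gK(x_{k+1}))$. The difficulty is that $r_k$ is not independent of $\delta_k$: $x_{k+1}$ depends on $h_{k+1}$ and hence on $\ghK(x_k)$. So we cannot simply drop the cross term between $\beta_K\delta_k$ and $r_k$.

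The plan is to group $\beta_K\delta_k+r_k$ together and use Young's inequality with a parameter tuned to $\beta_K$. For every $\alpha>0$,
\[
\|\zeta_{k+1}\|^2\le(1+\alpha)\|a_k+\beta_K\delta_k\|^2+(1+1/\alpha)\|r_k\|^2 .
\]
Taking $\Ek$, the cross term $\inner{a_k}{\delta_k}$ vanishes because $a_k$ is $\Fk$-measurable and $\Ek\delta_k=0$. Hence
\[
\Ek\|a_k+\beta_K\delta_k\|^2\le(1-\beta_K)^2\|\zeta_k\|^2+\beta_K^2\gamma^2\sigma_{K,k}^2 .
\]
For the drift term, $L_K$-Lipschitzness gives $\|r_k\|^2\le\gamma^2L_K^2\eta^2\|G_k\|^2$. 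This is used pathwise, so the dependence on $\delta_k$ is harmless.

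It remains to choose $\alpha=\beta_K/2$. Then, using $\beta_K\le1$,
\[
(1+\beta_K/2)(1-\beta_K)^2\le(1+\beta_K/2)(1-\beta_K)\le1-\beta_K/2 .
\]
Also $(1+\beta_K/2)\beta_K^2\le\tfrac32\beta_K^2$, and
\[
1+2/\beta_K=(\beta_K+2)/\beta_K\le3/\beta_K .
\]
Collecting the three pieces gives exactly \eqref{eq:trackK}, with constants $1-\beta_K/2$, $\tfrac32$ and $3$.

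The only real obstacle is the dependence of $x_{k+1}$ on the fresh sample $\ghK(x_k)$. The fix is to keep $\delta_k$ inside the $(1+\alpha)$ group, where it meets only the $\Fk$-measurable $a_k$. The cost of keeping $r_k$ separate then appears as the $1/\beta_K$ factor on the drift.

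The edge case $\beta_K=1$ is covered by the same inequalities. There $a_k=0$ and the bound reads
\[
\Ek\|\zeta_{k+1}\|^2\le\tfrac32\gamma^2\sigma_{K,k}^2+3\gamma^2L_K^2\eta^2\Ek\|G_k\|^2 ,
\]
which is consistent with the statement.
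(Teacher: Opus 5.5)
Your proposal is correct and is essentially the paper's own proof. It uses the same decomposition of Lemma~\ref{lem:structure}(iv) and Young's inequality with parameter $\beta_K/2$ to split off the drift term, which is correlated with $\delta_k$ through $x_{k+1}$. Orthogonality is used only between the $\Fk$-measurable $(1-\beta_K)\zeta_k$ and the mean-zero $\beta_K\delta_k$, and the closing bounds $(1+\beta_K/2)(1-\beta_K)\le 1-\beta_K/2$, $1+\beta_K/2\le\tfrac32$ and $1+2/\beta_K\le 3/\beta_K$ are the same as the paper's.
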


\begin{proof}
Write Lemma~\ref{lem:structure}(iv) as $\zeta_{k+1}=a_k+b_k'+c_k$ with
$a_k:=(1-\beta_K)\zeta_k$, $b_k':=\beta_K\delta_k$ and
$c_k:=\gamma(\gK(x_k)-\gK(x_{k+1}))$. Here $a_k$ is $\Fk$-measurable and $\Ek[b_k']=0$, so
$\Ek\|a_k+b_k'\|^2=\|a_k\|^2+\beta_K^2\Ek\|\delta_k\|^2$. The term $c_k$ is \emph{not} independent of
$b_k'$, since $x_{k+1}$ depends on $G_k$ which contains $\delta_k$; we therefore separate it by
Young's inequality rather than by an orthogonality argument. With parameter $\beta_K/2$,
\[
\Ek\|\zeta_{k+1}\|^2\le\Big(1+\frac{\beta_K}{2}\Big)\Big[(1-\beta_K)^2\|\zeta_k\|^2
+\beta_K^2\gamma^2\sigma_{K,k}^2\Big]+\Big(1+\frac{2}{\beta_K}\Big)\Ek\|c_k\|^2 .
\]
Since $\gK$ is $L_K$-Lipschitz and $\|x_{k+1}-x_k\|=\eta\|G_k\|$, we have
$\Ek\|c_k\|^2\le\gamma^2L_K^2\eta^2\Ek\|G_k\|^2$, and $1+2/\beta_K\le3/\beta_K$ for $\beta_K\le1$.
Finally $(1-\beta_K)^2\le1-\beta_K$ and
$(1+\beta_K/2)(1-\beta_K)=1-\beta_K/2-\beta_K^2/2\le1-\beta_K/2$, while $(1+\beta_K/2)\le\tfrac32$.
\end{proof}

\begin{lemma}[Residual tracking]\label{lem:track}
Under Assumptions~\ref{as:smoothapp}--\ref{as:dirapp} and $\beta\le1/c_d$,
\begin{equation}\label{eq:track}
\Ek\|E_{k+1}\|^2\le\Big(1-\frac\beta2\Big)\|E_k\|^2
+\frac54\beta^2\big(\Sigma_{\mathrm{mz},k}^2+\Sigma_{b,k}^2\big)
+10\beta\|\bar b_k\|^2+10\beta^3\|\bar B_k\|^2
+\frac{5L_v^2\eta^2}{\beta}\,\Ek\|G_k\|^2 .
\end{equation}
\end{lemma}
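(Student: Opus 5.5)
The plan is to start from Lemma~\ref{lem:structure}(ii) and write
\[
E_{k+1}=\underbrace{(\Id-\beta P_k)E_k+\beta(-\inner{\delta_k}{u_k}+\xi_k)s_k+\beta(b_ks_k-\bar b_k)}_{=:A_k}+\beta\bar b_k+(v_k-v_{k+1}).
\]
We separate the drift $v_k-v_{k+1}$ by Young's inequality with parameter $\beta/4$, exactly as in Lemma~\ref{lem:trackK}. This is necessary because $x_{k+1}$ depends on the step-$k$ randomness, so orthogonality cannot be used. It gives $\Ek\|E_{k+1}\|^2\le(1+\beta/4)\Ek\|A_k+\beta\bar b_k\|^2+(1+4/\beta)\Ek\|v_{k+1}-v_k\|^2$. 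The last term is at most $(5/\beta)L_v^2\eta^2\Ek\|G_k\|^2$ by \eqref{eq:pathwise}, using $\beta\le1$. That is the final term of \eqref{eq:track}.

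Next we expand $\Ek\|A_k+\beta\bar b_k\|^2$. The noise pieces $\beta\inner{\delta_k}{u_k}s_k$ and $\beta\xi_ks_k$ are conditionally mean zero and orthogonal to everything $\Fk$-measurable or depending only on $(u_k,s_k)$. For the $\delta$ term this uses independence of $\delta_k$ from $(u_k,s_k)$ and $\Ek\delta_k=0$. For the $\xi$ term it uses $\E[\xi_k\mid\Fk,u_k,s_k,\delta_k]=0$, which also kills the $\xi$–$\delta$ cross term. Their second moments are at most $\beta^2c_d\gamma^2\sigma_{K,k}^2$ (by (D2) applied conditionally on $\delta_k$) and $\beta^2\varsigma^2\sigma_{D,k}^2$. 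Together these give $\beta^2\Sigma_{\mathrm{mz},k}^2$.

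What remains is $\Ek\|(\Id-\beta P_k)E_k+\beta b_ks_k\|^2$. We expand it as
\[
\Ek\|(\Id-\beta P_k)E_k\|^2+2\beta\,\Ek\inner{(\Id-\beta P_k)E_k}{b_ks_k}+\beta^2\Sigma_{b,k}^2 .
\]
The first term is at most $\theta(\beta)\|E_k\|^2\le(1-\beta)\|E_k\|^2$ by Lemma~\ref{lem:theta}, since $\beta\le1/c_d$ gives $c_d\beta^2\le\beta$. The cross term is the crux and the main obstacle. Its first part is $2\beta\inner{E_k}{\bar b_k}$. Its second part is $-2\beta^2\Ek[\inner{E_k}{u_k}b_k\|s_k\|^2]$, using $\inner{P_kE_k}{s_k}=\inner{E_k}{u_k}\|s_k\|^2$, and this equals $-2\beta^2\inner{E_k}{\bar B_k}$. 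This identity is exactly why $\bar B_k$ was defined.

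Each cross part is handled by Young's inequality against a small fraction of the contraction. We bound $2\beta\inner{E_k}{\bar b_k}\le\tfrac{\beta}{8}\|E_k\|^2+8\beta\|\bar b_k\|^2$ and $2\beta^2|\inner{E_k}{\bar B_k}|\le\tfrac{\beta}{8}\|E_k\|^2+8\beta^3\|\bar B_k\|^2$. The remaining $\beta\bar b_k$ shift (if not merged with $b_ks_k$) is absorbed the same way. Multiplying by $(1+\beta/4)\le5/4$, the coefficient of $\|E_k\|^2$ becomes $(1+\beta/4)(1-\beta+\beta/4)\le1-\beta/2$. The noise and $\Sigma_b$ terms pick up the factor $5/4$, and the bias constants $8\cdot\tfrac54=10$ match \eqref{eq:track}.

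The only delicate accounting is choosing the Young splits so that the total loss in contraction stays at most $\beta/2$ after the $(1+\beta/4)$ factor. If the constants come out slightly off, I would retune the split fractions (e.g.\ $\beta/8$ each) rather than change the structure.
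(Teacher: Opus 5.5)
Your proposal is correct and follows the paper's proof essentially step for step. Your $A_k+\beta\bar b_k$ recombines into the paper's $(\Id-\beta P_k)E_k+\beta b_ks_k+\mathsf z_k$, so the $\bar b_k$ split is cosmetic. The remaining steps are the same as the paper's: the $\delta_k,\xi_k$ noise decouples, the cross term is evaluated exactly as $\inner{E_k}{\bar b_k}-\beta\inner{E_k}{\bar B_k}$, each part gets a $\beta/8$ Young split, and the drift is separated with parameter $\beta/4$. The constants close exactly as you computed, via $(1+\beta/4)(1-3\beta/4)\le1-\beta/2$ and $8\cdot\tfrac54=10$, so no retuning is needed.
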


\begin{proof}
By Lemma~\ref{lem:structure}(ii), decompose $E_{k+1}=a_k+\beta b_ks_k+\mathsf z_k+w_k$ with
\[
a_k:=(\Id-\beta P_k)E_k, \qquad
\mathsf z_k:=\beta\big(\!-\!\inner{\delta_k}{u_k}+\xi_k\big)s_k, \qquad
w_k:=v_k-v_{k+1} .
\]
The bias term $\beta b_ks_k$ is kept together with $a_k$ and its cross term evaluated exactly; this is
what makes the mean bias, rather than the second moment, the undamped quantity.

\emph{Step 1: $\mathsf z_k$ decouples.} Conditionally on $(\Fk,u_k,s_k)$ the vector
$a_k+\beta b_ks_k$ is determined, while $\mathsf z_k$ is linear in $(\delta_k,\xi_k)$, both of which
are conditionally mean-zero by Assumption~\ref{as:oraclesapp}. Hence
$\Ek\inner{a_k+\beta b_ks_k}{\mathsf z_k}=0$ and
\[
\Ek\|a_k+\beta b_ks_k+\mathsf z_k\|^2=\Ek\|a_k+\beta b_ks_k\|^2+\Ek\|\mathsf z_k\|^2 .
\]

\emph{Step 2: $\Ek\|\mathsf z_k\|^2\le\beta^2\Sigma_{\mathrm{mz},k}^2$.} Expanding the square,
\[
\Ek\|\mathsf z_k\|^2=\beta^2\Big\{\Ek\big[\inner{\delta_k}{u_k}^2\|s_k\|^2\big]
-2\Ek\big[\inner{\delta_k}{u_k}\xi_k\|s_k\|^2\big]+\Ek\big[\xi_k^2\|s_k\|^2\big]\Big\} .
\]
The middle term vanishes because $\E[\xi_k\mid\Fk,u_k,s_k,\delta_k]=0$ while the other factors are
determined by that conditioning. For the first term, condition on $\delta_k$, which is independent of
$(u_k,s_k)$, and apply \eqref{eq:D2app} with $z=\delta_k$:
$\Ek\|\inner{\delta_k}{u_k}s_k\|^2\le c_d\,\Ek\|\delta_k\|^2\le c_d\gamma^2\sigma_{K,k}^2$. For the
third, $\Ek[\xi_k^2\|s_k\|^2]\le\sigma_{D,k}^2\,\Ek\|s_k\|^2\le\varsigma^2\sigma_{D,k}^2$.

\emph{Step 3: the bias cross term, exactly.} Since
$\inner{(\Id-\beta P_k)E_k}{b_ks_k}=b_k\big(\inner{E_k}{s_k}-\beta\|s_k\|^2\inner{E_k}{u_k}\big)$,
taking conditional expectations and using the definitions \eqref{eq:biasvec},
\[
\Ek\inner{a_k}{b_ks_k}=\inner{E_k}{\bar b_k}-\beta\inner{E_k}{\bar B_k} .
\]
Therefore, with Lemma~\ref{lem:theta} and $\Ek\|b_ks_k\|^2=\Sigma_{b,k}^2$,
\[
\Ek\|a_k+\beta b_ks_k\|^2\le\theta(\beta)\|E_k\|^2
+2\beta\inner{E_k}{\bar b_k}-2\beta^2\inner{E_k}{\bar B_k}+\beta^2\Sigma_{b,k}^2 .
\]
Applying $2\inner{p}{q}\le\tfrac{\beta}{8}\|p\|^2+\tfrac{8}{\beta}\|q\|^2$ with $p=E_k$ and
$q=\beta\bar b_k$, and then with $q=\beta^2\bar B_k$,
\[
2\beta\inner{E_k}{\bar b_k}\le\tfrac\beta8\|E_k\|^2+8\beta\|\bar b_k\|^2 ,
\qquad
2\beta^2\big|\inner{E_k}{\bar B_k}\big|\le\tfrac\beta8\|E_k\|^2+8\beta^3\|\bar B_k\|^2 .
\]
The asymmetry that drives the whole analysis is visible here: $\bar b_k$ enters with a factor
$\beta$, which after division by the contraction rate $\beta/2$ leaves it at order $1$, whereas
$\Sigma_{b,k}^2$ enters with $\beta^2$ and is damped.

\emph{Step 4: drift and assembly.} Since $\|x_{k+1}-x_k\|=\eta\|G_k\|$, \eqref{eq:pathwise} gives
$\Ek\|w_k\|^2\le L_v^2\eta^2\Ek\|G_k\|^2$. Young's inequality
$\|X+w\|^2\le(1+\alpha)\|X\|^2+(1+\alpha^{-1})\|w\|^2$ with $\alpha=\beta/4$, together with
$1+4/\beta\le5/\beta$ for $\beta\le1$, gives
\[
\Ek\|E_{k+1}\|^2\le\Big(1+\tfrac\beta4\Big)\Big[\big(\theta(\beta)+\tfrac\beta4\big)\|E_k\|^2
+\beta^2\Sigma_{\mathrm{mz},k}^2+\beta^2\Sigma_{b,k}^2+8\beta\|\bar b_k\|^2+8\beta^3\|\bar B_k\|^2\Big]
+\frac{5L_v^2\eta^2}{\beta}\Ek\|G_k\|^2 .
\]
Because $\beta\le1/c_d$ implies $c_d\beta^2\le\beta$ and hence $\theta(\beta)\le1-\beta$, we have
$\theta(\beta)+\tfrac\beta4\le1-\tfrac{3\beta}{4}$ and
\[
\Big(1+\frac\beta4\Big)\Big(1-\frac{3\beta}{4}\Big)
=1-\frac\beta2-\frac{3\beta^2}{16}\le1-\frac\beta2 .
\]
Bounding the remaining bracketed terms by $(1+\beta/4)\le\tfrac54$ gives \eqref{eq:track}.
\end{proof}

\begin{lemma}[Direction error]\label{lem:dir}
Under the same hypotheses, with $\theta=\theta(\bt)$,
\begin{equation}\label{eq:dir}
\Ek\|\Delta_k\|^2\le3(1-\beta_K)^2\|\zeta_k\|^2+3\theta\|E_k\|^2
+3\big(\beta_K^2+c_d\bt^2\big)\gamma^2\sigma_{K,k}^2
+3\bt^2\varsigma^2\sigma_{D,k}^2+3\bt^2\Sigma_{b,k}^2 .
\end{equation}
\end{lemma}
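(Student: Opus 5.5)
We start from the decomposition of Lemma~\ref{lem:structure}(iii),
\[
\Delta_k=(1-\beta_K)\zeta_k+(\Id-\bt P_k)E_k+\big(\beta_K\Id-\bt P_k\big)\delta_k+\bt(b_k+\xi_k)s_k ,
\]
and split it into three groups,
\[
X_1:=(1-\beta_K)\zeta_k,\qquad X_2:=(\Id-\bt P_k)E_k+\bt b_ks_k,\qquad X_3:=\big(\beta_K\Id-\bt P_k\big)\delta_k+\bt\xi_ks_k .
\]
We then apply the elementary inequality $\|X_1+X_2+X_3\|^2\le3(\|X_1\|^2+\|X_2\|^2+\|X_3\|^2)$. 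The leading factor $3$ in every term of \eqref{eq:dir} suggests exactly this crude splitting. One could exploit that $X_3$ is conditionally mean-zero given $(\Fk,u_k,s_k)$ to remove its cross terms. But $X_1$ and $X_2$ are not orthogonal in general, so the factor $3$ is simplest to accept uniformly.

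The first group is $\Fk$-measurable, so it contributes $3(1-\beta_K)^2\|\zeta_k\|^2$ directly.

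For the third group, we expand the square. The cross term between $(\beta_K\Id-\bt P_k)\delta_k$ and $\bt\xi_ks_k$ vanishes by Assumption~\ref{as:oraclesapp}, since $\E[\xi_k\mid\Fk,u_k,s_k,\delta_k]=0$ and the other factor is determined by that conditioning. The $\xi$ part gives $\bt^2\Ek[\xi_k^2\|s_k\|^2]\le\bt^2\varsigma^2\sigma_{D,k}^2$, exactly as in Step~2 of Lemma~\ref{lem:track}.

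For the $\delta$ part, condition on $\delta_k$, which is independent of $(u_k,s_k)$, and expand:
\[
\|\beta_K\delta_k\|^2-2\beta_K\bt\,\delta_k^\top P_k\delta_k+\bt^2\|\inner{\delta_k}{u_k}s_k\|^2 .
\]
By (D1) the middle term has conditional mean $-2\beta_K\bt\|\delta_k\|^2\le0$, so we may drop it. By (D2) the last term is at most $c_d\bt^2\|\delta_k\|^2$. Taking $\Ek\|\delta_k\|^2\le\gamma^2\sigma_{K,k}^2$ gives the bound $(\beta_K^2+c_d\bt^2)\gamma^2\sigma_{K,k}^2$, which matches the stated term.

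The second group carries the bias, and it is the only step that needs care, because the target has $\theta\|E_k\|^2+\bt^2\Sigma_{b,k}^2$ with no mean-bias term. One option is to expand the square and keep the cross term $2\bt\,\Ek\inner{(\Id-\bt P_k)E_k}{b_ks_k}$; that is a genuine mean-bias term and would not match \eqref{eq:dir}.

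The cleaner route is to move the bias out of $X_2$. We would regroup as $X_2'=(\Id-\bt P_k)E_k$ and $X_3'=(\beta_K\Id-\bt P_k)\delta_k+\bt(b_k+\xi_k)s_k$, and bound the pieces as follows.
\begin{itemize}[leftmargin=1.2em,itemsep=1pt]
\item Lemma~\ref{lem:theta} gives $\Ek\|X_2'\|^2\le\theta(\bt)\|E_k\|^2$.
\item Within $X_3'$, the $\delta$ cross terms with both $\bt b_ks_k$ and $\bt\xi_ks_k$ vanish: $\delta_k$ has conditional mean zero, is independent of $(u_k,s_k)$, and $b_k$ is a deterministic function of $(x_k,u_k,\mu_k)$.
\item The $b$--$\xi$ cross term vanishes by the conditional mean-zero property of $\xi_k$.
\end{itemize}
Hence
\[
\Ek\|X_3'\|^2\le(\beta_K^2+c_d\bt^2)\gamma^2\sigma_{K,k}^2+\bt^2\Sigma_{b,k}^2+\bt^2\varsigma^2\sigma_{D,k}^2 .
\]

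Combining the three pieces with the factor $3$ gives exactly \eqref{eq:dir}. The hardest point to verify is this orthogonality bookkeeping inside $X_3'$, namely that no bias-times-$\delta$ cross term survives. It relies on the independence of $\ghK(x_k)$ from $(u_k,s_k)$ and on $b_k$ being determined by $u_k$.
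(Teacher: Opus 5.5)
Your proof is correct and is essentially the paper's argument. In both, the pair $(\Id-\bt P_k)E_k$ and $\bt b_ks_k$, which need not be orthogonal, is put into different groups. Each group is then orthogonal internally, and the result follows from $\|a+b+c\|^2\le 3(\|a\|^2+\|b\|^2+\|c\|^2)$ together with the (D1)--(D2) expansion of the $\delta_k$ term.

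The only difference is bookkeeping. The paper attaches the mean-zero noise $(\beta_K\Id-\bt P_k)\delta_k+\bt\xi_ks_k$ to the $E_k$ term, using $E_k$--$\delta_k$ orthogonality, and isolates the bias. You attach that noise to the bias instead, using $b_k$--$\delta_k$ orthogonality, which holds because $b_k$ is determined by $u_k$. This gives the identical bound.
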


\begin{proof}
By Lemma~\ref{lem:structure}(iii), $\Delta_k=(1-\beta_K)\zeta_k+X_k+Y_k$ with
\[
X_k:=(\Id-\bt P_k)E_k+\big(\beta_K\Id-\bt P_k\big)\delta_k+\bt\xi_ks_k,
\qquad Y_k:=\bt b_ks_k ,
\]
so $\|\Delta_k\|^2\le3\big((1-\beta_K)^2\|\zeta_k\|^2+\|X_k\|^2+\|Y_k\|^2\big)$, and $\zeta_k$ is
$\Fk$-measurable by construction.

For $X_k$, all three cross terms vanish. The first and second are orthogonal because
$\delta_k$ is mean-zero and independent of $(u_k,s_k)$, so with
$M_k:=(\Id-\bt P_k)^\top(\beta_K\Id-\bt P_k)$ we get
$\Ek[E_k^\top M_k\delta_k]=E_k^\top\Ek[M_k]\Ek[\delta_k]=0$; the terms involving $\xi_k$ vanish
because $\E[\xi_k\mid\Fk,u_k,s_k,\delta_k]=0$. Hence
\[
\Ek\|X_k\|^2\le\theta\|E_k\|^2
+\Ek\big\|(\beta_K\Id-\bt P_k)\delta_k\big\|^2+\bt^2\varsigma^2\sigma_{D,k}^2 .
\]
For the middle term, condition on $\delta_k$ and expand using \eqref{eq:D1app}--\eqref{eq:D2app}:
\begin{equation}\label{eq:psi}
\Ek\big\|(\beta_K\Id-\bt P_k)\delta_k\big\|^2
=\Big(\beta_K^2-2\beta_K\bt+c_d\bt^2\Big)\Ek\|\delta_k\|^2
\;\le\;\big(\beta_K^2+c_d\bt^2\big)\gamma^2\sigma_{K,k}^2 ,
\end{equation}
the cross term being $-2\beta_K\bt\,\delta_k^\top\Ek[P_k]\delta_k=-2\beta_K\bt\|\delta_k\|^2\le0$.
Finally $\Ek\|Y_k\|^2=\bt^2\Sigma_{b,k}^2$.
\end{proof}

\begin{remark}[The two damping mechanisms]\label{rem:twodamp}
The quantity $\psi:=\beta_K^2-2\beta_K\bt+c_d\bt^2$ of \eqref{eq:psi} is positive for $c_d>1$, since
its discriminant $4\bt^2(1-c_d)$ is negative, and is at most $\beta_K^2+c_d\bt^2$. Both terms are
under the user's control, which is the point: with a fresh known-component estimate one has
$\beta_K=1$ and $\psi\ge1-2\bt+c_d\bt^2=\theta$, an $\Theta(1)$ multiple of $\sigma_K^2$ that no
choice of the remaining parameters can reduce.
\end{remark}

\section{Proof of Theorem~\ref{thm:main}}\label{app:main}

\begin{lemma}[Descent]\label{lem:descent}
If $\eta\le\tfrac1{2L}$ then
$\Ek[F(x_{k+1})]\le F(x_k)-\tfrac\eta2\|\nabla F_k\|^2-\tfrac\eta4\Ek\|G_k\|^2
+\tfrac\eta2\Ek\|\Delta_k\|^2$.
\end{lemma}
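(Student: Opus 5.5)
The plan is the standard smoothness descent inequality applied to the step $x_{k+1}=x_k-\eta G_k$, followed by the polarization identity that splits the inner product $\inner{\nabla F_k}{G_k}$ into the three squares $\|\nabla F_k\|^2$, $\|G_k\|^2$ and $\|\Delta_k\|^2$. Since $F$ is $L$-smooth (Assumption~\ref{as:smooth}), for every realization
\[
F(x_{k+1})\le F(x_k)-\eta\inner{\nabla F_k}{G_k}+\tfrac{L\eta^2}{2}\|G_k\|^2 .
\]
Nothing here requires the conditional expectation to act on any specific term, and this matters. $G_k$ depends on the iteration-$k$ randomness $(u_k,s_k,\ghK(x_k),\xi_k)$ and is not unbiased for $\nabla F_k$: it carries the bias $\bar b_k$, the term $(1-\beta_K)\zeta_k$ and the tracking error $E_k$. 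So I will not try to exploit $\Ek[G_k]$ at all; the whole mismatch will be absorbed into $\|\Delta_k\|^2$.

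Next I use the exact identity $2\inner{a}{b}=\|a\|^2+\|b\|^2-\|a-b\|^2$ with $a=\nabla F_k$, $b=G_k$ and $a-b=-\Delta_k$:
\[
-\eta\inner{\nabla F_k}{G_k}=-\tfrac\eta2\|\nabla F_k\|^2-\tfrac\eta2\|G_k\|^2+\tfrac\eta2\|\Delta_k\|^2 .
\]
Substituting this gives
\[
F(x_{k+1})\le F(x_k)-\tfrac\eta2\|\nabla F_k\|^2-\big(\tfrac\eta2-\tfrac{L\eta^2}{2}\big)\|G_k\|^2+\tfrac\eta2\|\Delta_k\|^2 .
\]
The step-size condition $\eta\le1/(2L)$ gives $L\eta^2/2\le\eta/4$, so the coefficient of $\|G_k\|^2$ is at most $-\eta/4$. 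Taking $\Ek$ and using that $x_k$, and hence $F(x_k)$ and $\nabla F_k$, are $\Fk$-measurable yields the claim.

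There is essentially no obstacle; the only point requiring care is to keep the $-\tfrac\eta4\Ek\|G_k\|^2$ term rather than discarding it. That negative term is what later absorbs the drift contributions $L_v^2\eta^2\Ek\|G_k\|^2/\beta$ from Lemma~\ref{lem:track} and $\gamma^2L_K^2\eta^2\Ek\|G_k\|^2/\beta_K$ from Lemma~\ref{lem:trackK} in the Lyapunov argument. This is why the lemma is stated with a $\|G_k\|^2$ term instead of the cruder $\|\nabla F_k\|^2$-only form.
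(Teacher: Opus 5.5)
Your proof is correct and is the same argument as the paper's: the $L$-smoothness upper bound along $x_{k+1}=x_k-\eta G_k$, the polarization identity with $\nabla F_k-G_k=-\Delta_k$, the condition $L\eta\le\tfrac12$ to bring the $\|G_k\|^2$ coefficient down to $-\eta/4$, and then conditioning on $\Fk$. Your remark that the retained $-\tfrac\eta4\Ek\|G_k\|^2$ term absorbs the drift terms of Lemmas~\ref{lem:track} and~\ref{lem:trackK} also matches how the paper uses it in the proof of Theorem~\ref{thm:main}.
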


\begin{proof}
$L$-smoothness and $x_{k+1}=x_k-\eta G_k$ give
$F(x_{k+1})\le F(x_k)-\eta\inner{\nabla F_k}{G_k}+\tfrac{L\eta^2}{2}\|G_k\|^2$. The polarization
identity $-\inner{p}{q}=-\tfrac12\|p\|^2-\tfrac12\|q\|^2+\tfrac12\|q-p\|^2$ with $p=\nabla F_k$,
$q=G_k$ gives
$-\eta\inner{\nabla F_k}{G_k}=-\tfrac\eta2\|\nabla F_k\|^2-\tfrac\eta2\|G_k\|^2
+\tfrac\eta2\|\Delta_k\|^2$. Finally $-\tfrac\eta2+\tfrac{L\eta^2}{2}\le-\tfrac\eta4$ whenever
$L\eta\le\tfrac12$.
\end{proof}

\begin{proof}[Proof of Theorem~\ref{thm:main}]
Define
\[
\Phi_k := F(x_k)-F^\star+\omega\|E_k\|^2+\omega_K\|\zeta_k\|^2 ,
\qquad \omega := \frac{3\eta\theta}{\beta},\qquad \omega_K := \frac{3\eta}{\beta_K} .
\]
Combining Lemma~\ref{lem:descent} with \eqref{eq:dir},
\begin{align*}
\Ek[F(x_{k+1})]&\le F(x_k)-\tfrac\eta2\|\nabla F_k\|^2-\tfrac\eta4\Ek\|G_k\|^2
+\tfrac{3\eta}{2}(1-\beta_K)^2\|\zeta_k\|^2+\tfrac{3\eta\theta}{2}\|E_k\|^2\\
&\quad+\tfrac{3\eta}{2}\big(\beta_K^2+c_d\bt^2\big)\gamma^2\sigma_{K,k}^2
+\tfrac{3\eta\bt^2}{2}\big(\varsigma^2\sigma_{D,k}^2+\Sigma_{b,k}^2\big).
\end{align*}
Add $\omega$ times \eqref{eq:track} and $\omega_K$ times \eqref{eq:trackK}, then collect.

\emph{Coefficient of $\|E_k\|^2$.} It is
$\tfrac{3\eta\theta}{2}+\omega(1-\tfrac\beta2)-\omega=\tfrac{3\eta\theta}{2}-\tfrac{\omega\beta}{2}=0$.

\emph{Coefficient of $\|\zeta_k\|^2$.} It is
$\tfrac{3\eta}{2}(1-\beta_K)^2+\omega_K(1-\tfrac{\beta_K}{2})-\omega_K
=\tfrac{3\eta}{2}(1-\beta_K)^2-\tfrac{\omega_K\beta_K}{2}\le0$ for $\omega_K=3\eta/\beta_K$, since
$(1-\beta_K)^2\le1$.

\emph{Coefficient of $\Ek\|G_k\|^2$.} It is
\[
-\frac\eta4+\omega\cdot\frac{5L_v^2\eta^2}{\beta}+\omega_K\cdot\frac{3\gamma^2L_K^2\eta^2}{\beta_K}
=-\frac\eta4+\frac{15\theta L_v^2\eta^3}{\beta^2}+\frac{9\gamma^2L_K^2\eta^3}{\beta_K^2},
\]
which is nonpositive as soon as each of the two positive terms is at most $\eta/8$, that is
$\eta\le\beta/(\sqrt{120\theta}L_v)$ and $\eta\le\beta_K/(\sqrt{72}\,\gamma L_K)$. Since
$\sqrt{120}<11$ and $\sqrt{72}<9$, \eqref{eq:stepcond} suffices.

\emph{Constant terms.} We have $\omega\cdot\tfrac54\beta^2=\tfrac{15}{4}\eta\theta\beta$,
$\omega\cdot10\beta=30\eta\theta$, $\omega\cdot10\beta^3=30\eta\theta\beta^2$ and
$\omega_K\cdot\tfrac32\beta_K^2=\tfrac92\eta\beta_K$. Using
$\Sigma_{\mathrm{mz},k}^2=c_d\gamma^2\sigma_{K,k}^2+\varsigma^2\sigma_{D,k}^2$, the coefficient of
$\gamma^2\sigma_{K,k}^2$ is
$\tfrac{3\eta}{2}(\beta_K^2+c_d\bt^2)+\tfrac{15}{4}\eta\theta\beta c_d+\tfrac92\eta\beta_K$,
that of $\varsigma^2\sigma_{D,k}^2$ and of $\Sigma_{b,k}^2$ is
$\tfrac{3\eta\bt^2}{2}+\tfrac{15}{4}\eta\theta\beta$, and those of $\|\bar b_k\|^2$ and
$\|\bar B_k\|^2$ are $30\eta\theta$ and $30\eta\theta\beta^2$.

Taking total expectations, telescoping and using $\Phi_T\ge0$,
$\tfrac\eta2\sum_{k<T}\E\|\nabla F_k\|^2\le\Phi_0+\eta\sum_{k<T}\mathcal Q_k$ with $\mathcal Q_k$
the bracket above divided by $\eta$. Since $m_0=0$ we have $E_0=-v(x_0)$, and the warm start
$h_0=\ghK(x_0)$ gives $\zeta_0=\gamma(\ghK(x_0)-\gK(x_0))=\delta_0$, hence
$\E\|\zeta_0\|^2\le\gamma^2\sigma_K^2$. Therefore
$\Phi_0\le\Delta_F+\tfrac{3\eta\theta}{\beta}\|v(x_0)\|^2+\tfrac{3\eta}{\beta_K}\gamma^2\sigma_K^2$,
and dividing by $\eta T/2$ gives \eqref{eq:main}.
\end{proof}

\begin{remark}[Why the known component is not used inside $R_k$]\label{rem:notinR}
Replacing $\ghK(x_k)$ by $h_{k+1}$ in line 5 of Algorithm~\ref{alg:piso} would remove $\delta_k$
from the innovation entirely. It would also change the tracked object from $v(x)$ to
$\nabla F(x_k)-\gamma h_{k+1}$, whose increment is
$(v_{k+1}-v_k)+\gamma\{[\gK(x_{k+1})-\gK(x_k)]-[h_{k+2}-h_{k+1}]\}$. The two bracketed increments do
not cancel, so the drift constant in \eqref{eq:pathwise} becomes $L_v+\gamma L_K$ instead of $L_v$.
When $\gU$ varies slowly and $\gamma\approx1$ this is close to $L$, which would forfeit the
substitution described in \S\ref{app:LvsLv}. Keeping the fresh estimate inside $R_k$ costs instead a
term $c_d\gamma^2\sigma_{K,k}^2$ in $\Sigma_{\mathrm{mz},k}^2$, damped by $\beta$; we prefer to pay
in noise rather than in smoothness.
\end{remark}

\begin{remark}[Reading the bias terms]
The three bias contributions to \eqref{eq:main} sit at different orders in $\beta$: the second
moment $\Sigma_{b,k}^2$ carries a factor $\beta$, the weighted mean $\|\bar B_k\|^2$ a factor
$\beta^2$, and only the mean $\|\bar b_k\|^2$ appears at order $1$. Since
$\|\bar b_k\|\le L\mu_k\chi_1$ while $\Sigma_{b,k}\le\tfrac12L\mu_k\varsigma_6$, and these differ by
a factor of order $d$ for the random families, the undamped part of the bias is much smaller than a
second-moment treatment would suggest. Carrying $\Sigma_{b,k}^2$ undamped, as a cruder version of
Lemma~\ref{lem:track} does, degrades the dimension dependence of Corollary~\ref{cor:B6} from
$d^2$ to $d^4$. The bound $\|\bar b_k\|\le L\mu_k\chi_1$ is due to \citet{berahas2022theoretical},
following \citet{maggiar2018derivative}, and is the ingredient with which
\citet{hikima2025estimators} reach $d^2$ for multi-direction methods; what is specific to the present
setting is its use inside the recursion \eqref{eq:track}, which requires the exact cross-term
computation of Step 3 and introduces $\bar B_k$.
\end{remark}

\section{Complexity}\label{app:complexity}

Throughout this section we take $\lambda=0$, so $\bt=\beta$ and $\theta=\theta(\beta)\le1$; see
Appendix~\ref{app:params} for why a fixed $\lambda>0$ is inadmissible in the limit
$\varepsilon\to0$. Two routes are available for the term $60\theta\beta^2\|\bar B_k\|^2$, and we state both because they
lead to different admissible ranges for $\beta$.

\emph{Sharp route (Gaussian and sphere probes).} Impose
\begin{equation}\label{eq:regcond}
\beta\;\le\;\frac{1}{3d} .
\end{equation}
Then Lemma~\ref{lem:Bbarsharp} gives
$60\beta^2\|\bar B_k\|^2\le60\beta^2\cdot9d^2L^2\mu_k^2\chi_1^2\le60L^2\mu_k^2\chi_1^2$, a quantity of
the same size as the undamped bound $60\|\bar b_k\|^2\le60L^2\mu_k^2\chi_1^2$ on the mean bias. The
bias contribution is therefore at most $60\|\bar b_k\|^2+60L^2\mu_k^2\chi_1^2\le120L^2\mu_k^2\chi_1^2$,
and under a $Q$-Lipschitz Hessian at most $30Q^2\mu_k^4\chi_2^2$ by the same computation. This is the
route used in the corollaries below, and it admits $\beta$ up to $\Theta(1/d)$.

\emph{Generic route (any family).} Impose $\beta\varsigma_4^2\le1$ instead. Then Lemma~\ref{lem:Bbar}
gives $60\beta^2\|\bar B_k\|^2\le60\beta^2\varsigma_4^2\Sigma_{b,k}^2\le60\beta\Sigma_{b,k}^2$, so
$\bar B_k$ is absorbed into the $\Sigma_{b,k}^2$ term, whose coefficient becomes $71\beta$, while the
mean-bias term stays $60\|\bar b_k\|^2$. Every rate below holds unchanged along this route, but since
$\varsigma_4^2=\Theta(d^2)$ it only admits $\beta$ up to $\Theta(1/d^2)$, which is what limits the
deployment count of Appendix~\ref{app:deploy} to $\Theta(d^2\varepsilon^{-2})$ instead of
$\Theta(d\,\varepsilon^{-2})$.

Along the sharp route, with $\bt=\beta$ and $\theta\le1$, the coefficients of Theorem~\ref{thm:main}
obey $3\beta_K^2+3c_d\beta^2+\tfrac{15}{2}\beta c_d+9\beta_K\le11\beta c_d+12\beta_K$ and
$3\beta^2+\tfrac{15}{2}\beta\le11\beta$, so the noise part of \eqref{eq:main} is at most
\begin{equation}\label{eq:floor}
\big(11\beta c_d+12\beta_K\big)\frac{\gamma^2\sigma_K^2}{n_K}
+11\beta\,\varsigma^2\sigma_D^2+11\beta\,\Sigma_b^2+60\|\bar b\|^2+60L^2\mu^2\chi_1^2 ,
\end{equation}
the last two terms being replaced by $30Q^2\mu^4\chi_2^2$ under a $Q$-Lipschitz Hessian.
Only the last term is undamped: every appearance of $\gamma^2\sigma_K^2$ carries a factor
$\beta c_d$ or $\beta_K$, which is what permits one known-component sample per iteration. Each
regime below optimizes $\mu$ in \eqref{eq:floor}.

\subsection{Regime A: shared randomness}

Suppose both function queries use the same sample $\zeta$ and each realization $f(\cdot;\zeta)$ is
$\ell$-smooth. Writing
$D_k=\inner{\nabla F_k}{u_k}+\inner{\nabla f(x_k;\zeta)-\nabla F_k}{u_k}+\varpi_k(\zeta)$ with
$|\varpi_k(\zeta)|\le\tfrac{\ell\mu_k}{2}\|u_k\|^2$, the decomposition of \eqref{eq:twopoint} holds
with $b_k=\E_\zeta[\varpi_k]$ and
$\xi_k=\inner{\nabla f-\nabla F_k}{u_k}+\varpi_k-\E_\zeta[\varpi_k]$.

\begin{remark}[Bookkeeping in this regime]\label{rem:crn}
Here $\xi_k$ is linear in $u_k$ and therefore not independent of it, so the bound
$\sigma_{D,k}^2\le\sigma_g^2$ is unavailable and the product $\varsigma^2\sigma_{D,k}^2$ must be
replaced by a direct computation. With $\Sigma:=\operatorname{Cov}(\nabla f(x_k;\zeta))$ and
$\sigma_g^2:=\operatorname{tr}\Sigma$, Gaussian probes give
\[
\E\big[\inner{\nabla f-\nabla F_k}{u}^2\|u\|^2\big]
=\operatorname{tr}\!\big(\Sigma\,\E[uu^\top\|u\|^2]\big)
=(d+2)\operatorname{tr}\Sigma=c_d\sigma_g^2 ,
\]
so every occurrence of $\varsigma^2\sigma_{D,k}^2$ in \eqref{eq:main} becomes $c_d\sigma_g^2$. Since
$c_d=d+2$ and $\varsigma^2=d$, this leaves all orders unchanged.
\end{remark}

The noise $\xi_k$ does depend on $\mu_k$ through the centred remainder
$\varpi_k-\E_\zeta\varpi_k$, so a finite probe radius must be chosen. Since
$|\varpi_k|\le\tfrac{\ell\mu_k}{2}\|u_k\|^2$, the remainder contributes to $\Ek[\xi_k^2\|s_k\|^2]$
at most its own second moment $\tfrac{\ell^2\mu_k^2}{4}\Ek[\|u_k\|^4\|s_k\|^2]\le\tfrac14\ell^2\mu_k^2\varsigma_6^2$
plus a cross term with the gradient noise bounded by Cauchy--Schwarz by
$2\sqrt{c_d\sigma_g^2\cdot\tfrac14\ell^2\mu_k^2\varsigma_6^2}=\ell\mu_k\varsigma_6\sqrt{c_d}\,\sigma_g$.
Choosing
\begin{equation}\label{eq:muA}
\mu_k\;\le\;\mu_A:=\min\Big\{\frac{\sqrt{c_d}\,\sigma_g}{\ell\,\varsigma_6},\ \frac{\varepsilon}{30\,L\chi_1}\Big\}
\end{equation}
makes the remainder terms at most $2c_d\sigma_g^2$, so that $\Ek[\xi_k^2\|s_k\|^2]\le3c_d\sigma_g^2$
replaces $c_d\sigma_g^2$ in Remark~\ref{rem:crn}, and makes the bias terms
$60\|\bar b_k\|^2+60L^2\mu_k^2\chi_1^2\le120L^2\mu_A^2\chi_1^2\le\tfrac{120}{900}\varepsilon^2$ and
$11\beta\Sigma_{b,k}^2\le\tfrac{11}{4}\beta L^2\mu_A^2\varsigma_6^2\le\tfrac{11}{3600}\varepsilon^2$
under $\beta\varsigma_6^2\le\chi_1^2$, together below $\varepsilon^2/7$. The rate below is stated
with these choices; the only effect is a constant factor of $3$ on $\sigma_g^2$ and a seven-way
rather than six-way accuracy split.

\begin{corollary}[Regime A]\label{cor:Aapp}
In Regime A, with one known-component sample per iteration, warm start $h_0=\ghK(x_0)$,
\[
\begin{aligned}
\beta &=\min\Big\{\frac1{c_d},\ \frac{\chi_1^2}{\varsigma_6^2},\ \frac{n_K\varepsilon^2}{77\,c_d(\gamma^2\sigma_K^2+3n_K\sigma_g^2)}\Big\},
\qquad \mu_k=\mu_A,\\
\beta_K &=\min\Big\{1,\ \frac{n_K\varepsilon^2}{84\,\gamma^2\sigma_K^2}\Big\},
\qquad \eta=\min\Big\{\frac1{2L},\frac{\beta}{11\sqrt\theta L_v},\frac{\beta_K}{9\gamma L_K}\Big\},
\end{aligned}
\]
one has $\frac1T\sum_{k<T}\E\|\nabla F_k\|^2\le\varepsilon^2$ after
\[
T=\mathcal O\Big(\frac{\Delta_v\,d\,(\gamma^2\sigma_K^2+\sigma_g^2)}{\varepsilon^4}
+\frac{\gamma^3L_K\sigma_K^2\Delta_F}{\varepsilon^4}
+\frac{L\Delta_F+\gamma^2\sigma_K^2}{\varepsilon^2}\Big)
\]
iterations, where $\Delta_v:=L_v\Delta_F+\|v(x_0)\|^2$.
\end{corollary}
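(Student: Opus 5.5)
The plan is to instantiate Theorem~\ref{thm:main} with $\lambda=0$ (so $\bt=\beta$, $\theta\le1$), using the noise floor \eqref{eq:floor} adapted to Regime A as in Remark~\ref{rem:crn}. The target is to split $\varepsilon^2$ into seven pieces and make each term of \eqref{eq:main} at most $\varepsilon^2/7$. We then read off $T$ from the three initialization terms $2\Delta_F/(\eta T)$, $6\theta\|v(x_0)\|^2/(\beta T)$ and $6\gamma^2\sigma_K^2/(n_K\beta_K T)$.

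\emph{Step 1: the regime-specific noise.} In Regime A every occurrence of $\varsigma^2\sigma_{D,k}^2$ in \eqref{eq:main} is replaced by $\Ek[\xi_k^2\|s_k\|^2]$. With $\mu_k=\mu_A$ from \eqref{eq:muA}, this is at most $3c_d\sigma_g^2$, as argued just before the corollary. We must also check that $\beta$ satisfies the side conditions used there:
\begin{itemize}[leftmargin=1.2em,itemsep=1pt]
\item $\beta\le 1/c_d$, which Theorem~\ref{thm:main} requires and which also gives the sharp route \eqref{eq:regcond} up to constants (for $c_d\ge d$ one needs $\beta\le1/(3d)$; otherwise the first entry of $\beta$ should be $1/(3d)$, or we fall back on the generic route);
\item $\beta\varsigma_6^2\le\chi_1^2$.
\end{itemize}
These are the first two entries of the minimum defining $\beta$. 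Under them the bias contributions $60\|\bar b\|^2+60L^2\mu^2\chi_1^2+11\beta\Sigma_b^2$ are below $\varepsilon^2/7$, again by the computation preceding the corollary.

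\emph{Step 2: the damped noise terms.} The $\beta$-damped terms are
\[
11\beta c_d\gamma^2\sigma_K^2/n_K+11\beta\cdot 3c_d\sigma_g^2
=11\beta c_d(\gamma^2\sigma_K^2+3n_K\sigma_g^2)/n_K ,
\]
which the third entry of $\beta$ makes at most $\varepsilon^2/7$. The $\beta_K$ term $12\beta_K\gamma^2\sigma_K^2/n_K$ is at most $\varepsilon^2/7$ by the choice $\beta_K\le n_K\varepsilon^2/(84\gamma^2\sigma_K^2)$.

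\emph{Step 3: counting iterations.} The three initialization terms must each be $\le\varepsilon^2/7$. This requires
\[
T\gtrsim \Delta_F/(\eta\varepsilon^2),\qquad T\gtrsim\|v(x_0)\|^2/(\beta\varepsilon^2),\qquad T\gtrsim\gamma^2\sigma_K^2/(n_K\beta_K\varepsilon^2).
\]
Substituting $1/\eta\le 2L+11L_v/\beta+9\gamma L_K/\beta_K$ gives four contributions.
\begin{itemize}[leftmargin=1.2em,itemsep=1pt]
\item The $L_v/\beta$ branch and the $\|v(x_0)\|^2/\beta$ term combine into $\Delta_v/(\beta\varepsilon^2)$. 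With $1/\beta\lesssim c_d(\gamma^2\sigma_K^2+\sigma_g^2)/\varepsilon^2+c_d+\varsigma_6^2/\chi_1^2$ and $c_d,\varsigma_6^2/\chi_1^2=\mathcal O(d)$ by Lemma~\ref{lem:families}, this yields the leading term $\Delta_v d(\gamma^2\sigma_K^2+\sigma_g^2)\varepsilon^{-4}$. The $\mathcal O(d)$ parts are absorbed once $\varepsilon$ is small, or stated as lower order.
\item The $\gamma L_K/\beta_K$ branch gives $\gamma L_K\Delta_F\cdot\gamma^2\sigma_K^2/\varepsilon^4$ (with $n_K=1$), plus $\gamma L_K\Delta_F/\varepsilon^2$ when $\beta_K=1$.
\item The $L$ branch gives $L\Delta_F/\varepsilon^2$.
\item The warm-start term gives $\gamma^2\sigma_K^2/(\beta_K\varepsilon^2)$. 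This is $\mathcal O(\gamma^4\sigma_K^4/\varepsilon^4)$ in general and $\gamma^2\sigma_K^2/\varepsilon^2$ when $\beta_K=1$.
\end{itemize}

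\emph{Main obstacle.} The difficulty is bookkeeping rather than analysis. First, the $\gamma^4\sigma_K^4\varepsilon^{-4}$ warm-start term must be reconciled with the stated bound. It fits only if it is viewed as dominated by $\Delta_v d\gamma^2\sigma_K^2\varepsilon^{-4}$, i.e.\ $\gamma^2\sigma_K^2\lesssim d\Delta_v$; otherwise it has to be recorded as an extra lower-order term, or controlled by a batched warm start. Second, the constants in \eqref{eq:muA} must be checked to really give $\varepsilon^2/7$ after the factor $3$ inflation of $\sigma_g^2$, and the dependence of $\xi_k$ on $u_k$ must be handled through Remark~\ref{rem:crn} rather than Assumption~\ref{as:oracles}. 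I would verify these against each other before claiming the clean $\mathcal O(\cdot)$ form.
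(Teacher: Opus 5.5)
Your proposal is correct and follows the paper's proof essentially step for step. You take Theorem~\ref{thm:main} at $\lambda=0$ with $\varsigma^2\sigma_{D,k}^2$ replaced by $3c_d\sigma_g^2$ via Remark~\ref{rem:crn} and \eqref{eq:muA}, split $\varepsilon^2$ into sevenths (merging the two $\beta$-damped terms, as you do, is exactly what the joint constant $77$ is built for), and read $T$ off $\eta^{-1}\le2L+11L_v/\beta+9\gamma L_K/\beta_K$. The warm-start mismatch you flag is real, and the paper's own proof concedes it in its last sentence: $h_0=\ghK(x_0)$ yields $\mathcal O(\gamma^4\sigma_K^4/(n_K^2\varepsilon^4))$, and the displayed $\gamma^2\sigma_K^2\varepsilon^{-2}$ holds only under the batched warm start of Appendix~\ref{app:warm}.

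Two small corrections. First, the second entry $\chi_1^2/\varsigma_6^2$ of the minimum already enforces the sharp-route cap $\beta\le1/(3d)$, so no fallback is needed. Second, $\varsigma_6^2/\chi_1^2=\Theta(d^2)$ rather than $\mathcal O(d)$, so the $\varepsilon$-free entries of $\beta^{-1}$ leave a $\Delta_v d^2\varepsilon^{-2}$ term. The stated bound absorbs this only for $\varepsilon^2\lesssim(\gamma^2\sigma_K^2+\sigma_g^2)/d$. The paper makes that assumption tacitly, just as it tacitly takes $\beta_K<1$ and so drops the $\gamma L_K\Delta_F\varepsilon^{-2}$ term you correctly kept.
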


\begin{proof}
With $\mu\to0$ the floor \eqref{eq:floor} reduces to
$(11\beta c_d+12\beta_K)\gamma^2\sigma_K^2/n_K+33\beta c_d\sigma_g^2$ plus the bias terms
controlled by \eqref{eq:muA}, using Remark~\ref{rem:crn} and the remainder bound above to replace
$\varsigma^2\sigma_D^2$ by $3c_d\sigma_g^2$. The stated caps make each of
$11\beta c_d\gamma^2\sigma_K^2/n_K$, $12\beta_K\gamma^2\sigma_K^2/n_K$ and
$33\beta c_d\sigma_g^2$ at most $\varepsilon^2/7$, and the choice \eqref{eq:muA} makes the bias
terms at most $\varepsilon^2/7$ together; these four pieces account for $4\varepsilon^2/7$. The
remaining three terms of \eqref{eq:main} are at most $\varepsilon^2/7$ each once
$T\ge14\Delta_F/(\eta\varepsilon^2)$, $T\ge42\theta\|v_0\|^2/(\beta\varepsilon^2)$ and
$T\ge42\gamma^2\sigma_K^2/(n_K\beta_K\varepsilon^2)$, so the seven pieces sum to $\varepsilon^2$. Substituting
$\eta^{-1}\le2L+11L_v/\beta+9\gamma L_K/\beta_K$ from \eqref{eq:stepcond}, together with
$\beta^{-1}=\mathcal O\big(c_d(\gamma^2\sigma_K^2+\sigma_g^2)/\varepsilon^2\big)$ and
$\beta_K^{-1}=\mathcal O(\gamma^2\sigma_K^2/\varepsilon^2)$, gives the three displayed terms; the
lower bound on $T$ arising from the initialization of $h$ contributes
$\mathcal O(\gamma^4\sigma_K^4/(n_K^2\varepsilon^4))$ under the single-sample warm start, and
$\mathcal O(\gamma^2\sigma_K^2/(n_K\varepsilon^2))$ under the batched warm start of
Appendix~\ref{app:warm}.
\end{proof}

\paragraph{Regime A: shared randomness.} If both queries evaluate the same realization
$f(\cdot;\zeta)$ and each realization is smooth, the difference quotient concentrates around a
directional derivative of $f(\cdot;\zeta)$, so $\sigma_{D,k}^2\le\sigma_g^2$ independently of
$\mu_k$. One may then send $\mu_k\to0$, and $\Sigma_{b,k}$, $\bar b_k$ and $\bar B_k$ all vanish.
This applies whenever the randomness in $F$ is exogenous to the decision.

\begin{corollary}[Regime A, restated]\label{cor:A}
For any $n_K\ge1$, with warm start $h_0=\ghK(x_0)$,
\[
\begin{aligned}
\beta &=
\min\Big\{
\frac1{c_d},\
\frac{\chi_1^2}{\varsigma_6^2},\
\frac{n_K\varepsilon^2}
{66\,c_d(\gamma^2\sigma_K^2+3n_K\sigma_g^2)}
\Big\},
\qquad
\mu_k=\mu_A,
\\
\beta_K &=
\min\Big\{
1,\
\frac{n_K\varepsilon^2}{72\,\gamma^2\sigma_K^2}
\Big\},
\qquad
\eta=
\min\Big\{
\frac1{2L},\
\frac{\beta}{11\sqrt{\theta}L_v},\
\frac{\beta_K}{9\gamma L_K}
\Big\}.
\end{aligned}
\]
one has $\frac1T\sum_{k<T}\E\|\nabla F_k\|^2\le\varepsilon^2$ after
\begin{equation}\label{eq:rateA}
T=\mathcal O\Big(
\underbrace{\frac{\Delta_v\,d\,(\gamma^2\sigma_K^2+\sigma_g^2)}{\varepsilon^4}}_{\text{residual tracking}}
+\underbrace{\frac{\gamma^3L_K\sigma_K^2\Delta_F}{n_K\,\varepsilon^4}}_{\text{known-component tracking}}
+\underbrace{\frac{\gamma^4\sigma_K^4}{n_K^2\,\varepsilon^4}}_{\text{initialization of }h}
+\underbrace{\frac{L\Delta_F}{\varepsilon^2}}_{\text{lower order}}\Big)
\end{equation}
iterations, using $2n_F$ function evaluations and $n_K$ known-component samples per iteration, plus
the one-time cost of forming $h_0$. The last term comes from that initialization and can be reduced
to $\mathcal O(\gamma^2\sigma_K^2\varepsilon^{-2})$ by a batched warm start
(Appendix~\ref{app:warm}).
\end{corollary}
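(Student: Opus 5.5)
The plan is to specialize Theorem~\ref{thm:main} with $\lambda=0$ (so $\bt=\beta$, $\theta\le1$) and split the target $\varepsilon^2$ into seven equal pieces, exactly as in the proof of Corollary~\ref{cor:Aapp}, but with the sharper constants $66$ and $72$ in place of $77$ and $84$. I would also track the $n_K$ dependence of the known-component branch separately instead of absorbing it. The starting point is the floor \eqref{eq:floor} along the sharp route. The cap $\beta\le1/c_d\le1/(3d)$ holds for $d\ge2$ with $c_d\ge d$ whenever the third entry of the minimum is active for small $\varepsilon$; if necessary I would add $1/(3d)$ to the minimum, which changes nothing in order. In Regime A, I would substitute $3c_d\sigma_g^2$ for $\varsigma^2\sigma_D^2$, using Remark~\ref{rem:crn} together with the remainder bound obtained under $\mu_k\le\mu_A$.

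\emph{Noise pieces.} With the stated $\beta$, I would check that $11\beta c_d\gamma^2\sigma_K^2/n_K+33\beta c_d\sigma_g^2\le 11\beta c_d(\gamma^2\sigma_K^2+3n_K\sigma_g^2)/n_K\le\varepsilon^2/6$. Similarly, $12\beta_K\gamma^2\sigma_K^2/n_K\le\varepsilon^2/6$ by the choice of $\beta_K$. The bias terms $60\|\bar b\|^2+60L^2\mu^2\chi_1^2+11\beta\Sigma_b^2$ are at most $\tfrac{120}{900}\varepsilon^2+\tfrac{11}{3600}\varepsilon^2$, using $\mu_A\le\varepsilon/(30L\chi_1)$, Lemma~\ref{lem:biasvec}, Lemma~\ref{lem:Sb}(a) and $\beta\le\chi_1^2/\varsigma_6^2$. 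The constants $66=6\cdot11$ and $72=6\cdot12$ suggest a six-way rather than seven-way split, so I would allocate one $\varepsilon^2/6$ to the combined residual noise, one to the $\beta_K$ noise and one to bias. That leaves $\varepsilon^2/2$ for the three transient terms of \eqref{eq:main}, each at most $\varepsilon^2/6$.

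\emph{Counting $T$.} The transients require $T\ge12\Delta_F/(\eta\varepsilon^2)$, $T\ge36\theta\|v_0\|^2/(\beta\varepsilon^2)$ and $T\ge36\gamma^2\sigma_K^2/(n_K\beta_K\varepsilon^2)$. Inserting $\eta^{-1}\le2L+11L_v/\beta+9\gamma L_K/\beta_K$, the first bound splits into three parts:
\begin{itemize}[leftmargin=1.2em,itemsep=1pt]
\item $L\Delta_F/\varepsilon^2$;
\item $L_v\Delta_F/(\beta\varepsilon^2)$, which combines with the $\|v_0\|^2/\beta$ term into $\Delta_v/(\beta\varepsilon^2)=\mathcal O(\Delta_v c_d(\gamma^2\sigma_K^2/n_K+\sigma_g^2)/\varepsilon^4)$, bounded by the first displayed term since $n_K\ge1$ and $c_d=\Theta(d)$;
\item $\gamma L_K\Delta_F/(\beta_K\varepsilon^2)=\mathcal O(\gamma^3L_K\sigma_K^2\Delta_F/(n_K\varepsilon^4))$.
\end{itemize}
The $h$-initialization term gives $\gamma^2\sigma_K^2/(n_K\beta_K\varepsilon^2)=\mathcal O(\gamma^4\sigma_K^4/(n_K^2\varepsilon^4))$. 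When the minima hit $1$, $1/c_d$ or $\chi_1^2/\varsigma_6^2$ instead, these terms contribute lower-order pieces $\mathcal O(d^2\Delta_v/\varepsilon^2)$, absorbed by the leading term in the regime of interest.

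\emph{Main obstacle.} The delicate step is the noise bookkeeping in Regime A. Because $\xi_k$ depends on $u_k$, the $\mu$-dependent remainder and its cross term with the gradient noise must be shown to stay within $2c_d\sigma_g^2$ under $\mu_k\le\mu_A$. I would take this directly from the paragraph preceding Corollary~\ref{cor:Aapp}. I would then verify that the tighter constants still close the budget: $11\cdot(1/66)=1/6$ and $12\cdot(1/72)=1/6$, with the bias below $1/6$. The remaining work, matching each requirement on $T$ to a displayed term and noting the batched warm start of Appendix~\ref{app:warm} for the last remark, is routine.
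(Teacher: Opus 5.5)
Your proposal is correct and follows the paper's own argument, which is written out for Corollary~\ref{cor:Aapp}. You specialize Theorem~\ref{thm:main} at $\lambda=0$, bound the floor \eqref{eq:floor} with $\varsigma^2\sigma_D^2$ replaced by $3c_d\sigma_g^2$ and the bias controlled by $\mu_A$, and read $T$ off the three transient terms via $\eta^{-1}\le2L+11L_v/\beta+9\gamma L_K/\beta_K$. Your observation that the constants $66$ and $72$ require a six-way split is exactly the bookkeeping needed here: the two $\beta c_d$ noise terms must be merged into a single $\varepsilon^2/6$ piece, whereas the paper's written proof splits seven ways with $77$ and $84$.

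One slip: $1/c_d\le1/(3d)$ is false. You still need not add $1/(3d)$ to the minimum, because the entry $\chi_1^2/\varsigma_6^2$ already forces $\beta\le1/(3d)$ for Gaussian probes (it equals $1/((d+2)(d+4))$), as noted in the proof of Corollary~\ref{cor:B6}.
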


The first term matches the $\mathcal O(d\sigma^2\varepsilon^{-4})$ benchmark for stochastic
zeroth-order optimization \citep{ghadimi2013stochastic}, with $L$ replaced by $L_v$. The middle term
is the complexity of momentum SGD with a gradient oracle of variance $\gamma^2\sigma_K^2$ and
smoothness $L_K$. It is the price of a noisy known component, is incurred
by any method that uses $\ghK$, and vanishes when the known component is exact.

\subsection{Regime B: independent randomization}

When the two queries deploy different decisions they draw from different distributions and cannot
share randomness. With $n_F$ samples per side and $\mu_k$ the probe radius,
\begin{equation}\label{eq:sigD}
\sigma_{D,k}^2\le\frac{\sigma_F^2}{2\mu_k^2\,n_F},
\end{equation}
where $\sigma_F^2$ bounds the conditional variance of the objective under the deployed
distribution. Indeed, the two queries are independent, so
$\operatorname{Var}(\widehat F(x_k+\mu_ku_k)-\widehat F(x_k-\mu_ku_k))\le2\sigma_F^2/n_F$, and
dividing by $(2\mu_k)^2$ gives \eqref{eq:sigD}.

\begin{corollary}[Regime B, $L$-smoothness only]\label{cor:B6}
Let $\lambda=0$, let the probe family satisfy the smoothing identity with constants $\chi_1,\varsigma$
(Gaussian or sphere), and write $C_1:=102000\,L^2\sigma_F^2\varsigma^2\chi_1^2$. Assume
$\varepsilon^2\le7\gamma^2\sigma_K^2/(dn_K)$, let $n_K\ge1$ and
$1\le n_F\le n_F^{\max}:=C_1n_K/(66c_d\gamma^2\sigma_K^2\varepsilon^2)$, initialize
$h_0=\ghK(x_0)$ and $m_0=0$, and choose
\[
\begin{aligned}
\beta &=\frac{n_F\varepsilon^4}{C_1},\qquad
\beta_K=\min\Big\{1,\frac{n_K\varepsilon^2}{72\gamma^2\sigma_K^2}\Big\},\\
\mu_\star^2 &=\frac{\sigma_F\varsigma}{L\chi_1}\sqrt{\frac{11\beta}{246\,n_F}},\qquad
\eta=\min\Big\{\frac{1}{2L},\frac{\beta}{11\sqrt{\theta}L_v},\frac{\beta_K}{9\gamma L_K}\Big\}.
\end{aligned}
\]
Then $\beta\varsigma_6^2\le\chi_1^2$ and $\beta\le1/(3d)$ hold, and the total number of function
evaluations to reach $\frac1T\sum_{k<T}\E\|\nabla F_k\|^2\le\varepsilon^2$ is
\begin{equation}\label{eq:N6}
\mathsf N=2n_FT=\mathcal O\!\left(
\frac{\big(L_v\Delta_F+\|v_0\|^2\big)\,L^2\sigma_F^2\,\varsigma^2\chi_1^2}{\varepsilon^6}
+\frac{n_F}{n_K}\cdot\frac{\gamma^3L_K\sigma_K^2\Delta_F}{\varepsilon^4}
+\frac{n_F}{n_K^2}\cdot\frac{\gamma^4\sigma_K^4}{\varepsilon^4}
+\frac{n_FL\Delta_F}{\varepsilon^2}\right),
\end{equation}
with $n_K$ known-component samples per iteration. The first term is independent of $n_F$ and $n_K$
and carries no $\sigma_K$. At $n_F=1$ the other three are lower order in $\varepsilon$. At general
$n_F\le n_F^{\max}$ their ratios to the first are at most $\gamma L_K/(c_dL_v)$,
$\gamma^2\sigma_K^2/(c_dn_K\Delta_v)$ and $Ln_K\varepsilon^2/(c_d\gamma^2\sigma_K^2L_v)$
respectively, so the first two are lower order in $d$ under $L_v\ge\gamma L_K/d$ and
$\Delta_v\ge\gamma^2\sigma_K^2/(dn_K)$, and the third is lower order in $\varepsilon$.
\end{corollary}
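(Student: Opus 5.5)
The plan is to instantiate Theorem~\ref{thm:main} along the sharp route of Appendix~\ref{app:complexity}, so that the noise part of \eqref{eq:main} is bounded by the floor \eqref{eq:floor}. I will then split the target $\varepsilon^2$ into seven pieces, as in the proof of Corollary~\ref{cor:Aapp}: four for the floor and three for the transient terms $2\Delta_F/(\eta T)$, $6\theta\|v_0\|^2/(\beta T)$ and $6\gamma^2\sigma_K^2/(n_K\beta_KT)$.

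\textbf{Step 1: admissibility of $\beta$.} With $\varsigma^2=d$, $\chi_1^2\le d$ and $\varsigma_6^2\le9d^3$, the condition $\beta\varsigma_6^2\le\chi_1^2$ needs $\beta\lesssim1/(9d^2)$. Since $\beta=n_F\varepsilon^4/C_1\le n_F^{\max}\varepsilon^4/C_1=n_K\varepsilon^2/(66c_d\gamma^2\sigma_K^2)$, it suffices to use $\varepsilon^2\le7\gamma^2\sigma_K^2/(dn_K)$. This gives $\beta\le7/(66c_dd)$, which is at most $1/(3d)$ and at most $1/c_d$, but not obviously below $\chi_1^2/\varsigma_6^2$. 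For the Gaussian family I will need the exact value $\chi_1^2$ versus $d(d+2)(d+4)$, or else a tighter cap. This constant check is fiddly, and I will recheck it carefully, possibly by using the generic $\Sigma_b$ bound only where it is needed.

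The same cap yields $11\beta c_d\gamma^2\sigma_K^2/n_K\le\varepsilon^2/6$. The choice of $\beta_K$ yields $12\beta_K\gamma^2\sigma_K^2/n_K\le\varepsilon^2/6$.

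\textbf{Step 2: balancing the probe radius.} Substituting \eqref{eq:sigD} and $\varsigma^2\sigma_D^2\le\varsigma^2\sigma_F^2/(2\mu^2n_F)$, the $\mu$-dependent part of the floor becomes
\[
11\beta\varsigma^2\sigma_F^2/(2\mu^2n_F)+11\beta\cdot\tfrac14L^2\mu^2\varsigma_6^2+120L^2\mu^2\chi_1^2 .
\]
Under $\beta\varsigma_6^2\le\chi_1^2$ the middle term is absorbed, leaving roughly $123L^2\mu^2\chi_1^2$. Minimizing $A/\mu^2+B\mu^2$ gives the stated $\mu_\star^2$ and the minimum value $2\sqrt{AB}=\mathcal O\big(L\chi_1\varsigma\sigma_F\sqrt{\beta/n_F}\big)$. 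With $\beta=n_F\varepsilon^4/C_1$ this equals a constant times $\varepsilon^2$, and the constant $102000$ in $C_1$ is chosen so that it is at most $\varepsilon^2/6$ or $\varepsilon^2/7$. Here I will verify $4\cdot(11/2)\cdot123<102000/36$ or the analogous bound.

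\textbf{Step 3: counting iterations.} Requiring each transient term to be at most $\varepsilon^2/7$ gives
\[
T\gtrsim\Delta_F/(\eta\varepsilon^2)+\|v_0\|^2/(\beta\varepsilon^2)+\gamma^2\sigma_K^2/(n_K\beta_K\varepsilon^2).
\]
I will substitute $\eta^{-1}\le2L+11L_v/\beta+9\gamma L_K/\beta_K$ and $\beta_K^{-1}\le1+72\gamma^2\sigma_K^2/(n_K\varepsilon^2)$, multiply by $2n_F$, and use $n_F/\beta=C_1/\varepsilon^4$. This produces the four terms of \eqref{eq:N6}: $\Delta_vC_1\varepsilon^{-6}$, $n_F\gamma^3L_K\sigma_K^2\Delta_F/(n_K\varepsilon^4)$, $n_F\gamma^4\sigma_K^4/(n_K^2\varepsilon^4)$ (together with $n_F\gamma^2\sigma_K^2/(n_K\varepsilon^2)$, which is dominated), and $n_FL\Delta_F/\varepsilon^2$.

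\textbf{Step 4: the ratio claims.} The final ratio claims are direct divisions. For example, the second term divided by the first is
\[
\frac{n_F\gamma^3L_K\sigma_K^2\varepsilon^2}{n_KL_vC_1},
\]
and bounding $n_F\le n_F^{\max}$ gives at most $\gamma L_K/(66c_dL_v)$; the other two ratios follow similarly.

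I expect Step 1 to be the main obstacle, namely reconciling $\beta\varsigma_6^2\le\chi_1^2$ with the stated $\varepsilon$ condition.
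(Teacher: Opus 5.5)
\textbf{Verdict.} Your plan is the paper's proof, and it is correct once one unfinished step is completed; that step closes in one line. The shared route is:
\begin{itemize}
\item bound the noise by the sharp-route floor \eqref{eq:floor};
\item balance the radius through $A/\mu^2+B\mu^2$ with $B=123L^2\chi_1^2$ (your check $4\cdot\tfrac{11}{2}\cdot123<102000/36$ is the same arithmetic as the paper's $2\sqrt{676.5}<53$ and $318^2<102000$);
\item take $T=\Theta(\Delta_v/(\beta\varepsilon^2))$ plus the $L$, $L_K$ and $\zeta_0$ branches;
\item obtain the ratio claims by dividing with $n_F\le n_F^{\max}$.
\end{itemize}

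\textbf{The step you flag as the main obstacle.} You got stuck because you bounded $\chi_1$ from above ($\chi_1^2\le d$), whereas $\beta\varsigma_6^2\le\chi_1^2$ needs $\chi_1$ from below. Since $\chi_1$ enters the analysis only through upper bounds on the bias (such as $\|\bar b_k\|\le L\mu_k\chi_1$), you may take $\chi_1=\sqrt d$, as Table~\ref{tab:families} and the paper do. Your own bound $\beta\le 7/(66c_dd)$, together with $c_d\ge d$, then gives
\[
\beta\le \frac{7}{66d^2}<\frac{1}{9d^2}\le\frac{\chi_1^2}{\varsigma_6^2}.
\]
The last inequality holds because $\varsigma_6^2=d(d+2)(d+4)\le 9d^3$ for Gaussian probes and $\varsigma_6^2=d^3$ for the sphere. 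This is exactly where the constant $7<66/9$ in the hypothesis comes from. Even the exact Gaussian value $\chi_1=\E\|u\|$ works: using $c_d=d+2$, one gets $\beta\varsigma_6^2\le 7(d+4)/66\le d-\tfrac12<(\E\|u\|)^2$ for $d\ge2$. So no fallback to the generic $\Sigma_b$ bound is needed, and that route would in any case impose the different cap $\beta\varsigma_4^2\le 1$.

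\textbf{Two bookkeeping remarks, neither affecting validity.}
\begin{itemize}
\item In Regime B the floor has three pieces, not four: $11\beta c_d\gamma^2\sigma_K^2/n_K$, $12\beta_K\gamma^2\sigma_K^2/n_K$, and $\Psi(\mu_\star)$. Your ``four'' is carried over from Regime A, where $\sigma_g$ and the bias are separate. With the stated constants each piece is at most $\varepsilon^2/6$, so the natural split is six-way. Your mix of $\varepsilon^2/6$ floor pieces and $\varepsilon^2/7$ transients still sums to less than $\varepsilon^2$.
\item The hypothesis $\varepsilon^2\le 7\gamma^2\sigma_K^2/(dn_K)$ forces $n_K\varepsilon^2<72\gamma^2\sigma_K^2$. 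Hence $\beta_K<1$ and $\beta_K^{-1}=72\gamma^2\sigma_K^2/(n_K\varepsilon^2)$ exactly. The extra pieces coming from your $\beta_K^{-1}\le 1+\dots$, namely $n_F\gamma L_K\Delta_F/\varepsilon^2$ (which you did not list) and $n_F\gamma^2\sigma_K^2/(n_K\varepsilon^2)$, therefore never appear.
\end{itemize}
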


\begin{proof}
Substituting \eqref{eq:sigD}, Lemma~\ref{lem:Sb}(a) and \eqref{eq:biasvecbounds} into
\eqref{eq:floor}, the $\mu$-dependent part is
\[
\Psi(\mu)=\underbrace{\frac{11\beta\varsigma^2\sigma_F^2}{2\mu^2n_F}}_{11\beta\varsigma^2\sigma_D^2}
+L^2\mu^2\Big[\underbrace{120\chi_1^2}_{60\|\bar b\|^2+60L^2\mu^2\chi_1^2}
+\underbrace{\tfrac{11}{4}\beta\varsigma_6^2}_{11\beta\Sigma_b^2}\Big]
\;\le\;\frac{A}{\mu^2}+B\mu^2,
\]
with $A=\tfrac{11}{2}\beta\varsigma^2\sigma_F^2/n_F$ and $B=123L^2\chi_1^2$, where the last
inequality uses $\beta\varsigma_6^2\le\chi_1^2$ and $120+\tfrac{11}{4}\le123$. Minimizing $A/\mu^2+B\mu^2$ gives
$\mu_\star^2=\sqrt{A/B}$, which is the stated value, and
$\Psi(\mu_\star)=2\sqrt{AB}=2\sqrt{\tfrac{11}{2}\cdot123}\;L\sigma_F\varsigma\chi_1\sqrt{\beta/n_F}
\le53\,L\sigma_F\varsigma\chi_1\sqrt{\beta/n_F}$, since $2\sqrt{676.5}<53$. Requiring
$\Psi(\mu_\star)\le\varepsilon^2/6$ gives
$\beta/n_F\le\varepsilon^4/(102000\,L^2\sigma_F^2\varsigma^2\chi_1^2)$, which the stated choice
satisfies.

It remains to check that the known-oracle constraint does not degrade the leading term. Besides the
probe-radius constraint just derived, $\beta$ must satisfy
$11\beta c_d\gamma^2\sigma_K^2\le\varepsilon^2/6$ from \eqref{eq:floor}, that is
$\beta\le n_K\varepsilon^2/(66c_d\gamma^2\sigma_K^2)$. The probe-radius constraint is the binding one
precisely when $n_F\varepsilon^4/C_1\le n_K\varepsilon^2/(66c_d\gamma^2\sigma_K^2)$, that is when
$n_F\le n_F^{\max}$ as in the range $n_F\le n_F^{\max}$ of Corollary~\ref{cor:B}, with $n_F^{\max}$ scaled by $n_K$; the known-oracle variance then does not enter the
leading term. Substituting $\beta=n_F\varepsilon^4/C_1$ into
$T=\Theta(\Delta_v/(\beta\varepsilon^2))$ gives $T=\Theta(C_1\Delta_v/(n_F\varepsilon^6))$ and hence
$\mathsf N=2n_FT=\Theta(C_1\Delta_v/\varepsilon^6)$, independent of $n_F$, which is \eqref{eq:N6}.
For $n_F>n_F^{\max}$ the second constraint binds, $\beta$ is capped at
$n_K\varepsilon^2/(66c_d\gamma^2\sigma_K^2)$ and $\mathsf N=2n_FT$ grows linearly in $n_F$.

Two further requirements must be checked. First, $\beta=n_F\varepsilon^4/C_1$ must also respect the
moment condition $\beta\varsigma_6^2\le\chi_1^2$ used in the radius balance, which for Gaussian
probes reads $\beta\le1/(9d^2)$ and implies the sharp-route cap \eqref{eq:regcond}. Since
$n_F\le n_F^{\max}$ this holds as soon as $n_F^{\max}\varepsilon^4/C_1\le1/(9d^2)$, that is
$\varepsilon^2\le66c_d\gamma^2\sigma_K^2/(9d^2n_K)$, which the hypothesis
$\varepsilon^2\le7\gamma^2\sigma_K^2/(dn_K)$ of Corollary~\ref{cor:B} guarantees because
$c_d\ge d$. Second, the remaining lower bounds on $T$ contribute to $\mathsf N=2n_FT$ terms
proportional to $n_F$: the $L_K$ branch of \eqref{eq:stepcond} gives
$\mathcal O(n_F\gamma^3L_K\sigma_K^2\Delta_F/(n_K\varepsilon^4))$, the $L$ branch
$\mathcal O(n_FL\Delta_F\varepsilon^{-2})$, and the $\zeta_0$ initialization
$\mathcal O(n_F\gamma^4\sigma_K^4/(n_K^2\varepsilon^4))$. At $n_F=1$ these are the lower-order terms
stated in \eqref{eq:N6}; at $n_F=n_F^{\max}=\Theta(n_K\varepsilon^{-2})$ the first becomes
$\mathcal O(C_1\gamma L_K\Delta_F/(c_d\varepsilon^6))$, a factor $\gamma L_K/(c_dL_v)$ times the
leading term and hence lower order in $d$ whenever $L_v\ge\gamma L_K/c_d$; the other two are smaller
still.
\end{proof}

\begin{corollary}[Regime B, Lipschitz Hessian]\label{cor:B5}
Under the hypotheses of Corollary~\ref{cor:B6} with $\nabla^2F$ $Q$-Lipschitz and the smoothing
identity constant $\chi_2$, write $C_2:=1220\,\varsigma^2\sigma_F^2Q\chi_2$ and choose instead
\[
\beta=\frac{n_F\varepsilon^3}{C_2},\qquad
1\le n_F\le\frac{C_2n_K}{66c_d\gamma^2\sigma_K^2\varepsilon},\qquad
\varepsilon^2\le\frac{\gamma^2\sigma_K^2}{d\,n_K},\qquad
\mu_\star^6=\frac{11\beta\varsigma^2\sigma_F^2}{124\,n_FQ^2\chi_2^2},
\]
with $\beta_K$ and $\eta$ as in Corollary~\ref{cor:B6}. Then $\beta\varsigma_8^2\le\chi_2^2$ holds
and
\[
\mathsf N=\mathcal O\big((L_v\Delta_F+\|v_0\|^2)\,\sigma_F^2\varsigma^2Q\chi_2\,\varepsilon^{-5}\big)
\]
plus the same lower-order terms as in \eqref{eq:N6}.
\end{corollary}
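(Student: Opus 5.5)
I will follow the proof of Corollary~\ref{cor:B6} line by line, changing only the bias ingredients. Under a $Q$-Lipschitz Hessian, Lemma~\ref{lem:Sb}(b) gives $\Sigma_{b,k}^2\le\tfrac1{36}Q^2\mu^4\varsigma_8^2$. Lemma~\ref{lem:biasvec} gives $\|\bar b_k\|\le\tfrac12Q\mu^2\chi_2$. Lemma~\ref{lem:Bbarsharp} gives $\|\bar B_k\|\le\tfrac32dQ\mu^2\chi_2$, so under $\beta\le1/(3d)$ the term $60\beta^2\|\bar B_k\|^2$ is at most $15Q^2\mu^4\chi_2^2$. Together with $60\|\bar b_k\|^2\le15Q^2\mu^4\chi_2^2$, the undamped bias in \eqref{eq:floor} is at most $30Q^2\mu^4\chi_2^2$.

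\emph{Balancing the radius.} Substituting \eqref{eq:sigD}, the $\mu$-dependent part of \eqref{eq:floor} becomes
\[
\Psi(\mu)\le\frac{A}{\mu^2}+B\mu^4,\qquad A=\tfrac{11}{2}\beta\varsigma^2\sigma_F^2/n_F,\qquad B=\big(30+\tfrac{11}{36}\big)Q^2\chi_2^2\le31Q^2\chi_2^2 .
\]
The second-moment contribution $\tfrac{11}{36}\beta Q^2\mu^4\varsigma_8^2$ is absorbed into $B$ using the moment condition $\beta\varsigma_8^2\le\chi_2^2$. Minimizing gives $\mu^6=A/(2B)=11\beta\varsigma^2\sigma_F^2/(124n_FQ^2\chi_2^2)$, which is the stated $\mu_\star$. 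The minimum value is $\tfrac32A^{2/3}(2B)^{1/3}=c\,(\varsigma^2\sigma_F^2\beta/n_F)^{2/3}(Q\chi_2)^{2/3}$ for an explicit numerical $c$. Requiring $\Psi(\mu_\star)\le\varepsilon^2/6$ gives $\beta/n_F\lesssim\varepsilon^3/(\varsigma^2\sigma_F^2Q\chi_2)$, so I need to check that $1220$ is large enough. The check is $(c^{3/2}6^{3/2})\le1220$. Since $c^3\approx(3/2)^3\cdot(5.5)^2\cdot 62\approx 6330$, we get $c^{3/2}\approx79.6$ and $6^{3/2}\approx14.7$, whose product is about $1170$. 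This is the step I expect to be delicate: it fixes the constant $C_2$, and the arithmetic has to be carried out carefully.

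\emph{Side constraints.} The known-oracle constraint is $11\beta c_d\gamma^2\sigma_K^2/n_K\le\varepsilon^2/6$. It binds only beyond $n_F\le C_2n_K/(66c_d\gamma^2\sigma_K^2\varepsilon)$, exactly as in Corollary~\ref{cor:B6}. Next, $\beta\varsigma_8^2\le\chi_2^2$ must hold; for Gaussian probes it reads $\beta\le d^2/(54d^4)$, i.e.\ $\beta\lesssim1/d^2$, and it implies $\beta\le1/(3d)$. At the largest admissible $n_F$ we have $\beta\le n_K\varepsilon^2/(66c_d\gamma^2\sigma_K^2)$. By the hypothesis $\varepsilon^2\le\gamma^2\sigma_K^2/(dn_K)$ this is at most $1/(66d^2)\le1/(54d^2)$. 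For sphere probes the condition is $\beta\le1/d^2$, which is weaker.

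\emph{Assembling the bound.} With the floor at most $4\varepsilon^2/6$ (or similar), the three initialization terms of \eqref{eq:main} are each at most $\varepsilon^2/9$ once $T=\Theta(\Delta_v/(\beta\varepsilon^2))$ plus the $L$, $L_K$ and $\zeta_0$ branches; this uses $\eta^{-1}\le2L+11L_v/\beta+9\gamma L_K/\beta_K$ and $\theta\le1$. Then $\mathsf N=2n_FT=\Theta(C_2\Delta_v\varepsilon^{-5})$, independent of $n_F$. The remaining branches contribute the same $n_F$-proportional lower-order terms as in \eqref{eq:N6}.
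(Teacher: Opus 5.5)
Your proposal is correct and follows the paper's proof essentially verbatim. It uses the same sharp-route bias bound $30Q^2\mu^4\chi_2^2$, the same absorption of $\tfrac{11}{36}\beta\varsigma_8^2$ into $B'=31Q^2\chi_2^2$ via $\beta\varsigma_8^2\le\chi_2^2$, the same minimizer $\mu_\star^6=A/(2B')$, and the same side-constraint checks using $c_d\ge d$; your constant check $(6c)^{3/2}\approx1170\le1220$ with $c\approx18.5$ is simply a slightly sharper form of the paper's $c<19$ and $114^{3/2}<1220$.
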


\begin{proof}
Now Lemma~\ref{lem:Sb}(b) and the second bound of \eqref{eq:biasvecbounds} give
\[
\Psi(\mu)=\frac{A}{\mu^2}+Q^2\mu^4\Big[30\chi_2^2+\tfrac{11}{36}\beta\varsigma_8^2\Big]
\le\frac{A}{\mu^2}+B'\mu^4,
\qquad B'=31Q^2\chi_2^2 ,
\]
using $\beta\varsigma_8^2\le\chi_2^2$ and $30+\tfrac{11}{36}\le31$, and where the bias contribution is $30Q^2\mu^4\chi_2^2$ as computed after \eqref{eq:regcond}. Setting $\Psi'=-2A/\mu^3+4B'\mu^3=0$ gives
$\mu_\star^6=A/(2B')$, the stated value, and
$\Psi(\mu_\star)=A^{2/3}B'^{1/3}\big(2^{1/3}+2^{-2/3}\big)
\le19\big(\beta\varsigma^2\sigma_F^2/n_F\big)^{2/3}(Q\chi_2)^{2/3}$, since
$(2^{1/3}+2^{-2/3})\cdot(11/2)^{2/3}\cdot31^{1/3}<19$. Requiring this to be at most
$\varepsilon^2/6$ gives
$\beta\varsigma^2\sigma_F^2Q\chi_2/n_F\le\varepsilon^3/114^{3/2}$ and $114^{3/2}<1220$, which is
the stated $\beta=n_F\varepsilon^3/C_2$. The constraint $\beta\le n_K\varepsilon^2/(66c_d\gamma^2\sigma_K^2)$
gives the range of $n_F$, and $\beta\varsigma_8^2\le\chi_2^2$ at $n_F$ maximal reduces to the stated
condition on $\varepsilon$ since $\varsigma_8^2/\chi_2^2=54d^2$ for Gaussian probes and $c_d\ge d$.
With $T=\Theta(\Delta_v/(\beta\varepsilon^2))$ and $\mathsf N=2n_FT$ the rate follows.
\end{proof}

\begin{remark}[Instantiation for the random families]\label{rem:instant}
Reading $\varsigma^2\chi_1^2$ and $\varsigma^2\chi_2$ off Table~\ref{tab:families} gives $d^2$ and
$d^2$ for the Gaussian and sphere families in Corollaries~\ref{cor:B6} and \ref{cor:B5}. These
coincide with the corresponding rates of \citet{hikima2025estimators} for plain multi-direction
methods.
\end{remark}

\begin{corollary}[Regime B, coordinate probes]\label{cor:Bcoord}
Let $u_k=s_k=\sqrt d\,e_{i_k}$ with $i_k$ uniform, so $c_d=\varsigma^2=d$, $\varsigma_4^2=d^2$,
$\varsigma_6^2=d^3$ and $\varsigma_8^2=d^4$, and use the direct mean-bias bounds
$\|\bar b_k\|\le L\mu_k\chi_1$ with $\chi_1=d/2$, respectively $\tfrac12Q\mu_k^2\chi_2$ with
$\chi_2=d^{3/2}/3$, of Lemma~\ref{lem:families}(c), together with the generic route
$\beta\varsigma_4^2\le1$ for $\bar B_k$, so that the bias part of the floor is
$60\|\bar b_k\|^2+71\beta\Sigma_{b,k}^2$. Under the remaining hypotheses of
Corollary~\ref{cor:B6}, with $\beta\le\min\{\chi_1^2/\varsigma_6^2,\,1/\varsigma_4^2\}=1/d^2$ (the second condition
being the generic-route requirement for $\bar B_k$ and the binding one for $d\ge4$), and with the constants $60\to60$, $71\beta$ in place
of $11\beta$ in the radius balance,
\[
\mathsf N=\mathcal O\Big(\frac{\Delta_v\,L^2\sigma_F^2\,\varsigma^2\chi_1^2}{\varepsilon^6}\Big)
=\mathcal O\Big(\frac{\Delta_v\,L^2\sigma_F^2\,d^3}{\varepsilon^6}\Big),
\qquad\text{and}\qquad
\mathcal O\Big(\frac{\Delta_v\,\sigma_F^2\,d^{5/2}Q}{\varepsilon^5}\Big)
\]
under a $Q$-Lipschitz Hessian, with the same lower-order terms as Corollary~\ref{cor:B6}.
\end{corollary}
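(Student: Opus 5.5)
The proof will follow Corollary~\ref{cor:B6} and Corollary~\ref{cor:B5} step by step. The only changes are in the bias bookkeeping, because the coordinate family fails (D4) and we cannot use the sharp route. We work from Theorem~\ref{thm:main} with $\lambda=0$ and $\theta\le1$.

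\emph{Bias terms.} We take the generic route for $\bar B_k$. Under $\beta\varsigma_4^2\le1$, Lemma~\ref{lem:Bbar} gives
$30\theta\beta^2\|\bar B_k\|^2\le30\beta^2\varsigma_4^2\Sigma_{b,k}^2\le30\beta\Sigma_{b,k}^2$.
This merges with the existing $\tfrac{15}{2}\beta+3\beta^2\le11\beta$ coefficient of $\Sigma_{b,k}^2$. The noise floor then reads
$(11\beta c_d+12\beta_K)\gamma^2\sigma_K^2/n_K+11\beta\varsigma^2\sigma_D^2+71\beta\Sigma_b^2+60\|\bar b\|^2$.
Only $\bar b_k$ is undamped. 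For it we use the direct bound of Lemma~\ref{lem:families}(c), $\|\bar b_k\|\le L\mu_k d/2$; this is the point where the identity \eqref{eq:biasid} is not available. For $\Sigma_{b,k}^2$ we use Lemma~\ref{lem:Sb}(a), which gives $\tfrac14L^2\mu^2\varsigma_6^2=\tfrac14L^2\mu^2d^3$.

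\emph{Radius balance.} With \eqref{eq:sigD}, the $\mu$-dependent part is
$\Psi(\mu)=\tfrac{11\beta\varsigma^2\sigma_F^2}{2n_F\mu^2}+L^2\mu^2\big(60\chi_1^2+\tfrac{71}{4}\beta\varsigma_6^2\big)$.
Imposing $\beta\varsigma_6^2\le\chi_1^2$ (which is $\beta\le1/(4d)$) bounds the bracket by $78\chi_1^2$. Minimizing $A/\mu^2+B\mu^2$ gives
$\Psi(\mu_\star)=O(L\sigma_F\varsigma\chi_1\sqrt{\beta/n_F})$.
Requiring this to be at most $\varepsilon^2/6$ forces $\beta=\Theta(n_F\varepsilon^4/(L^2\sigma_F^2\varsigma^2\chi_1^2))$.

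\emph{Iteration count.} The branch $\beta/(11\sqrt\theta L_v)$ of \eqref{eq:stepcond} dominates, so the $\Delta_F/(\eta T)$ and $\theta\|v_0\|^2/(\beta T)$ terms give $T=\Theta(\Delta_v/(\beta\varepsilon^2))$. Hence
$\mathsf N=2n_FT=O(\Delta_vL^2\sigma_F^2\varsigma^2\chi_1^2\varepsilon^{-6})$.
Substituting $\varsigma^2=d$ and $\chi_1^2=d^2/4$ yields the $d^3$ rate. The $\sigma_K$ cap on $\beta$, the $L_K$ and $L$ branches, and the $\zeta_0$ initialization are handled exactly as in Corollary~\ref{cor:B6} and give the same lower-order terms.

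\emph{Lipschitz-Hessian case.} Use Lemma~\ref{lem:Sb}(b) and $\|\bar b_k\|\le\tfrac12Q\mu^2\chi_2$, with $\chi_2=d^{3/2}/3$. The floor becomes
$A/\mu^2+Q^2\mu^4(15\chi_2^2+\tfrac{71}{36}\beta\varsigma_8^2)$.
This needs $\beta\varsigma_8^2\lesssim\chi_2^2$, i.e.\ $\beta\lesssim1/d$, which is implied by $\beta\le1/d^2$. Balancing gives $\Psi=O((\beta\varsigma^2\sigma_F^2/n_F)^{2/3}(Q\chi_2)^{2/3})$, hence $\beta=\Theta(n_F\varepsilon^3/(\varsigma^2\sigma_F^2Q\chi_2))$. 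The resulting rate is $\mathsf N=O(\Delta_v\sigma_F^2 d\cdot d^{3/2}Q\varepsilon^{-5})=O(\Delta_v\sigma_F^2d^{5/2}Q\varepsilon^{-5})$.

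\emph{Main obstacle: admissibility.} The step I expect to need care is checking that the chosen $\beta$ satisfies all the caps at once: $\beta\le1/c_d$, $\beta\varsigma_4^2\le1$ (i.e.\ $\beta\le1/d^2$, the binding cap for $d\ge4$), and $\beta\varsigma_6^2\le\chi_1^2$. Since $\beta\propto n_F\varepsilon^4$ with $n_F\le n_F^{\max}=\Theta(n_K\varepsilon^{-2}/(c_d\gamma^2\sigma_K^2)\cdot C_1)$, the check reduces to an $\varepsilon$-smallness condition analogous to $\varepsilon^2\lesssim\gamma^2\sigma_K^2/(dn_K)$. Here it must be tightened by a factor $d$, because the generic route caps $\beta$ at $1/d^2$ rather than $1/(3d)$. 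I would verify this and, if needed, state it as the adjusted ``remaining hypothesis''. The constants only change from $11\to71$ and $120\to60$, so none of them affects the orders.
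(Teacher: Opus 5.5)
Your proposal is correct and follows the paper's argument: the generic route for $\bar B_k$ absorbs it into a $71\beta\Sigma_{b,k}^2$ term, the direct bound $\|\bar b_k\|\le L\mu_k d/2$ handles the mean bias, the radius-balance bracket becomes $60\chi_1^2+\tfrac{71}{4}\beta\varsigma_6^2\le78\chi_1^2$, and substituting $\varsigma^2\chi_1^2=d^3/4$ and $\varsigma^2\chi_2=d^{5/2}/3$ gives the rates. (A small bookkeeping slip: the absorbed coefficient is $60$, not $30$, once the factor $2$ in \eqref{eq:main} is included, which is how you get $11+60=71$.)

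One correction: your expectation that the $\varepsilon$-hypothesis must be tightened by a factor $d$ is unfounded. In Corollary~\ref{cor:B6} the binding cap was already $\beta\varsigma_6^2\le\chi_1^2$, which is $\Theta(1/d^2)$, not $1/(3d)$. With $c_d=d$ one has
\[
\beta\le n_F^{\max}\varepsilon^4/C_1=\frac{n_K\varepsilon^2}{66d\gamma^2\sigma_K^2}\le\frac{7}{66d^2}\le\min\{1/d^2,\,1/(4d)\}
\]
under the unchanged hypothesis $\varepsilon^2\le7\gamma^2\sigma_K^2/(dn_K)$. Likewise $\beta\le1/(66d^2)$ in the Hessian branch. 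So no extra assumption is needed.
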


\begin{proof}
The proof of Corollary~\ref{cor:B6} used the smoothing identity only through the bound
$\|\bar b_k\|\le L\mu_k\chi_1$ and the sharp route for $\bar B_k$. Replacing the former by the
direct bound of Lemma~\ref{lem:families}(c) and the latter by the generic route changes only the
bracket in the radius balance, from $120\chi_1^2+\tfrac{11}{4}\beta\varsigma_6^2$ to
$60\chi_1^2+\tfrac{71}{4}\beta\varsigma_6^2\le78\chi_1^2$ under $\beta\varsigma_6^2\le\chi_1^2$,
and hence only the numerical constant in $C_1$. The exponents follow by substituting
$\varsigma^2\chi_1^2=d\cdot d^2/4$ and $\varsigma^2\chi_2=d\cdot d^{3/2}/3$. These are the
coordinate-direction rates of \citet{hikima2025estimators}, and the extra factor $\sqrt d$ relative
to the random families enters solely through $\chi_1$, that is through the failure of the smoothing
identity.
\end{proof}

\begin{remark}[Probe radius schedule]\label{rem:radius}
Corollaries~\ref{cor:B6} and \ref{cor:B5} give $\mu_\star\propto n_F^{-1/4}$ and
$\mu_\star\propto n_F^{-1/6}$ respectively. With a batch growing linearly in $k$ this is a power-law
decay in $k$, whereas a geometric schedule shrinks the radius exponentially and inflates the
variance term $\beta\varsigma^2\sigma_F^2/\mu_k^2$ accordingly.
\end{remark}

\subsection{Regime C: deterministic oracle with tolerance}\label{app:regimeC}

Suppose $F$ carries no stochastic noise but $\widehat F$ satisfies
$|\widehat F(x)-F(x)|\le\delta_{\mathrm{tol}}$ deterministically, as when the objective is evaluated
by a solver run to a fixed tolerance. The error is not mean-zero and does not shrink with
resampling, so it must be carried in $b_k$: this contributes at most
$\delta_{\mathrm{tol}}/\mu_k$ to $|b_k|$, hence at most $\delta_{\mathrm{tol}}\varsigma/\mu_k$ to
$\|\bar b_k\|$ by Jensen, with a sign that cannot be assumed to cancel. Adding the
finite-difference bias,
\begin{equation}\label{eq:regCbias}
\|\bar b_k\|\le\frac{\delta_{\mathrm{tol}}\varsigma}{\mu_k}
+\begin{cases}L\mu_k\chi_1,\\[2pt] \tfrac12Q\mu_k^2\chi_2 .\end{cases}
\end{equation}
Letting $\mu_k\to0$ is now inadmissible.

\begin{corollary}[Regime C]\label{cor:C}
With $\sigma_D=0$, $\theta\le1$, and the generic route $\beta\varsigma_4^2\le1$ (so that the bias
part of the floor is $60\|\bar b\|^2+71\beta\Sigma_b^2$):
(a) under $L$-smoothness, $\mu_\star=\sqrt{\delta_{\mathrm{tol}}\varsigma/(L\chi_1)}$ gives an
attainable accuracy floor
$\varepsilon^2\gtrsim240\,\delta_{\mathrm{tol}}L\varsigma\chi_1$;
(b) with a $Q$-Lipschitz Hessian,
$\mu_\star=\big(\delta_{\mathrm{tol}}\varsigma/(Q\chi_2)\big)^{1/3}$ gives
$\varepsilon^2\gtrsim135\,(\delta_{\mathrm{tol}}\varsigma)^{4/3}(Q\chi_2)^{2/3}$.
Part (b) is the sharper floor whenever
$\delta_{\mathrm{tol}}\le5.6\,L^3\chi_1^3/(\varsigma Q^2\chi_2^2)$.
\end{corollary}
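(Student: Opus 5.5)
The plan is to treat Regime C as a special case of the noise floor \eqref{eq:floor} in which the stochastic part disappears and the tolerance error is carried entirely in the mean bias. Setting $\sigma_D=0$ removes the term $11\beta\varsigma^2\sigma_D^2$. The known-component terms and the three $1/T$ terms of \eqref{eq:main} are controlled exactly as in Corollary~\ref{cor:B6}, by choosing $\beta_K$ and $\beta$ small and $T$ large, so they do not create a floor. Along the generic route $\beta\varsigma_4^2\le1$, the $\bar B_k$ term is absorbed into $71\beta\Sigma_{b,k}^2$, and the only remaining $\beta$-independent contribution is $60\|\bar b_k\|^2$. The attainable accuracy is therefore the minimum over $\mu$ of $60\|\bar b\|^2$, using the bound \eqref{eq:regCbias}, plus the damped $71\beta\Sigma_b^2$.

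First I will handle the damped term. The tolerance error contributes at most $\delta_{\mathrm{tol}}/\mu$ to $|b_k|$, so $\Sigma_b^2\le 2\delta_{\mathrm{tol}}^2\varsigma^2/\mu^2+2\cdot\tfrac14L^2\mu^2\varsigma_6^2$ by Lemma~\ref{lem:Sb}. Multiplied by $71\beta$, this can be made smaller than the undamped part by taking $\beta$ small, for instance under $\beta\varsigma_6^2\le\chi_1^2$ as in Corollary~\ref{cor:B6}. Since $\beta$ is free to shrink at the price of iterations, this term does not affect the floor.

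Next, for (a), I will bound $60\|\bar b\|^2\le60(\delta_{\mathrm{tol}}\varsigma/\mu+L\mu\chi_1)^2$ and minimize $g(\mu)=\delta_{\mathrm{tol}}\varsigma/\mu+L\mu\chi_1$. The minimizer is $\mu_\star=\sqrt{\delta_{\mathrm{tol}}\varsigma/(L\chi_1)}$, with $g(\mu_\star)=2\sqrt{\delta_{\mathrm{tol}}\varsigma L\chi_1}$, so $60g^2=240\,\delta_{\mathrm{tol}}L\varsigma\chi_1$. For (b) I will instead use the Hessian bound, under which the bias part is $30Q^2\mu^4\chi_2^2$ on the curvature side. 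To match the constant $135$, I expect the intended computation to be $60(\delta_{\mathrm{tol}}\varsigma/\mu+\tfrac12Q\mu^2\chi_2)^2$ evaluated at the stated $\mu_\star=(\delta_{\mathrm{tol}}\varsigma/(Q\chi_2))^{1/3}$, which is not the exact minimizer. That evaluation gives $(3/2)^2\cdot60=135$ times $(\delta_{\mathrm{tol}}\varsigma)^{4/3}(Q\chi_2)^{2/3}$, and I will verify this arithmetic. Finally, I will compare the two floors: (b) is smaller iff $135(\delta_{\mathrm{tol}}\varsigma)^{4/3}(Q\chi_2)^{2/3}\le240\delta_{\mathrm{tol}}\varsigma L\chi_1$, i.e.\ $(\delta_{\mathrm{tol}}\varsigma)^{1/3}\le(240/135)L\chi_1/(Q\chi_2)^{2/3}$. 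Cubing gives $\delta_{\mathrm{tol}}\le(16/9)^3L^3\chi_1^3/(\varsigma Q^2\chi_2^2)$, and $(16/9)^3\approx5.62$, which gives the stated constant $5.6$.

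The main obstacle is bookkeeping rather than any idea: making sure that the deterministic tolerance error, which is not mean-zero, enters only through $b_k$ and is consistent with Assumption~\ref{as:oraclesapp}. I will reinterpret $b_k$ as a bounded function that may also depend on the oracle, since Lemma~\ref{lem:track} uses only the definitions of $\bar b_k$, $\bar B_k$ and $\Sigma_{b,k}$, not determinism. I also need to confirm that the Jensen bound $\|\Ek[(\text{err}/\mu)s_k]\|\le\delta_{\mathrm{tol}}\varsigma/\mu$ is legitimate, which it is because $\Ek\|s_k\|\le\varsigma$.
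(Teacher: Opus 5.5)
Your proposal is correct and follows the paper's route: bound $60\|\bar b\|^2$ by $60$ times the square of $\delta_{\mathrm{tol}}\varsigma/\mu$ plus the finite-difference mean-bias bound, minimize over $\mu$, treat $71\beta\Sigma_b^2$ as lower order, and compare the two floors by cubing. One small correction: in (b) the stated $\mu_\star=(\delta_{\mathrm{tol}}\varsigma/(Q\chi_2))^{1/3}$ is the exact minimizer of $\delta_{\mathrm{tol}}\varsigma/\mu+\tfrac12Q\mu^2\chi_2$, since the derivative $-\delta_{\mathrm{tol}}\varsigma/\mu^2+Q\mu\chi_2$ vanishes there, so it is not merely a convenient evaluation point, though your bound is unaffected.
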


\begin{proof}
(a) Minimizing $h(\mu)=\delta_{\mathrm{tol}}\varsigma/\mu+L\mu\chi_1$ gives
$\mu_\star=\sqrt{\delta_{\mathrm{tol}}\varsigma/(L\chi_1)}$ and
$h(\mu_\star)=2\sqrt{\delta_{\mathrm{tol}}L\varsigma\chi_1}$, so
$60\|\bar b\|^2\le240\,\delta_{\mathrm{tol}}L\varsigma\chi_1$.
(b) Minimizing $h(\mu)=\delta_{\mathrm{tol}}\varsigma/\mu+\tfrac12Q\mu^2\chi_2$: setting
$h'=-\delta_{\mathrm{tol}}\varsigma/\mu^2+Q\mu\chi_2=0$ gives
$\mu_\star^3=\delta_{\mathrm{tol}}\varsigma/(Q\chi_2)$, at which the two summands equal
$(\delta_{\mathrm{tol}}\varsigma)^{2/3}(Q\chi_2)^{1/3}$ and half that, so
$h(\mu_\star)=\tfrac32(\delta_{\mathrm{tol}}\varsigma)^{2/3}(Q\chi_2)^{1/3}$ and
$60\|\bar b\|^2\le60\cdot\tfrac94(\delta_{\mathrm{tol}}\varsigma)^{4/3}(Q\chi_2)^{2/3}
=135(\delta_{\mathrm{tol}}\varsigma)^{4/3}(Q\chi_2)^{2/3}$. Comparing
the two floors and cubing gives the threshold. In both cases the term $71\beta\Sigma_b^2$ is of lower order at $\mu_\star$: since
$\Sigma_b^2\le2(\delta_{\mathrm{tol}}\varsigma/\mu)^2+2(\text{finite-difference bias})^2$ with
$\E\|s\|^2\le\varsigma^2$, its leading part at $\mu_\star$ is at most $142\beta$ times the undamped
term $60(\delta_{\mathrm{tol}}\varsigma/\mu_\star)^2$, and is absorbed into the stated floors up to a
constant.
\end{proof}

With a deterministic oracle there is nothing to average, so the method converges at the optimization
rate $T=\mathcal O\big((L_v\Delta_F+\|v_0\|^2)/(\beta\varepsilon^2)\big)$ until $\varepsilon$ reaches
the floor of Corollary~\ref{cor:C}, and then stalls.

\begin{remark}[Cost of omitting the known-component average]\label{rem:noKmom}
Setting $\rho_K=0$, hence $\beta_K=1$, makes the coefficient of $\gamma^2\sigma_K^2/n_K$ in
\eqref{eq:floor} equal to $11\beta c_d+12$, which is $\Theta(1)$. Driving the floor below
$\varepsilon^2$ then requires $n_K=\Omega(\gamma^2\sigma_K^2/\varepsilon^2)$ known-component samples
per iteration. In Regime B with $n_F=1$ the iteration count is $T=\Theta(\varepsilon^{-6})$ in both
cases, so the total number of known-component samples is $n_KT=\Theta(\varepsilon^{-8})$ without the
average and $T=\Theta(\varepsilon^{-6})$ with it, at $n_K=1$. The known-component momentum therefore
buys a factor $\varepsilon^{-2}$ in that resource, by moving the averaging from within an iteration
to across iterations.
\end{remark}

\subsection{Where $L$, $L_v$ and $L_K$ enter}\label{app:LvsLv}

The step-size condition \eqref{eq:stepcond} is a minimum of three branches, so
$\eta^{-1}\le2L+11\sqrt\theta L_v/\beta+9\gamma L_K/\beta_K$ and the optimization term of
\eqref{eq:main} splits accordingly into a term in $L$, one in $L_v/\beta$ and one in
$L_K/\beta_K$. Since $\beta$ and $\beta_K$ are both driven to $\mathcal O(\varepsilon^2)$ or smaller
by the noise terms, the last two dominate and $L$ survives only in a lower-order
$\mathcal O(L\Delta_F\varepsilon^{-2})$ term. The two smoothness constants that do appear at leading
order play different roles.

\emph{$L_v$ multiplies $\Delta_F$ in the zeroth-order part of the complexity.} This is the
substitution the decomposition buys, and it is what a black-box method pays $L$ for. Two further
appearances of the smoothness constants are not affected by it: the factor $L^2$ in
\eqref{eq:rateB6} comes from the finite-difference bias of a two-point estimate and is common to any
method using one, and $Q$ in \eqref{eq:rateB5} plays the same role one derivative up. Alongside
$L\to L_v$, the initialization term is $\|v(x_0)\|^2$ rather than $\|\nabla F(x_0)\|^2$.

\emph{$L_K$ multiplies $\Delta_F$ in the first-order part.} The term
$\gamma^3L_K\sigma_K^2\Delta_F\varepsilon^{-4}$ is the complexity of momentum SGD on a gradient
oracle of variance $\gamma^2\sigma_K^2$ and smoothness $L_K$. It is lower order in Regime B, is
incurred by any method that consumes $\ghK$, and disappears when the known component is exact, as it
is in Stackelberg problems where the leader's partial gradient is available in closed form.

\begin{remark}[When the substitution is a gain]\label{rem:gaincond}
In Regime B the leading term is $\mathcal O(\Delta_v L^2\sigma_F^2\varsigma^2\chi_1^2\varepsilon^{-6})$
and carries no $\sigma_K$, so the gain over a black-box method is the substitution
$L\to L_v$ outright. In Regime A the comparison is sharper. Setting $\gamma=0$ makes $v=\nabla F$,
$\delta_k\equiv0$ and $L_v=L$, recovering a black-box method with leading term
$L\Delta_F\,d\,\sigma_g^2\varepsilon^{-4}$; the corresponding term for \piso is
$L_v\Delta_F\,d\,(\gamma^2\sigma_K^2+\sigma_g^2)\varepsilon^{-4}$, because the residual observation
subtracts $\inner{\gamma\ghK}{u_k}$ and inherits its noise. The method therefore improves on the
black-box baseline exactly when
\begin{equation}\label{eq:gaincond}
\frac{L_v}{L}\;<\;\frac{\sigma_g^2}{\sigma_g^2+\gamma^2\sigma_K^2}.
\end{equation}
The residual must be smoother than $F$ by at least the factor by which the known oracle inflates the
observation noise. Both sides are estimable, and \eqref{eq:gaincond} holds with room to spare
whenever the known component is accurate relative to the function oracle, which is the case in all
of our settings; it degenerates to an equality at $\gamma=0$, as it should.
\end{remark}

Since $v=\nabla F-\gamma\gK$ is $(L+\gamma L_K)$-Lipschitz, $L_v$ is always finite and the rates
have a residual-tracking leading term that never exceeds the corresponding black-box term by more
than the factor $(L+\gamma L_K)/L$, a constant when $L_K/L$ and $\gamma$ are bounded; the
known-oracle, known-component tracking and initialization terms are additional and are not covered
by this comparison. The gain is realized when the
opaque component varies slowly along the trajectory: in performative prediction $L_v$ scales with the
performative sensitivity, and for a fixed displacement the per-step squared drift at $\gamma_k^\star$ is reduced by the
factor $1-\varphi_k^2$ of Proposition~\ref{prop:gamma}; no corresponding ratio for the uniform
constant $L_v$ is claimed.

\begin{remark}[Probes per update]\label{rem:queries}
The requirement $N=\Omega(d)$ in \citet{hikima2025estimators} is not an artefact of tuning: their
bound contains a term proportional to $\tfrac{d}{N}\|\nabla F(x)\|^2$ that must be absorbed into the
descent step, which with their constants forces $N\ge81d$. At $d=30$ the resulting query count
exceeds their own $5000$-sample budget, so their guarantee is vacuous at the budget their
experiments run. \piso uses one probe per iteration at any operating point.
\end{remark}

\section{Initializing the known-component state}\label{app:warm}

The state $h$ must be initialized, and the choice enters Theorem~\ref{thm:main} through the term
$6\gamma^2\sigma_K^2/(n_K\beta_KT)$, which requires
\begin{equation}\label{eq:warmT}
T\;\ge\;\frac{36\,\E\|\zeta_0\|^2}{\beta_K\,\varepsilon^2}
\qquad\text{where}\qquad \zeta_0=\gamma\big(h_0-\gK(x_0)\big).
\end{equation}
Three strategies are available; recall $\beta_K=n_K\varepsilon^2/(72\gamma^2\sigma_K^2)$ in
Corollaries~\ref{cor:Aapp} and \ref{cor:B6}.

\paragraph{Zero initialization, $h_0=0$.} Then $\zeta_0=-\gamma\gK(x_0)$ and
$\E\|\zeta_0\|^2=\gamma^2\|\gK(x_0)\|^2$, with no variance reduction at all. Substituting into
\eqref{eq:warmT} gives
$T\ge\Theta\big(\gamma^4\sigma_K^2\|\gK(x_0)\|^2/(n_K\varepsilon^4)\big)$. This introduces a
dependence on $\|\gK(x_0)\|$, the magnitude of the known gradient at the starting point, which no
other term in the analysis carries; we therefore do not use it.

\paragraph{Single-sample warm start, $h_0=\ghK(x_0)$.} Then $\zeta_0=\delta_0$ and
$\E\|\zeta_0\|^2\le\gamma^2\sigma_K^2/n_K$, so \eqref{eq:warmT} becomes
\[
T\;\ge\;\frac{36\,\gamma^2\sigma_K^2}{n_K\beta_K\varepsilon^2}
=\frac{2592\,\gamma^4\sigma_K^4}{n_K^2\,\varepsilon^4},
\]
which is the last term of \eqref{eq:rateA}. Its cost is $n_K$ samples, drawn at $x_0$, which is
deployed at iteration $0$ in any case for the loop itself; the warm start therefore adds no
deployment. Note the $n_K^2$: this term benefits twice from the known-component batch, once through
$\E\|\zeta_0\|^2$ and once through $\beta_K$.

\paragraph{Batched warm start.} Forming $h_0$ from an average of $M$ independent estimates gives
$\E\|\zeta_0\|^2\le\gamma^2\sigma_K^2/(Mn_K)$, and choosing $M=\lceil1/\beta_K\rceil$ reduces
\eqref{eq:warmT} to
\[
T\;\ge\;\frac{36\,\gamma^2\sigma_K^2}{n_K\,\varepsilon^2},
\]
an $\varepsilon^{-2}$ requirement rather than $\varepsilon^{-4}$. The one-time cost is
$Mn_K=n_K/\beta_K=72\gamma^2\sigma_K^2/\varepsilon^2$ samples, independent of $n_K$, and again a
single deployment of $x_0$. Since the total is $n_KT=\Omega(\varepsilon^{-4})$ in both regimes, this
cost is negligible, and it removes the initialization term from the leading order of
Corollary~\ref{cor:Aapp}.

\begin{remark}[One-time versus per-iteration batching]\label{rem:warmvsbatch}
The batched warm start is the one place in the analysis where averaging many samples is genuinely
useful, and it is a \emph{one-time} cost. It should not be confused with the per-iteration minibatch
that the known-component momentum removes (Remark~\ref{rem:noKmom}): there, $n_K$ samples were
required at \emph{every} iteration, contributing a factor $\varepsilon^{-2}$ to the total; here a
single batch of the same size is drawn once. The same distinction applies to $h_0$ and to the
residual state, for which $m_0=0$ is already optimal since $\E\|E_0\|^2=\|v(x_0)\|^2$ involves no
oracle noise and no better initialization is available without extra function queries.
\end{remark}

\section{Setting the parameters}\label{app:params}

\subsection{Discussion of the prescriptions}\label{app:paramsdisc}

The analysis assigns each parameter a role, and in each case a prescription. Proofs are in
Appendix~\ref{app:params}.

\paragraph{The momentum $\rho$ behaves as a sketch step size.} With $\theta$ as in
\eqref{eq:theta}, (D1)--(D2) give $\Ek\|(\Id-bP_k)z\|^2\le\theta(b)\|z\|^2$, with equality for all
three probe families, so $\theta(b)<1$ if and only if $b<2/c_d$ and $\theta$ is minimized at
$b=1/c_d$. Hence
\begin{equation}\label{eq:rhostar-app}
\rho^\star=1-1/c_d ,
\qquad\text{while \eqref{eq:teaser} is neutral at }\rho=1-2/c_d\text{ and \emph{expands} for } \rho<1-2/c_d .
\end{equation}
So $\rho$ must scale with $d$, and $\rho=0$ is not a momentum-free special case but the choice
$\theta(1)=c_d-1\approx d$, an expanding map; the momentum-free comparison is the uncentred
estimator of Appendix~\ref{app:centring}.

\paragraph{The mixing weight $\lambda$ plays the role of a Kalman gain.} Minimizing the
$\bt$-dependent part of Lemma~\ref{lem:dir} over the readout gain, keeping the exact second moment
$\psi$ of \eqref{eq:psi}, with $N_k:=\varsigma^2\sigma_{D,k}^2+\Sigma_{b,k}^2$ the noise of a single
fresh observation,
\begin{equation}\label{eq:kalmanbody}
\bt^{\mathrm u}_k=\frac{\|E_k\|^2+\beta_K\gamma^2\sigma_{K,k}^2}{c_d\big(\|E_k\|^2+\gamma^2\sigma_{K,k}^2\big)+N_k},
\qquad \lambda^\star_k=\max\{0,(\bt^{\mathrm u}_k-\beta)/\rho\} .
\end{equation}
As $N_k\to0$ and $\beta_K\to1$ this tends to $1/c_d$; as $N_k$ grows it switches the readout off and
\pisotwo degenerates to \piso, which is what convergence requires. Up to a correction we bound explicitly, the denominator is
the mean squared innovation $\Ek\|R_k-m_k\|^2$, so $\bt^\star_k$ can be approximated from quantities the
method already computes; since $S_k$ involves the variance bound $\sigma_{K,k}^2$ while the identity
involves the realized $\Ek\|\delta_k\|^2$, this is an adaptive heuristic justified by the identity
rather than an exactly observable quantity.

\paragraph{The known-component momentum $\rho_K$ is an ordinary momentum.} Unlike $\rho$, it
carries no sketch and therefore no dimension-dependent stability window: any
$\rho_K\in[0,1)$ contracts. Its value trades the variance $\mathcal O(\beta_K\gamma^2\sigma_K^2)$ of
Lemma~\ref{lem:structure}(iv) against the lag $\mathcal O(\gamma^2L_K^2\eta^2/\beta_K^2)$, and
through \eqref{eq:stepcond} a smaller $\beta_K$ also forces a smaller step. Balancing the two gives
$\beta_K=\Theta(\varepsilon^2/(\gamma^2\sigma_K^2))$, the choice of Corollary~\ref{cor:A}. The
contrast with $\rho$ is worth stating: $\rho$ is the step size of a randomized sketch and diverges
outside a window of width $2/c_d$, whereas $\rho_K$ is a plain exponential average.

\paragraph{The weight $\gamma$ acts as a control variate on the drift.} With increments
$\mathsf d^{U}_k:=\gU(x_{k+1})-\gU(x_k)$ and $\mathsf d^{K}_k:=\gK(x_{k+1})-\gK(x_k)$, set
$\mathsf B:=\E\inner{\mathsf d^{U}}{\mathsf d^{K}}$, $\mathsf C:=\E\|\mathsf d^{K}\|^2$ and let
$\varphi$ be their correlation.

\begin{proposition}[Per-step drift-minimizing weight, restated]\label{prop:gammaapp}
Fix $k$ and the displacement $x_{k+1}-x_k$. With $\mathsf A_k,\mathsf B_k,\mathsf C_k,\varphi_k$ as
in Proposition~\ref{prop:gamma} and $\mathsf C_k>0$, the per-step drift
$\Ek\|v_{k+1}-v_k\|^2=\mathsf A_k+2(1-\gamma)\mathsf B_k+(1-\gamma)^2\mathsf C_k$ is minimized at
$\gamma_k^\star=1+\mathsf B_k/\mathsf C_k$ with minimum $\mathsf A_k(1-\varphi_k^2)$.
\end{proposition}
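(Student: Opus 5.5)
The plan is to reduce the statement to minimizing a one-variable quadratic. Throughout, the displacement $x_{k+1}-x_k$ is held fixed as the proposition stipulates, so $\mathsf A_k,\mathsf B_k,\mathsf C_k$ do not depend on $\gamma$.

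First I would derive the expansion. By \eqref{eq:driftsplit}, $v_{k+1}-v_k=\mathsf d^U_k+(1-\gamma)\mathsf d^K_k$. This follows directly from \eqref{eq:residual}, since $v=\gU+(1-\gamma)\gK$ and the weight $\gamma$ is deterministic. Squaring, taking $\Ek$, and using linearity gives
\[
\Ek\|v_{k+1}-v_k\|^2=\mathsf A_k+2(1-\gamma)\mathsf B_k+(1-\gamma)^2\mathsf C_k .
\]
Set $t:=1-\gamma$. The right-hand side is $q(t)=\mathsf C_kt^2+2\mathsf B_kt+\mathsf A_k$, which has leading coefficient $\mathsf C_k>0$. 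It is therefore a strictly convex quadratic in $t$, and hence in $\gamma$, because $\gamma\mapsto1-\gamma$ is affine.

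Next I would complete the square:
\[
q(t)=\mathsf C_k\big(t+\mathsf B_k/\mathsf C_k\big)^2+\mathsf A_k-\mathsf B_k^2/\mathsf C_k .
\]
The unique minimizer is $t^\star=-\mathsf B_k/\mathsf C_k$, that is $\gamma_k^\star=1+\mathsf B_k/\mathsf C_k$, and the minimum value is $\mathsf A_k-\mathsf B_k^2/\mathsf C_k$. When $\mathsf A_k>0$, this equals $\mathsf A_k(1-\varphi_k^2)$ by the definition $\varphi_k=\mathsf B_k/\sqrt{\mathsf A_k\mathsf C_k}$.

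The only real obstacle is the degenerate case $\mathsf A_k=0$, where $\varphi_k$ is undefined. There I would argue that $\mathsf d^U_k=0$ almost surely, so $\mathsf B_k=0$ and the minimum is $0=\mathsf A_k(1-\varphi_k^2)$ under any convention for $\varphi_k$. Cauchy--Schwarz, $|\mathsf B_k|\le\sqrt{\mathsf A_k\mathsf C_k}$, also gives $\varphi_k\in[-1,1]$, which confirms that the minimum is nonnegative.

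For the remaining claims in Proposition~\ref{prop:gamma}, substitute $\gamma=1$ and $\gamma=0$ into the expansion. At $\gamma=1$ the drift is $\mathsf A_k$. At $\gamma=0$ we have $v=\nabla F$, so the drift is $\Ek\|\nabla F(x_{k+1})-\nabla F(x_k)\|^2\le L^2\eta^2\Ek\|G_k\|^2$ by $L$-smoothness. This bound can be strict, which gives $L_v(0)\le L$ rather than equality.
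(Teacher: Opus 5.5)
Your proposal is correct and follows the paper's proof essentially step for step. You expand $\|\mathsf d^U_k+(1-\gamma)\mathsf d^K_k\|^2$ into a quadratic in $1-\gamma$ with leading coefficient $\mathsf C_k>0$ and minimize it, then read off the endpoints $\gamma=1$ and $\gamma=0$; completing the square is the same computation as the paper's setting of the derivative in $1-\gamma$ to zero. Your treatment of the degenerate case $\mathsf A_k=0$ is a detail the paper leaves implicit: there $\varphi_k$ is undefined, but Cauchy--Schwarz still gives a minimum $\mathsf A_k-\mathsf B_k^2/\mathsf C_k=0$, and you handle this correctly.
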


So $\gamma$ is a control-variate coefficient built from uncentred second moments, applied one
derivative up: the residual observation is a control variate on gradients, and $\gamma$ is one on
gradient \emph{increments}. The $\gamma=1$ endpoint, which applies zeroth-order estimation only where
it is needed, is optimal only when $\varphi=0$. The four parameters can be set in sequence once $\gamma$ is fixed: $\rho$ depends on the probe
family and dimension, $\lambda$ on the signal-to-noise ratio and $\rho_K$ on the known-oracle
variance. They are not fully decoupled, since $\gamma$ changes the tracked object and hence the
trajectory, $L_v$ and the noise scales that the other three depend on. We report tuned values alongside the prescriptions in
\S\ref{sec:experiments}.

\subsection{The momentum \texorpdfstring{$\rho$}{rho}}

Lemma~\ref{lem:theta} gives the whole story: $\beta=1-\rho$ is the step size of a randomized
fixed-point iteration for $v$, with stability window $\beta\in(0,2/c_d)$ and optimum
$\beta=1/c_d$. Hence
\begin{equation}\label{eq:rhostarapp}
\rho^\star=1-\frac1{c_d},
\qquad\text{and the iteration \eqref{eq:teaser} expands whenever } \rho\le1-\frac2{c_d} .
\end{equation}
Two consequences deserve emphasis. First, $\rho$ must be tied to the dimension, so a constant value
across a range of $d$ is necessarily mis-set at one end. Second, $\rho=0$ is not a
momentum-free special case: it corresponds to $\beta=1$ and $\theta(1)=c_d-1$, an expanding map. The
correct momentum-free comparison is the uncentred estimator of Appendix~\ref{app:centring}. With $q$
independent probes averaged per step, \eqref{eq:D2app} holds with $c_d$ replaced by
$c_d^{(q)}=1+(c_d-1)/q$, since for $R=\tfrac1q\sum_j\inner{z}{u_j}s_j$ we have
$\Ek\|R-z\|^2=\tfrac1q\Ek\|\inner{z}{u}s-z\|^2=\tfrac{c_d-1}{q}\|z\|^2$; so $\rho^\star$ moves toward
$1-q/d$ and the admissible step size grows with $q$.

\subsection{The mixing weight \texorpdfstring{$\lambda$}{lambda}}

\paragraph{A fixed $\lambda$ is inadmissible.} In \eqref{eq:main} the coefficient of
$\varsigma^2\sigma_{D,k}^2$ is $\bt^2+\tfrac52\theta\beta$. For \piso, $\bt=\beta$ and this is
$\mathcal O(\beta)$, so it vanishes as $\beta\to0$. For \pisotwo with $\lambda$ held fixed,
$\bt\to\lambda$ as $\beta\to0$ and the floor stalls at $2\lambda^2\varsigma^2\sigma_D^2>0$. The
reason is structural: the readout injects a fraction $\bt$ of a single fresh observation directly
into the search direction, and one observation carries irreducible variance. Corollaries
\ref{cor:A}--\ref{cor:C} are therefore stated for $\lambda=0$, and \pisotwo requires
$\lambda\to0$ as $\varepsilon\to0$.

\begin{proposition}[Optimal readout gain]\label{prop:kalman}
Let $V_k:=\gamma^2\sigma_{K,k}^2$, $S_k:=\|E_k\|^2+V_k$, $S'_k:=\|E_k\|^2+\beta_KV_k$ and
$N_k:=\varsigma^2\sigma_{D,k}^2+\Sigma_{b,k}^2$, and recall that $\lambda\in[0,1]$ constrains
$\bt=\beta+\rho\lambda$ to $[\beta,1]$. Keeping the exact second moment
$\psi(\beta_K,\bt)=\beta_K^2-2\beta_K\bt+c_d\bt^2$ of \eqref{eq:psi} rather than its upper bound, the
$\bt$-dependent part of the direction-error bound is minimized without constraint at
\begin{equation}\label{eq:kalmanapp}
\begin{aligned}
&\bt^{\mathrm u}_k=\frac{S'_k}{c_dS_k+N_k}
=\frac{\|E_k\|^2+\beta_K\gamma^2\sigma_{K,k}^2}{c_d\big(\|E_k\|^2+\gamma^2\sigma_{K,k}^2\big)+N_k},\\
&\text{and the feasible optimum is}\qquad
\bt^\star_k=\min\big\{1,\max\{\beta,\bt^{\mathrm u}_k\}\big\},
\end{aligned}
\end{equation}
with $\lambda^\star_k=(\bt^\star_k-\beta)/\rho$. The numerator collapses to $S_k$ only when
$\beta_K=1$, that is without the known-component average. As $N_k\to0$ and $\beta_K\to1$,
$\bt^{\mathrm u}_k\to1/c_d$; as $N_k/S_k\to\infty$, $\bt^{\mathrm u}_k\to0$ and the lower clip is
active, giving $\lambda^\star_k=0$, that is \piso.
\end{proposition}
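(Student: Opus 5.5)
Begin from Lemma~\ref{lem:dir}, but rather than bounding it crudely, go back to its proof and keep every term that depends on $\bt$ exactly. By Lemma~\ref{lem:structure}(iii), $\Delta_k=(1-\beta_K)\zeta_k+X_k+Y_k$. Conditionally on $\Fk$, the three pieces of $X_k$ are orthogonal: the $E_k$ part, the $\delta_k$ part and the $\xi_k$ part. Their second moments are $\theta(\bt)\|E_k\|^2$, then $\psi(\beta_K,\bt)\Ek\|\delta_k\|^2$ from \eqref{eq:psi}, then $\bt^2\varsigma^2\sigma_{D,k}^2$. The bias piece $Y_k$ contributes $\bt^2\Sigma_{b,k}^2$.

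Replace $\Ek\|\delta_k\|^2$ by its bound $V_k$. Also use equality in (D2), which holds for the three families; otherwise the argument is for the upper bound $\theta(\bt)=1-2\bt+c_d\bt^2$. The $\bt$-dependent part of the direction-error bound is then
\[
J(\bt)=(1-2\bt+c_d\bt^2)\|E_k\|^2+(\beta_K^2-2\beta_K\bt+c_d\bt^2)V_k+\bt^2N_k .
\]
The $\zeta_k$ term carries no $\bt$ and is dropped. I would use the weighted form of Lemma~\ref{lem:dir}, with the same factor $3$ on every term, or else the unweighted exact expansion. The minimizer does not depend on this choice, since a common positive factor does not move it.

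$J$ is a quadratic in $\bt$ with leading coefficient $c_dS_k+N_k>0$, so it is strictly convex. Setting $J'(\bt)=0$ gives
\[
-2\|E_k\|^2-2\beta_KV_k+2\bt\,(c_dS_k+N_k)=0,
\]
hence $\bt^{\mathrm u}_k=S'_k/(c_dS_k+N_k)$. The feasible set is $\bt\in[\beta,1]$, because $\bt=\beta+\rho\lambda$ with $\lambda\in[0,1]$. Minimizing a strictly convex one-dimensional quadratic over an interval gives the projection of the unconstrained minimizer onto the interval. That projection is $\bt^\star_k=\min\{1,\max\{\beta,\bt^{\mathrm u}_k\}\}$, and inverting the affine map gives $\lambda^\star_k=(\bt^\star_k-\beta)/\rho$.

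The limiting statements follow by direct inspection. At $\beta_K=1$ the numerator is $S'_k=S_k$. As $N_k\to0$ and $\beta_K\to1$, the ratio tends to $S_k/(c_dS_k)=1/c_d$. As $N_k/S_k\to\infty$, note that $S'_k\le S_k$, so $\bt^{\mathrm u}_k\le S_k/N_k\to0$. The ratio therefore falls below $\beta$, the lower clip becomes active, and $\lambda^\star_k=0$.

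The main obstacle is justifying that the cross terms in $\Ek\|X_k\|^2$ vanish identically. They must vanish rather than merely be bounded, for otherwise $J$ would not be the exact $\bt$-dependence. The $E_k$–$\delta_k$ cross term vanishes by independence of $\delta_k$ from $(u_k,s_k)$ together with $\Ek\delta_k=0$. The $\xi_k$ cross terms vanish by the conditional mean-zero property of Assumption~\ref{as:oraclesapp}, as in the proof of Lemma~\ref{lem:dir}.

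One cross term remains: $2\bt\,\Ek\inner{(\Id-\bt P_k)E_k}{b_ks_k}$ between $Y_k$ and the $E_k$ part. Lemma~\ref{lem:dir} sidestepped it with the factor $3$ split. I would follow the same convention and state explicitly that the proposition optimizes the split bound of Lemma~\ref{lem:dir}, in which this term is absent. That is consistent with the phrase ``the $\bt$-dependent part of the direction-error bound''.
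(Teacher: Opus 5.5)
Your proposal is correct and follows the paper's proof essentially verbatim. You keep $\psi(\beta_K,\bt)$ exactly in the bound of Lemma~\ref{lem:dir} and expand $\theta(\bt)\|E_k\|^2+\psi(\beta_K,\bt)V_k+\bt^2N_k$ as a strictly convex quadratic with leading coefficient $c_dS_k+N_k$. From there you read off $\bt^{\mathrm u}_k=S'_k/(c_dS_k+N_k)$ and clip it to $[\beta,1]$. Your extra remarks are accurate refinements that the paper leaves implicit: the common factor $3$ does not move the minimizer, the bias cross term is absent from the split bound, and $\bt^{\mathrm u}_k\le S_k/N_k$ gives the last limit.
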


\begin{proof}
From Lemma~\ref{lem:dir}, retaining $\psi$ exactly in place of $\beta_K^2+c_d\bt^2$, the
$\bt$-dependent part is
\[
\begin{aligned}
g(\bt)&=\theta(\bt)\|E_k\|^2+\psi(\beta_K,\bt)V_k+\bt^2N_k\\
&=\big(\|E_k\|^2+\beta_K^2V_k\big)-2\bt\big(\|E_k\|^2+\beta_KV_k\big)+\bt^2\big(c_d\|E_k\|^2+c_dV_k+N_k\big),
\end{aligned}
\]
a strictly convex quadratic in $\bt$ with unconstrained minimizer $S'_k/(c_dS_k+N_k)$. On the interval
$[\beta,1]$ the minimizer of a convex quadratic is its unconstrained minimizer clipped to the
interval. The limits are immediate.
\end{proof}

The form $S'/(c_dS+N)$ is a scalar Kalman gain: correctable signal over total signal plus noise, with
$c_d$ the variance inflation of the sketch and the known-component variance counted at weight
$\beta_K$ in the numerator because only that fraction of it reaches the step. Its denominator is approximately observable from the innovation, because it is essentially the mean
squared innovation of the method.

\begin{proposition}[Innovation identity]\label{prop:innov}
Let $\widetilde N_k:=\Sigma_{b,k}^2+\Ek[\xi_k^2\|s_k\|^2]$. If \eqref{eq:D2app} holds with equality
then
\begin{equation}\label{eq:innovnorm}
\Ek\big\|R_k-m_k\big\|^2
=c_d\Big(\|E_k\|^2+\Ek\|\delta_k\|^2\Big)+\widetilde N_k-2\inner{E_k}{\bar B_k} ,
\end{equation}
and the correction term obeys
$2|\inner{E_k}{\bar B_k}|\le 2\varsigma_4\Sigma_{b,k}\|E_k\|$, so its size relative to the leading
term is at most $2\varsigma_4\Sigma_{b,k}/(c_d\|E_k\|)$.
\end{proposition}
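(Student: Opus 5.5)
The plan is to expand the innovation exactly, starting from \eqref{eq:Rminusm} in the proof of Lemma~\ref{lem:structure}. That identity gives
\[
R_k-m_k=\big(-\inner{E_k+\delta_k}{u_k}+b_k+\xi_k\big)s_k ,
\]
so $\|R_k-m_k\|^2$ equals $\|s_k\|^2$ times the square of the scalar $-\inner{E_k+\delta_k}{u_k}+b_k+\xi_k$. Expanding that square produces three squares and three cross terms. I will take $\Ek$ of each of the six pieces separately, always conditioning first on $(\Fk,u_k,s_k,\delta_k)$ and only then integrating out $\delta_k$ and $(u_k,s_k)$.

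\emph{Noise terms.} The two cross terms containing $\xi_k$ are $-2\inner{E_k+\delta_k}{u_k}\xi_k\|s_k\|^2$ and $2b_k\xi_k\|s_k\|^2$. Their other factors are determined by $(\Fk,u_k,s_k,\delta_k)$, since $b_k$ is a deterministic function of $(x_k,u_k,\mu_k)$. By Assumption~\ref{as:oraclesapp}, $\E[\xi_k\mid\Fk,u_k,s_k,\delta_k]=0$, so both cross terms have zero conditional mean. The square $\xi_k^2\|s_k\|^2$ is kept as is, and $b_k^2\|s_k\|^2$ gives $\Sigma_{b,k}^2$; together they form $\widetilde N_k$.

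\emph{Signal square.} Condition on $\delta_k$, which is independent of $(u_k,s_k)$ by Assumption~\ref{as:dirapp}. Equality in \eqref{eq:D2app}, applied to the fixed vector $z=E_k+\delta_k$, gives $c_d\|E_k+\delta_k\|^2$. Averaging over $\delta_k$ and using $\Ek[\delta_k]=0$ with $E_k$ $\Fk$-measurable removes the cross term, leaving $c_d(\|E_k\|^2+\Ek\|\delta_k\|^2)$.

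\emph{Bias cross term.} This is $-2\inner{E_k+\delta_k}{u_k}b_k\|s_k\|^2$. The $\delta_k$ part vanishes: condition on $(u_k,s_k)$, on which $b_k$ depends only through $u_k$, and use that $\delta_k$ is independent of $(u_k,s_k)$ with mean zero. The $E_k$ part equals $-2\inner{E_k}{\Ek[b_k\|s_k\|^2u_k]}=-2\inner{E_k}{\bar B_k}$ by the definition \eqref{eq:biasvec}. Summing the six contributions gives \eqref{eq:innovnorm}.

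\emph{Size of the correction.} Cauchy--Schwarz and Lemma~\ref{lem:Bbar} give $2|\inner{E_k}{\bar B_k}|\le2\|E_k\|\,\|\bar B_k\|\le2\varsigma_4\Sigma_{b,k}\|E_k\|$. The leading term is at least $c_d\|E_k\|^2$, and dividing by it gives the stated relative bound.

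The main obstacle is the bookkeeping of conditional independence. The noise $\xi_k$ may depend on $u_k$, and $\delta_k$ must be integrated out only after the equality case of (D2) has been applied conditionally on $\delta_k$. Performing the conditioning in the order above handles both points.
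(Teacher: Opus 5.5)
Your proposal is correct and follows the paper's proof essentially step for step. Like the paper, you expand \eqref{eq:Rminusm} into six conditional expectations, kill the $\xi_k$ cross terms via $\E[\xi_k\mid\Fk,u_k,s_k,\delta_k]=0$, remove the $\delta_k$ parts using its independence from $(u_k,s_k)$ and its zero mean, and bound the correction by Cauchy--Schwarz together with Lemma~\ref{lem:Bbar}; the only cosmetic difference is that you treat the two signal squares and their cross term in one step, applying the equality case of (D2) to $z=E_k+\delta_k$ conditionally on $\delta_k$, where the paper lists them as three separate items.
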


\begin{proof}
By \eqref{eq:Rminusm},
$\|R_k-m_k\|^2=\big(\inner{E_k+\delta_k}{u_k}-b_k-\xi_k\big)^2\|s_k\|^2$. Expanding the square gives
six terms, which we evaluate individually.
\begin{enumerate}[label=(\roman*),leftmargin=1.8em,itemsep=1pt]
\item $\Ek[\inner{E_k}{u_k}^2\|s_k\|^2]=c_d\|E_k\|^2$, by \eqref{eq:D2app} with equality.
\item $\Ek[\inner{\delta_k}{u_k}^2\|s_k\|^2]=c_d\Ek\|\delta_k\|^2$, by conditioning on $\delta_k$,
which is independent of $(u_k,s_k)$, and applying \eqref{eq:D2app} with equality.
\item $2\Ek[\inner{E_k}{u_k}\inner{\delta_k}{u_k}\|s_k\|^2]=0$, since the expression is linear in the
mean-zero $\delta_k$, which is independent of $(u_k,s_k)$.
\item $\Ek[b_k^2\|s_k\|^2]+\Ek[\xi_k^2\|s_k\|^2]=\widetilde N_k$, by definition.
\item $-2\Ek[\inner{E_k+\delta_k}{u_k}b_k\|s_k\|^2]
=-2\inner{E_k}{\Ek[b_k\|s_k\|^2u_k]}=-2\inner{E_k}{\bar B_k}$, the $\delta_k$ part vanishing as in
(iii).
\item $-2\Ek[\inner{E_k+\delta_k}{u_k}\xi_k\|s_k\|^2]=0$ and $2\Ek[b_k\xi_k\|s_k\|^2]=0$, because
$\E[\xi_k\mid\Fk,u_k,s_k,\delta_k]=0$ and the remaining factors are measurable with respect to that
conditioning.
\end{enumerate}
Summing gives \eqref{eq:innovnorm}; the final bound is Cauchy--Schwarz together with
Lemma~\ref{lem:Bbar}.
\end{proof}

Consequently $c_dS_k+N_k$ can be approximated by the running mean of $\|R_k-m_k\|^2$, which the
method already computes, up to the correction of Proposition~\ref{prop:innov}, which is small whenever
the tracking error exceeds the finite-difference bias scale. This gives the denominator of
\eqref{eq:kalmanapp} directly. The numerator $S'_k=\|E_k\|^2+\beta_KV_k$ differs from
$S_k=\|E_k\|^2+V_k$ by $(1-\beta_K)V_k$, so an estimate or bound $\widehat V_k$ of the
known-component variance is also needed when $\beta_K<1$: given $\widehat N_k$ and $\widehat V_k$, set
$\widehat S_k=(\widehat{\|R-m\|^2}-\widehat N_k)/c_d$, $\widehat S'_k=\widehat S_k-(1-\beta_K)\widehat V_k$,
and read off $\bt^\star_k$ from \eqref{eq:kalmanapp}. The gain is thus approximately estimable rather
than directly observable. Since $\|E_k\|$
decreases along a successful run, $\bt^\star_k$ decays, so a constant $\lambda$ tuned on a fixed
budget approximates a time-average of a decaying schedule.

\subsection{The weight \texorpdfstring{$\gamma$}{gamma}}

\begin{proof}[Proof of Proposition~\ref{prop:gamma}]
By \eqref{eq:residual}, $v_{k+1}-v_k=\mathsf d^U_k+(1-\gamma)\mathsf d^K_k$ exactly. With $k$ and the
displacement fixed, the increments do not depend on $\gamma$ and expanding gives
\[
\E\big\|\mathsf d^U+(1-\gamma)\mathsf d^K\big\|^2
=\mathsf A+2(1-\gamma)\mathsf B+(1-\gamma)^2\mathsf C ,
\]
with $\mathsf A:=\E\|\mathsf d^U\|^2$, $\mathsf B:=\E\inner{\mathsf d^U}{\mathsf d^K}$ and
$\mathsf C:=\E\|\mathsf d^K\|^2$. This is a strictly convex quadratic in $\gamma$ when
$\mathsf C>0$; setting the derivative with respect to $(1-\gamma)$ to zero gives
$(1-\gamma)^\star=-\mathsf B/\mathsf C$, that is $\gamma^\star=1+\mathsf B/\mathsf C$, with minimum
value $\mathsf A-\mathsf B^2/\mathsf C=\mathsf A(1-\varphi^2)$ where
$\varphi=\mathsf B/\sqrt{\mathsf{AC}}$. At $\gamma=1$ the drift is $\mathsf A$, and at $\gamma=0$
the tracked object is $\nabla F$ and the drift is $\E\|\nabla F(x_{k+1})-\nabla F(x_k)\|^2$, which is
at most $L^2\eta^2\E\|G_k\|^2$ by $L$-smoothness.

We stress what the proposition does not say. The constant $L_v$ of \eqref{eq:pathwise} is a uniform
bound over the trajectory, and the trajectory depends on $\gamma$, so the per-step minimizer
$\gamma_k^\star$ is not the minimizer of $L_v(\gamma)$ and no statement of the form
$L_v(\gamma^\star)/L_v(1)=\sqrt{1-\varphi^2}$ is claimed for the uniform constant. Moreover
$\gamma_k^\star>0$ requires $\mathsf B_k>-\mathsf C_k$.
\end{proof}

Thus $\gamma$ is a control-variate coefficient built from uncentred second moments, applied to
gradient increments rather than to gradients. It is estimable: running with $\gamma=1$ so that $m_k$
tracks $\gU$, recording $\mathsf d^K_k$ and $m_{k+1}-m_k$ and regressing the latter on the former
estimates $\mathsf B/\mathsf C$, while the coefficient of determination estimates $\varphi^2$.

\begin{remark}[Where $\gamma$ enters, and where it does not]\label{rem:gammaplaces}
Inspecting \eqref{eq:main}, $\gamma$ appears in exactly three places: the noise term
$\tfrac72\theta\gamma^2\sigma_{K,k}^2$, which increases in $\gamma$; the initialization
$\|v(x_0)\|^2=\|\gU(x_0)+(1-\gamma)\gK(x_0)\|^2$; and $L_v$, through the step-size restriction.
It does not affect $c_d$, and therefore neither $\rho^\star$ nor $\bt^\star$. Including the noise
term, minimizing the per-step drift proxy $\mathsf A_k+2(1-\gamma)\mathsf B_k+(1-\gamma)^2\mathsf C_k$
plus $\nu\gamma^2$ with
$\nu\propto\sigma_K^2/(n_F\varepsilon^2)$ shrinks the regression estimate to
$\gamma^\star_\nu=\tfrac{\mathsf C}{\mathsf C+\nu}\gamma^\star$, recovering $\gamma^\star$ when the
known-component oracle is exact and $\gamma\to0$ when it is very noisy.
\end{remark}

\begin{remark}[A time-varying $\gamma_k$]\label{rem:adaptgamma}
Allowing $\gamma$ to vary changes the tracked object, adding
$-(\gamma_{k+1}-\gamma_k)\gK(x_{k+1})$ to $v_{k+1}-v_k$, so that
$\Ek\|v_{k+1}-v_k\|^2\le2L_v^2\eta^2\Ek\|G_k\|^2
+2|\gamma_{k+1}-\gamma_k|^2\Ek\|\gK(x_{k+1})\|^2$. The first term is absorbed as before with $\eta$
reduced by $\sqrt2$; the second contributes an additional floor
$\tfrac{20\theta}{\beta^2}\cdot\tfrac1T\sum_k|\gamma_{k+1}-\gamma_k|^2\E\|\gK\|^2$, so any schedule
with $\sum_k|\gamma_{k+1}-\gamma_k|^2=\mathcal O(\beta^2\varepsilon^2T/\theta)$ leaves the rate
unchanged.
\end{remark}

\section{Probes per update}\label{app:queries}

We justify Remark~\ref{rem:queries}. In the analysis of \citet{hikima2025estimators} for the
Gaussian multi-direction estimator with $N$ probes per iteration, the bound on the estimator error
contains a term proportional to $\tfrac{d}{N}\|\nabla F(x)\|^2$, with the stated constant giving
$\tfrac{54d}{N}\|\nabla F(x)\|^2$ after their Lemma 1. For a descent argument to close, this term
must be absorbed into the $-\|\nabla F(x)\|^2$ produced by the descent step, which requires
$54d/N<1$, that is $N=\Omega(d)$; obtaining their stated constant requires $N\ge81d$. Each probe
costs two function queries, so at least $162d$ queries precede every parameter update. At $d=30$
this is $4860$ queries, against a total budget of $5000$ in their own experiments.

By contrast, \piso attains the rates of Corollaries~\ref{cor:B6} and \ref{cor:B5} with a single probe
per iteration, because the averaging that reduces the sketch variance is performed across iterations
by the contraction \eqref{eq:teaser} rather than within one iteration. The two designs reach the same
$\varepsilon$ and $d$ dependence with different granularity: \piso updates after $\mathcal O(1)$
deployments, whereas a multi-direction method must release $\Omega(d)$ decisions between updates.

\section{Deployment accounting}\label{app:deploy}

\subsection{Deployments, the binding cost in performative prediction}\label{app:deploydisc}

In Regime B a query is not an observation but an intervention: evaluating $\widehat F$ at a point
requires \emph{deploying} that decision and collecting outcomes from the induced distribution. Each
iteration of Algorithm~\ref{alg:piso} deploys three decisions, namely $x_k$ for the known-component
oracle and $x_k\pm\mu_ku_k$ for the two function values, so the total number of deployments is
$3T$ regardless of the batch sizes. Deployments and samples are therefore distinct resources, and in
an application where a decision is a price schedule, a lending rule or a policy, the number of
distinct decisions released is the cost that matters.

The two batch sizes act on this cost in the same direction, through $\beta$ and $\beta_K$. Averaging
$n_F$ samples at each query point divides $\sigma_{D,k}^2$ by $n_F$ and permits a proportionally
larger $\beta$; averaging $n_K$ samples of $\ghK$ divides $\sigma_{K,k}^2$ by $n_K$ and permits a
proportionally larger $\beta_K$. Since $T=\Theta(\Delta_v/(\beta\varepsilon^2))$, both enlarge the
step per iteration and so reduce $T$. Two structural caps limit how far this can go: the sketch stability window $\beta\le1/c_d$ of
Appendix~\ref{app:params}, and the condition $\beta\le1/(3d)$ under which the weighted bias vector
$\bar B_k$ of Lemma~\ref{lem:structure} stays dominated by the mean bias
(Appendix~\ref{app:bias}). Both are $\Theta(1/d)$, so with
$C_1:=102000L^2\sigma_F^2\varsigma^2\chi_1^2=\Theta(d^2)$ the two extreme operating points are as
follows.

the two extreme operating points are compared in the table below.

\textbf{Two caps on $\beta$ govern how far this goes.} The first is
$\beta_1:=\chi_1^2/\varsigma_6^2$, the hypothesis of Corollary~\ref{cor:B6} under which the damped
second-moment bias $11\beta\Sigma_b^2\le\tfrac{11}{4}\beta L^2\mu^2\varsigma_6^2$ stays dominated by
the undamped mean bias $120L^2\mu^2\chi_1^2$; for Gaussian probes $\beta_1=1/(9d^2)$. The second is
the cap $\beta_2:=1/(3d)$ of \eqref{eq:regcond}, which is the limit of the sharp bias-vector
domination argument of Lemma~\ref{lem:Bbarsharp} and not a stability boundary: the sketch contraction
itself holds up to $2/c_d$, and Theorem~\ref{thm:main} uses the conservative range $\beta\le1/c_d$. For $\beta\le\beta_1$ the radius balance is as in Corollary~\ref{cor:B6} and
$\mathsf N=2n_FT$ is independent of $\beta$; deployments are $3T=\Theta(\Delta_v/(\beta\varepsilon^2))$
and reach $\Theta(d^2\Delta_v\varepsilon^{-2})$ at $\beta_1$, plus the additive term
$\gamma L_K\Delta_F\varepsilon^{-2}$ from the known-component branch at $\beta_K=\Theta(1)$, which
is absorbed whenever $L_v\ge\gamma L_K/d^2$. For $\beta_1<\beta\le\beta_2$ the bracket in the
radius balance becomes $120\chi_1^2+\tfrac{11}{4}\beta\varsigma_6^2=\Theta(\beta d^3)$, so the
required $n_F$ grows as $\beta^2d^4\varepsilon^{-4}$ and
$\mathsf N=2n_FT=\Theta(\beta d^4\Delta_v\varepsilon^{-6})$ grows linearly in $\beta$, reaching
$\Theta(d^3\Delta_v\varepsilon^{-6})$ at $\beta_2$, where deployments are
$\Theta(d\Delta_v\varepsilon^{-2})$. The minimum deployment count attainable without loss in
$\mathsf N$ is therefore $\Theta(d^2\Delta_v\varepsilon^{-2})$, and $\Theta(d\Delta_v\varepsilon^{-2})$
costs a factor $d$ in observations. The gap between the two caps is exactly the looseness of the
generic bound $\Sigma_b^2\le\tfrac14L^2\mu^2\varsigma_6^2$: on the half-quadratic example of
Remark~\ref{rem:sharpness} one computes $\Sigma_b^2=\tfrac{\mu^2}{16}d(3d+12)=\Theta(L^2\mu^2d^2)$, a
factor $d$ below the bound, and a general bound of that order would place $\beta_1$ at
$\Theta(1/d)$ and make the last operating point free. We do not have such a bound and leave it open. Third, at that point
$\beta_K=\Theta(1)$: the known-component average is close to inactive at the
deployment-optimal end, and its role is to make the opposite end, $n_K=1$, admissible at all
(Appendix~\ref{app:params}). The choice of $n_K$ therefore interpolates between minimizing
known-component samples and minimizing deployments.

The operating points compare as follows.

\begin{center}\small
\setlength{\tabcolsep}{3.5pt}
\begin{tabular}{@{}llccc@{}}
\toprule
& $\beta$ & deployments $3T$ & function evals.\ $2n_FT$ & known samples $n_KT$\\
\midrule
$n_F=n_K=1$ & $\varepsilon^4/C_1$ & $\Theta(d^2\varepsilon^{-6})$ & $\Theta(d^2\varepsilon^{-6})$ & $\Theta(d^2\varepsilon^{-6})$\\
batches raised until $\beta=\beta_1$ & $\Theta(1/d^2)$ & $\Theta(d^2\varepsilon^{-2})$ & $\Theta(d^2\varepsilon^{-6})$ & $\Theta(d\,\varepsilon^{-4})$\\
batches raised until $\beta=\beta_2$ & $\Theta(1/d)$ & $\Theta(d\,\varepsilon^{-2})$ & $\Theta(d^3\varepsilon^{-6})$ & $\Theta(d\,\varepsilon^{-4})$\\
\bottomrule
\end{tabular}
\end{center}

\emph{Comparison.} A multi-direction method draws $N$ probes per update, which is $2N$
\emph{distinct} deployed decisions, and this cannot be replaced by repeated sampling because
distinct directions are the object. With $N=\Theta(d^2\varepsilon^{-4})$ and
$T=\Theta(\varepsilon^{-2})$ \citep{hikima2025estimators}, its deployment count coincides with its
sample count, $\Theta(d^2\varepsilon^{-6})$. Against this, \piso at $\beta_1$ deploys a factor $\varepsilon^{-4}$ fewer decisions for the same
total number of observations, and at $\beta_2$ a factor $d\,\varepsilon^{-4}$ fewer at the cost of a
factor $d$ more observations.

\section{Additional experimental results}\label{app:experiments}

\begin{figure}[h]
\centering
\includegraphics[width=\textwidth]{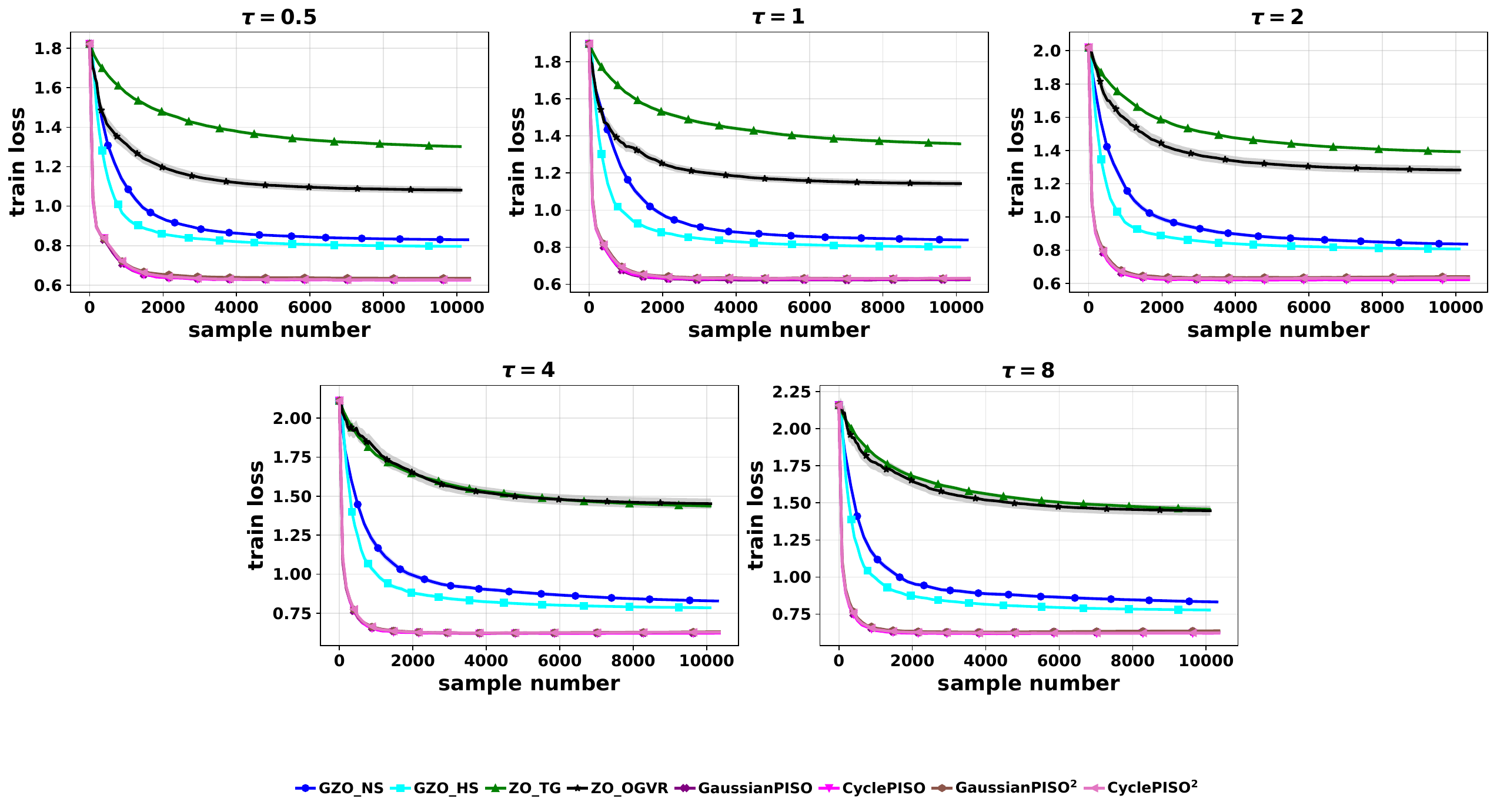}
\caption{Strategic classification on the credit dataset, $d=12$, five strategic-response strengths
$\tau$. Training loss against samples consumed; means over $100$ simulations with one-standard-error
bands. All four \piso variants reach a loss of about $0.63$ at every $\tau$, while the strongest
baseline plateaus near $0.80$; the two methods that ignore the computable component degrade sharply
as $\tau$ increases.}
\label{fig:classification}
\end{figure}

\begin{table}[h]
\centering\small
\setlength{\tabcolsep}{4pt}
\caption{Strategic classification: test loss and test accuracy (\%) at the end of the sample budget,
averaged over $100$ simulations, for each strategic-response strength $\tau$. Best per column in
bold. All four \piso variants dominate every baseline on both metrics at every $\tau$, by roughly
$0.2$ in test loss and $4$--$10$ accuracy points. The two methods that ignore the computable
component, ZO-TG and ZO-OGVR, degrade as $\tau$ grows, that is as the opaque component becomes more
influential, whereas the \piso variants are essentially flat in $\tau$; the guided baselines improve
with $\tau$ but never close the gap. Training metrics are in the accompanying archive.}
\label{tab:classification}
\begin{tabular}{@{}lcccccccccc@{}}
\toprule
& \multicolumn{2}{c}{$\tau=0.5$} & \multicolumn{2}{c}{$\tau=1$} & \multicolumn{2}{c}{$\tau=2$}
& \multicolumn{2}{c}{$\tau=4$} & \multicolumn{2}{c}{$\tau=8$}\\
\cmidrule(lr){2-3}\cmidrule(lr){4-5}\cmidrule(lr){6-7}\cmidrule(lr){8-9}\cmidrule(lr){10-11}
Method & loss & acc. & loss & acc. & loss & acc. & loss & acc. & loss & acc.\\
\midrule
GZO-NS & 0.891 & 59.1 & 0.912 & 60.8 & 0.899 & 63.2 & 0.879 & 65.0 & 0.878 & 65.7 \\
GZO-HS & 0.846 & 57.2 & 0.862 & 59.8 & 0.865 & 62.0 & 0.835 & 63.7 & 0.821 & 65.0 \\
ZO-TG & 1.428 & 45.4 & 1.487 & 46.0 & 1.505 & 45.8 & 1.558 & 44.6 & 1.580 & 44.1 \\
ZO-OGVR & 1.181 & 51.4 & 1.243 & 52.8 & 1.383 & 54.3 & 1.539 & 54.9 & 1.534 & 56.4 \\
GaussianPISO & 0.624 & \textbf{66.6} & \textbf{0.620} & 67.6 & 0.632 & 67.7 & 0.622 & \textbf{69.3} & 0.631 & 69.0 \\
CyclePISO & \textbf{0.622} & 66.6 & 0.627 & \textbf{67.6} & \textbf{0.620} & \textbf{68.3} & \textbf{0.619} & 69.0 & \textbf{0.618} & 69.5 \\
GaussianPISO$^2$ & 0.632 & 66.2 & 0.629 & 66.8 & 0.641 & 67.6 & 0.628 & 68.2 & 0.636 & 69.0 \\
CyclePISO$^2$ & 0.623 & 66.5 & 0.632 & 67.0 & 0.632 & 68.0 & 0.627 & 68.9 & 0.621 & \textbf{69.7} \\
\bottomrule
\end{tabular}
\end{table}

\begin{figure}[h]
\centering
\includegraphics[width=\textwidth]{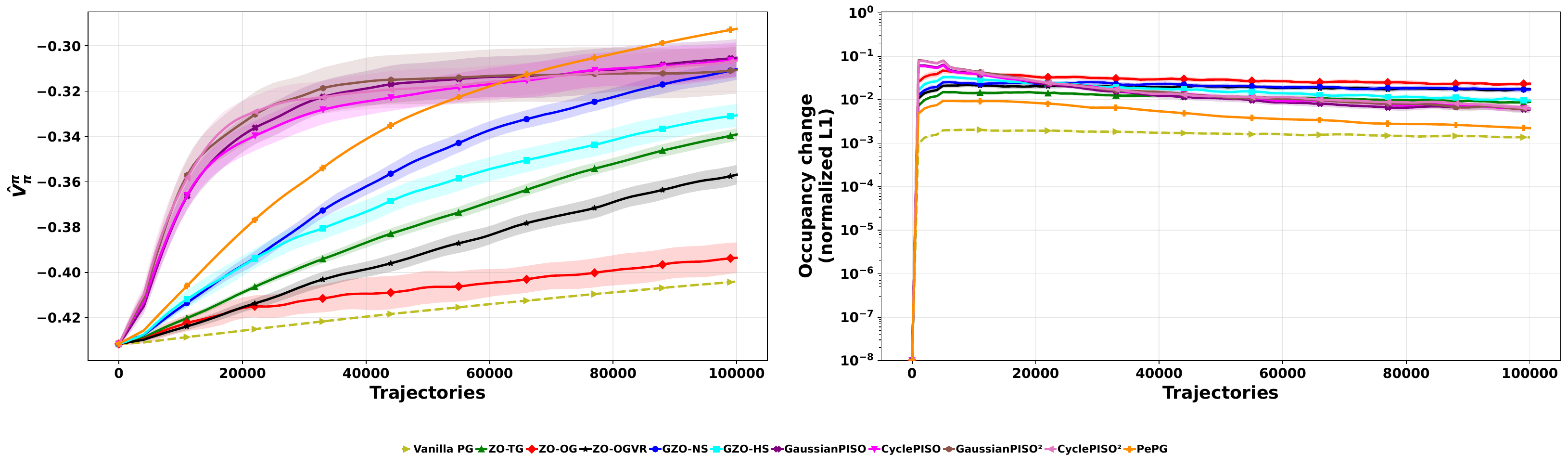}
\caption{Performative reinforcement learning on an $8\times8$ gridworld, $20$ seeds. Left:
performative return $\widehat V^\pi_\pi$ against the number of trajectories consumed, with
$\pm1.96$ standard-error bands. Right: normalized $L_1$ change of the state occupancy between
consecutive updates, on a log scale. \piso and \pisotwo make the fastest early progress among the
methods that operate under \eqref{eq:split}, while GZO-NS closes part of the gap late in the run.
PePG is a reference with privileged information: it is supplied with exact reward and transition
response Jacobians and exact tabular advantages, so it differentiates the environment's response
rather than querying it. Its curve is not a bound on what other methods can attain, but it indicates
what a differentiable response model is worth.}
\label{fig:rl}
\end{figure}

\paragraph{Performative RL.} Among methods that only query the environment
(Figure~\ref{fig:rl}) the ordering is \piso${}\approx{}$\pisotwo, with GZO-NS closing part of the
gap late in the run, followed by the remaining zeroth-order baselines and the vanilla policy gradient.
We regard PePG as the informative reference rather than a competitor: it is handed the derivative
that a query-based method must estimate, and what the figure shows is how much of the remaining range
is recovered without any model of the response. The right panel is a diagnostic of how far each
method moves the policy per update. \piso's occupancy change is of order $10^{-1}$ early in the run
and decays to about $10^{-2}$ at the end, while PePG and the vanilla policy gradient make smaller
late-stage moves. So \piso takes larger policy steps early in the run, and part of its advantage over
the vanilla policy gradient may reflect that; every baseline's step schedule was tuned independently
(Appendix~\ref{app:hyperparameters}), so this is not an artefact of a shared setting, but we do not
claim the gap is attributable to the estimator alone. The decay of \piso's occupancy change over the
run indicates that the larger steps do not destabilize the policy.

\begin{figure}[h]
\centering
\includegraphics[width=\textwidth]{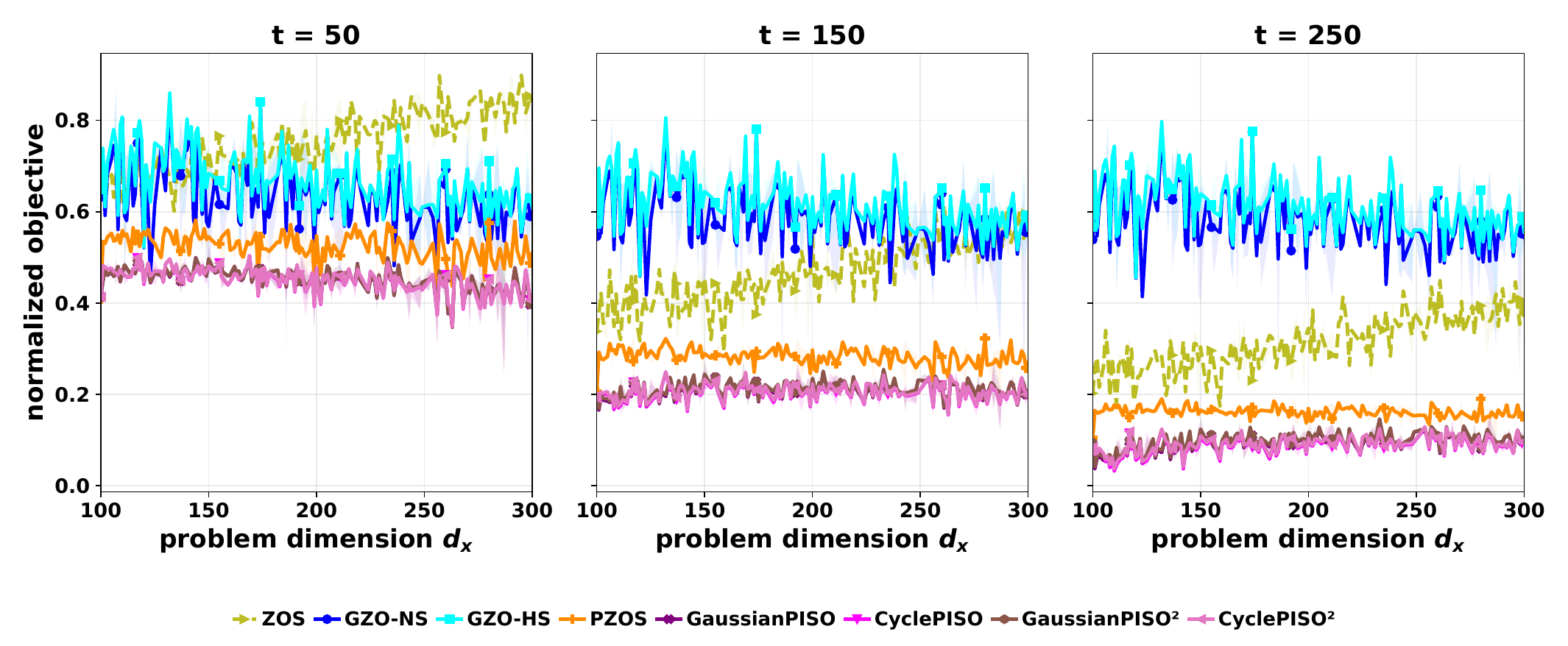}
\caption{Security game, normalized objective against problem dimension $d_x\in[100,300]$ at three
iteration counts. The separation between \piso and PZOS is stable across the range, and both
separate from the guided baselines, whose objective does not decrease appreciably after $t=50$.}
\label{fig:securitydim}
\end{figure}

\begin{figure}[h]
\centering
\includegraphics[width=0.92\textwidth]{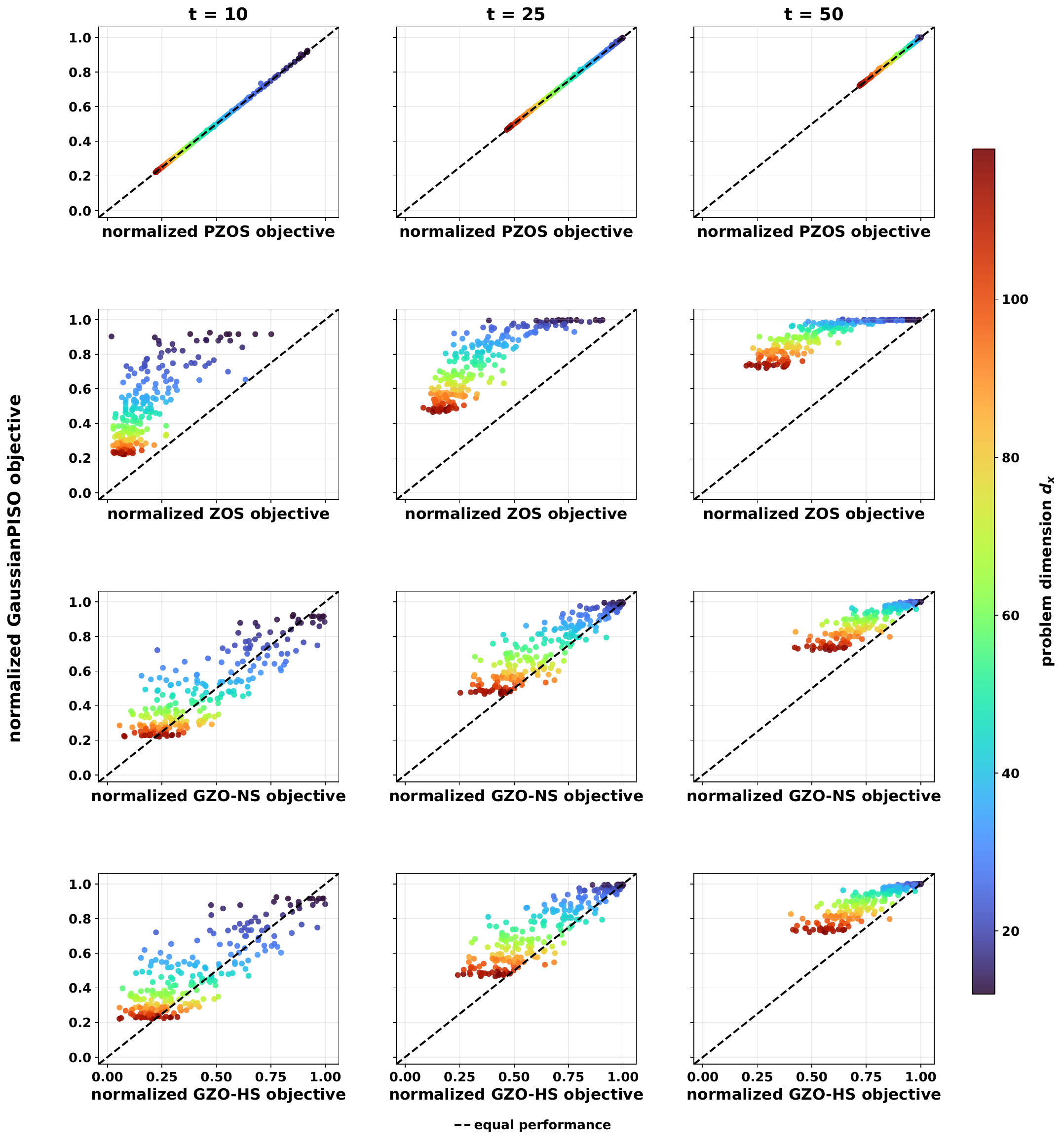}
\caption{Routing, per-instance comparison of GaussianPISO against each baseline at $t=10,25,50$,
coloured by problem dimension. Points above the diagonal favour \piso, which is a maximization
problem here. The PZOS row lies on the diagonal at every horizon, so the two methods return the same
solution per instance; the ZOS, GZO-NS and GZO-HS rows lie above it, with the margin largest at
small $t$ and at small dimension.}
\label{fig:routingpair}
\end{figure}

\subsection{Ablations}\label{app:ablations}

All ablations use the pricing problem of \S\ref{sec:experiments}, so $d=10$ and hence $c_d=12$ for
Gaussian probes and $c_d=10$ for the cycling variant. The batch size starts at $30$ and increases by
$2$ per iteration, so the known-component batch $n_K$ is large throughout; this matters for
Figure~\ref{fig:rhoK}.

\begin{figure}[h]
\centering
\includegraphics[width=\textwidth]{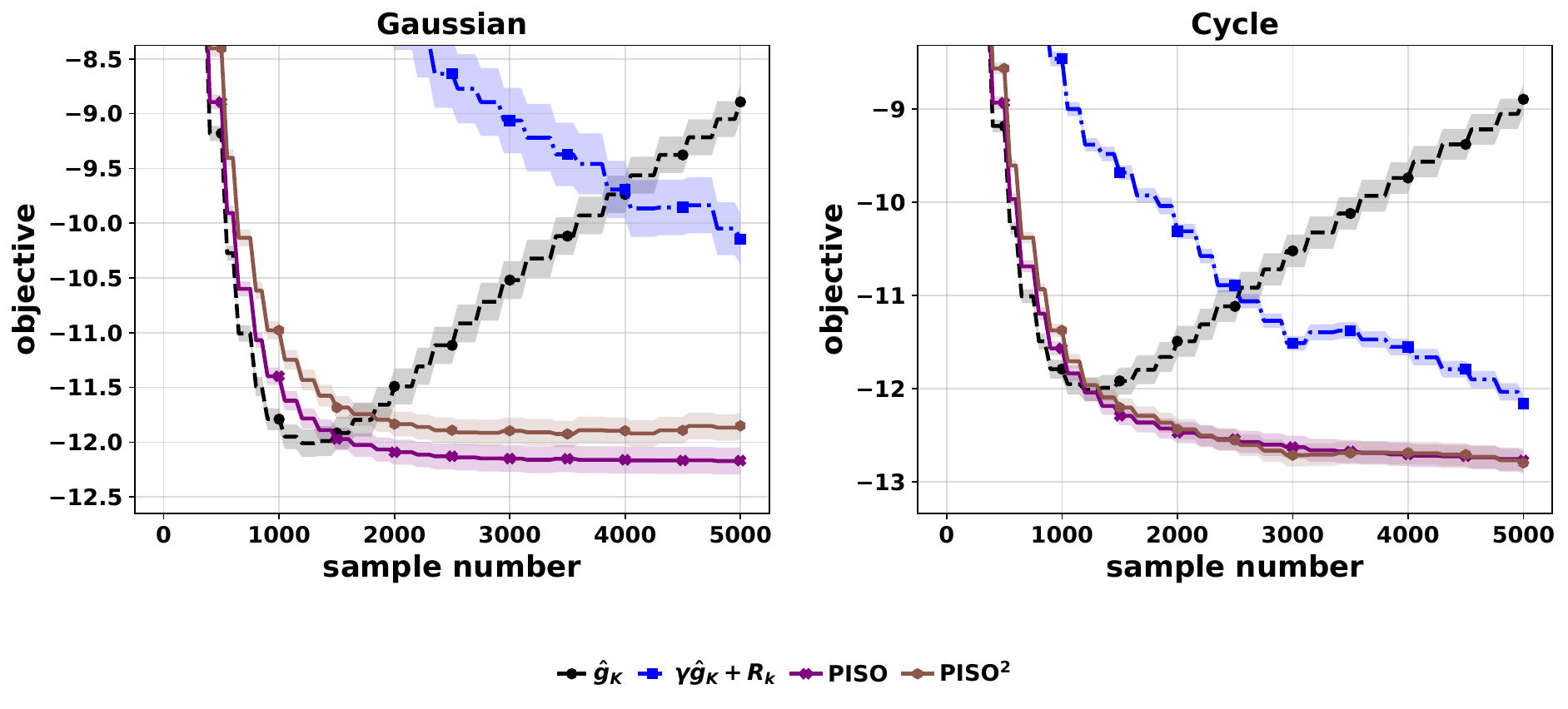}
\caption{Components. $\ghK$ alone (black) uses only the computable part and diverges once the opaque
part matters; $\gamma\ghK+R_k$ (blue) adds the \emph{uncentred} residual observation of
Appendix~\ref{app:centring} with $\rho=0$, and is unstable; \piso and \pisotwo add the centring and
the momentum. The two unstable arms are the two ways of omitting a piece of \eqref{eq:centred}, and
confirm that neither the known component nor an uncentred zeroth-order correction suffices on its
own.}
\label{fig:components}
\end{figure}

\begin{figure}[h]
\centering
\includegraphics[width=\textwidth]{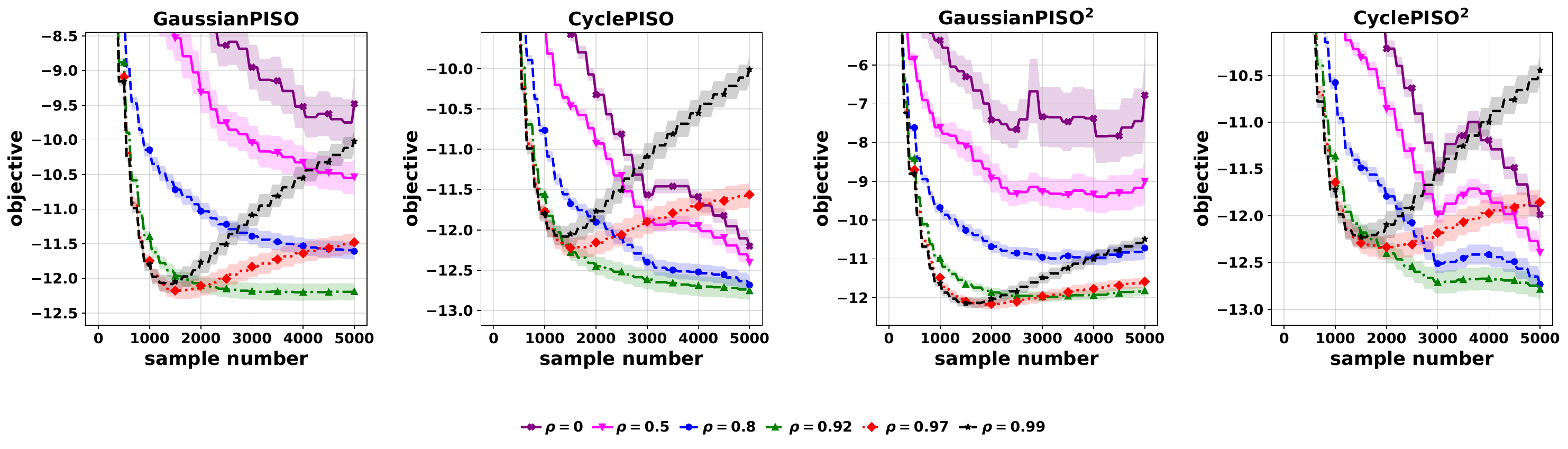}
\caption{Momentum $\rho$, the sharpest test of the analysis. Theory predicts an optimum at
$\rho^\star=1-1/c_d$, that is $0.917$ for the Gaussian panels and $0.900$ for the cycling panels, and
predicts that the residual iteration is neutral at $\rho=1-2/c_d$ and \emph{expands} below it, that is at $0.833$ and
$0.800$ respectively (Appendix~\ref{app:params}). Both are visible. The best curve is $\rho=0.92$ in all
four panels, the nearest grid point to the prediction; $\rho\in\{0,0.5,0.8\}$ all descend initially
and then \emph{rise}, the signature of a tracking error that grows once the sketch step size leaves
its stability window; and $\rho=0.99$ is over-smoothed, descending furthest before turning up as the
state lags the moving residual.}
\label{fig:rho}
\end{figure}

\begin{figure}[h]
\centering
\includegraphics[width=\textwidth]{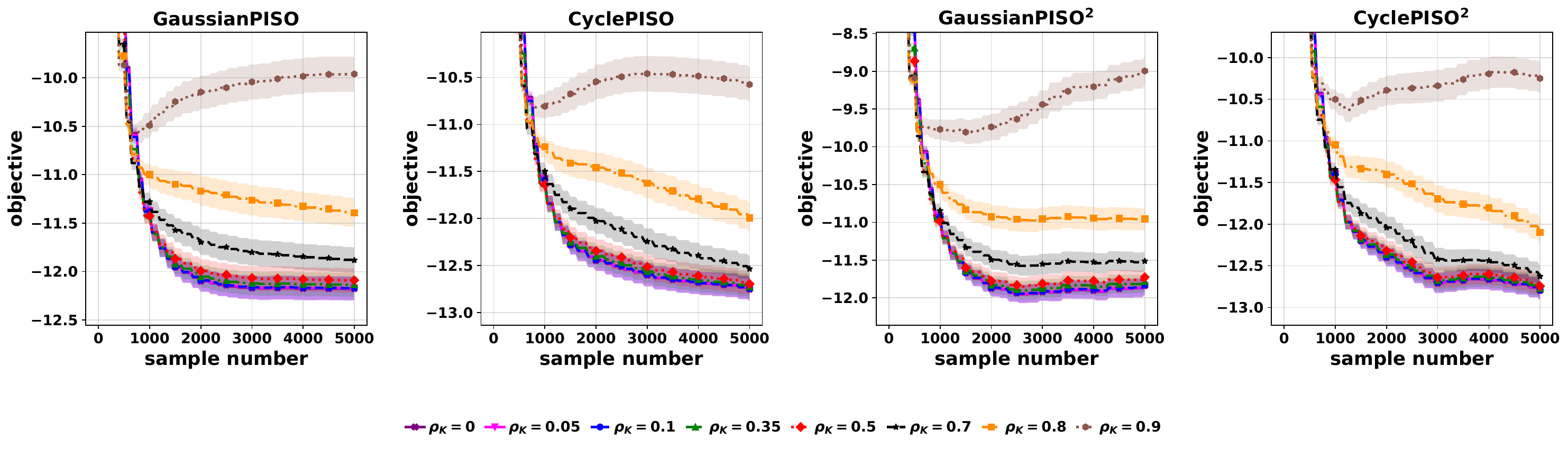}
\caption{Known-component momentum $\rho_K$. Here $\rho_K=0$, that is $\beta_K=1$ and a fresh
estimate, is among the best choices, and performance degrades monotonically for
$\rho_K\ge0.7$. This is what the analysis predicts in this regime: with the growing batch the
known-component variance $\sigma_K^2/n_K$ is already small, so $\beta_K=1$ satisfies the floor
condition of \eqref{eq:floor} and a smaller $\beta_K$ only tightens the step-size cap
$\eta\le\beta_K/(9\gamma L_K)$ of \eqref{eq:stepcond} and adds lag. The known-component average earns
its place when $n_K$ is small, where it is what makes $n_K=1$ admissible
(Remark~\ref{rem:noKmom}); it is not needed when one can afford a large batch.}
\label{fig:rhoK}
\end{figure}

\begin{figure}[h]
\centering
\includegraphics[width=\textwidth]{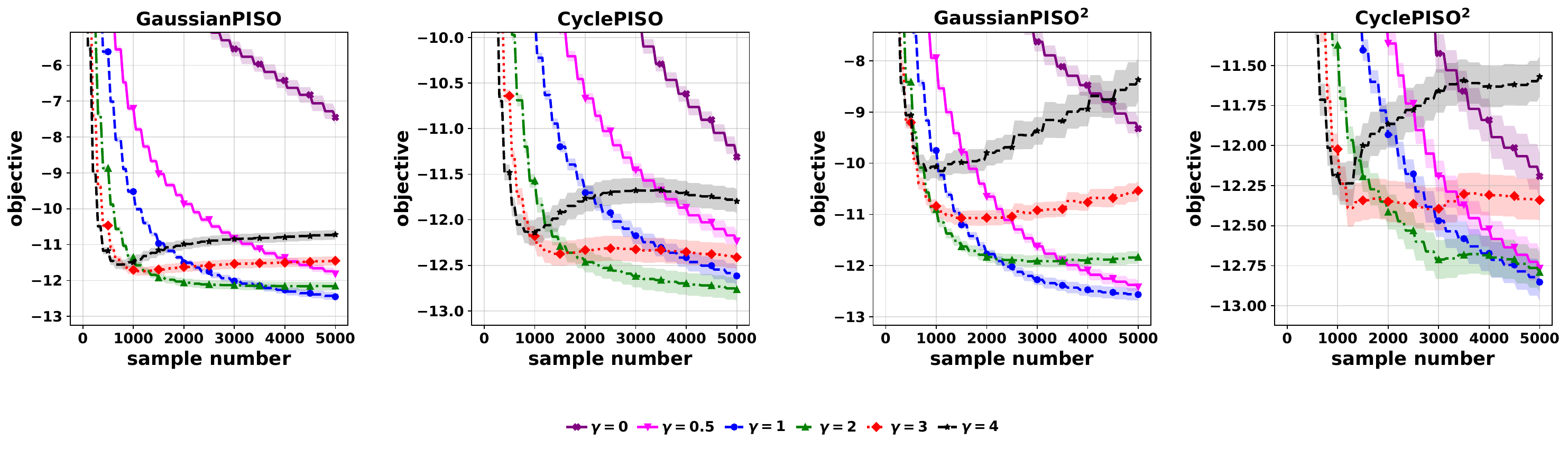}
\caption{Known-component weight $\gamma$. The endpoints behave as
\S\ref{app:LvsLv} describes: $\gamma=0$ tracks $\nabla F$ and is worst, and large $\gamma$
degrades as the noise term $\gamma^2\sigma_K^2$ and the drift constant $L_v(\gamma)$ both grow. The
interior optimum is $\gamma\in\{1,2\}$ depending on the variant, consistent with
Proposition~\ref{prop:gamma} placing $\gamma^\star=1+\mathsf B/\mathsf C$ away from both endpoints,
though the grid is too coarse to resolve $\gamma^\star$ precisely and we do not attempt to
estimate it directly.}
\label{fig:gamma}
\end{figure}

\begin{figure}[h]
\centering
\includegraphics[width=0.8\textwidth]{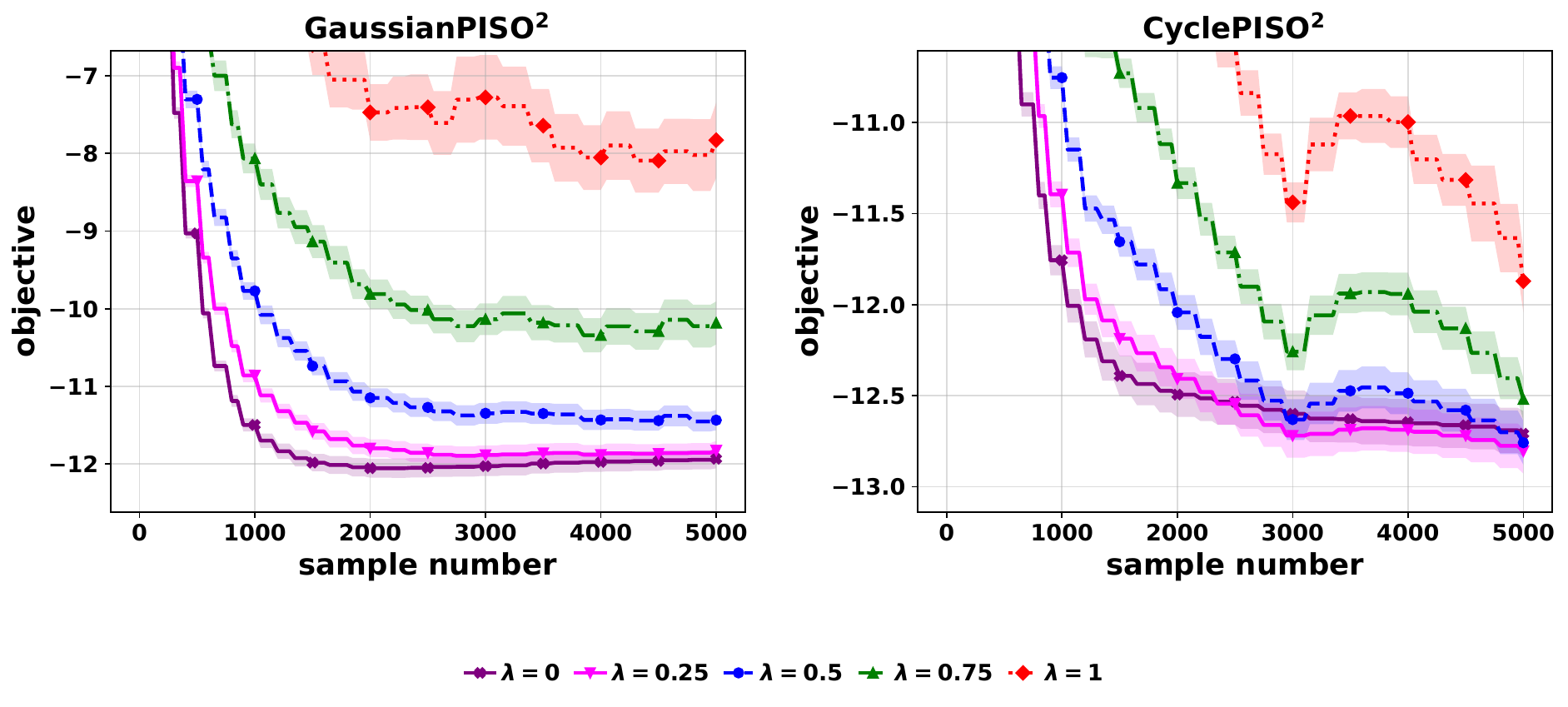}
\caption{Mixing weight $\lambda$. Performance is monotonically decreasing in $\lambda$ over the grid
$\{0,0.25,0.5,0.75,1\}$, with $\lambda=1$, which discards the state and uses the fresh observation
alone, far worse than $\lambda=0$. This is consistent with \eqref{eq:kalmanapp}: at
$\rho=0.92$ and $c_d\in\{10,12\}$ the high-signal-to-noise optimum
$\lambda^\star=(1/c_d-\beta)/\rho$ is $0.004$ and $0.022$, both below the smallest nonzero grid
point, so the grid cannot separate $\lambda^\star$ from $0$ and only the degradation for large
$\lambda$ is resolved. A finer sweep near zero would be needed to test the prescription itself.}
\label{fig:lambda}
\end{figure}

\begin{figure}[h]
\centering
\includegraphics[width=0.8\textwidth]{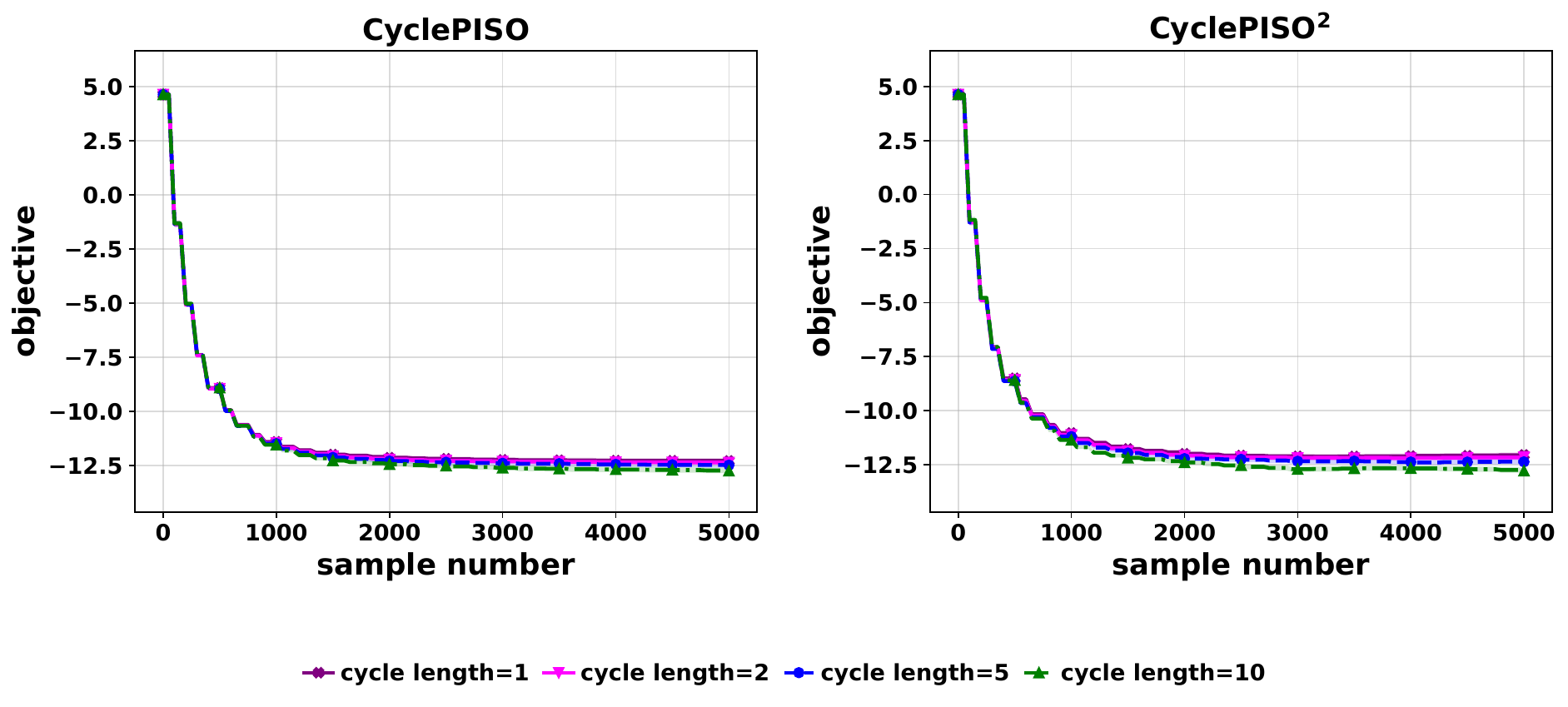}
\caption{Cycle length $L$ for the orthogonalized probe, at $d=10$. Longer cycles are mildly better
and $L=d=10$ is best, but the differences are small and within a fraction of the spread of the
$\rho$ ablation. Since the per-step marginal is the same for every $L$, and only the dependence
between consecutive probes changes, the analysis of \S\ref{sec:theory} does not distinguish these
curves; the ordering is consistent with orthogonality reducing the effective sketch variance, but we
do not claim it as a prediction.}
\label{fig:cyclelen}
\end{figure}

\subsection{Experimental hyperparameters}\label{app:hyperparameters}

For reproducibility, we report  the complete hyperparameter configurations used to produce the experimental results.
 Unless
stated otherwise, a decay factor is applied multiplicatively at every update. For the baselines, we use the hyperparameters reported as optimal in their respective references, except for performative RL, where we tune the zeroth-order baselines independently. 

\paragraph{Radius and known-part conventions.}
We denote the  perturbation radius by $r_k$. For ZO-TG, ZO-OG, ZO-OGVR, and the Gaussian
and Cycle variants of \piso and \pisotwo, we use
$\mu_k=r_k/\sqrt d$. For GZO-NS, GZO-HS, ZOS, and PZOS, we use
$\mu_k=r_k$. Thus, across methods, $r_k$ represents the scale of the
actual perturbation after accounting for the normalization of the probe
direction. For exponential schedules we write
$r_k=r_0q_r^k$ and $\eta_k=\eta_0q_\eta^k$, where $q_r$ and $q_\eta$
are the radius and step decay factors. When a radius floor is used, we denote
it by $r_{\min}$. We denote the known-sample fraction
by $f_K$. Unless stated otherwise, the reported configurations use a fresh known part,
equivalently $\rho_K=0$.

\subsubsection{Multiproduct pricing}

Let $m_k:=30+2k$, where $k$ is the current iteration number.

\noindent\textbf{GZO-NS.} We set $r_0:=0.19$, $r_{\min}:=10^{-4}$, $q_r:=0.95$,
$\beta_0:=0.01$, the step decay to $0.95$, $\alpha_0:=0$, and the damping factor to $0.98$. The
batch size is $m_k$.

\noindent\textbf{GZO-HS.} The $r_k$ and step schedules are the same as for GZO-NS. We use a
historical-guide window of $1$, $\alpha_0:=0$, damping $0.98$, and batch size $m_k$.

\noindent\textbf{ZO-TG.} We set $r_0:=0.19$, $r_{\min}:=10^{-4}$, and $q_r:=0.95$;
the step starts from $\beta_0:=0.0031623$ and decays by $0.95$. We use the Gaussian two-point
estimator with batch size $m_k$.

\noindent\textbf{ZO-OGVR.} We use the same $r_k$ schedule as above, with $\beta_0:=0.001$ and
step decay $0.95$. The initial baseline uses $20$ samples, the history length is $s_{\max}:=10$, the
distance coefficient is $M:=0.1$, and the batch size is $m_k$.

\noindent\textbf{Gaussian \piso.} We set $r_0:=0.40$, $q_r:=0.96$, $\eta_0:=0.005$, and
$q_\eta:=0.98$. We use $\gamma:=2.00$, $\rho:=0.92$, and $f_K:=0.50$.
The base batch is $m_k$, giving total \piso cost $3m_k$ per update.

\noindent\textbf{Cycle \piso.} The parameters are the same as for Gaussian \piso, with cycle length
$L:=10$.

\noindent\textbf{Gaussian \pisotwo.} We set $r_0:=0.40$, $q_r:=0.96$, $\eta_0:=0.005$, and
$q_\eta:=1$. We use $\gamma:=2.00$, $\rho:=0.92$, $\lambda:=0.25$, and
$f_K:=0.50$. the total cost is $3m_k$ per update.

\noindent\textbf{Cycle \pisotwo.} The parameters are the same as for Gaussian \pisotwo, with cycle
length $L:=10$.

We select the \piso hyperparameters using a two-stage grid search. In the first stage, we search
over $r_0\in\{0.20,0.40,0.80\}$,
$\eta_0\in\{0.0025,0.005,0.010\}$, and
$\rho\in\{0.90,0.92,0.97\}$ for all four variants. In the second stage, we search over
$f_K\in\{0.34,0.50,0.67\}$ for all four variants, the cycle length
$L\in\{2,5,10\}$ for Cycle \piso and Cycle \pisotwo, and the mixing weight
$\lambda\in\{0.10,0.25,0.50\}$ for Gaussian \pisotwo and Cycle \pisotwo.

\subsubsection{Strategic classification}

The optimization batch is $30+2k$ throughout unless stated otherwise.

\noindent\textbf{GZO-NS.} We set $r_0:=10$, $r_{\min}:=0.10$, $q_r:=0.95$,
$\beta_0:=1.0$, step decay $0.95$, $\alpha_0:=0$, and damping $0.98$. The optimization batch is
$30+2k$ and the guide batch is $100+2k$.

\noindent\textbf{GZO-HS.} The $r_k$ and step schedules are the same as for GZO-NS. We set
$\alpha_0:=0$, damping to $0.98$, and the historical window to $1$, with batch size $30+2k$.

\noindent\textbf{ZO-TG.} We set $r_0:=10$, $r_{\min}:=0.10$, and $q_r:=0.95$, and use
$\beta_0:=1.0$ with step decay $0.95$. The estimator is Gaussian two-point and the batch size is
$30+2k$.

\noindent\textbf{ZO-OGVR.} We use the same $r_k$ schedule, with $\beta_0:=d^{-1/2}$ and step decay
$0.95$. The initial baseline uses $20$ samples, $s_{\max}:=10$, $M:=0.10$, and batch size $30+2k$.

\noindent\textbf{Gaussian \piso.} We set $r_0:=3.00$, $q_r:=0.95$, $\eta_0:=0.70$, and
$q_\eta:=0.98$. We use $\gamma:=3.00$, $\rho:=0.98$, and $f_K:=0.67$, with
base batch $30+2k$.

\noindent\textbf{Cycle \piso.} The parameters are the same as for Gaussian \piso, with cycle length
$L:=6$.

\noindent\textbf{Gaussian \pisotwo.} The $r_k$ and $\eta_k$ schedules are the same as for Gaussian
\piso. We use $\gamma:=3.00$, $\rho:=0.98$, $\lambda:=0.05$, and $f_K:=0.75$, with
base batch $30+2k$.

\noindent\textbf{Cycle \pisotwo.} The parameters are the same as for Gaussian \pisotwo, with cycle
length $L:=6$.

We select the \piso hyperparameters using a two-stage grid search. In the first stage, we search
over $r_0\in\{1.50,3.00,6.00\}$,
$\eta_0\in\{0.35,0.70,1.40\}$, and
$\rho\in\{0.95,0.98,0.99\}$ for all four variants. In the second stage, we search over
$f_K\in\{0.50,0.67,0.75\}$ for all four variants, the cycle length
$L\in\{3,6,12\}$ for Cycle \piso and Cycle \pisotwo, and the mixing weight
$\lambda\in\{0.01,0.05,0.10\}$ for Gaussian \pisotwo and Cycle \pisotwo.

\subsubsection{Performative reinforcement learning}

For this experiment, an inverse-square-root schedule with initial value $q_0$, scale $S_q$, and
minimum $q_{\min}$ means
\[
q(N):=\max\!\left\{q_{\min},\frac{q_0}{\sqrt{1+N/S_q}}\right\},
\]
where $N$ is the cumulative trajectory count.

\noindent\textbf{Vanilla PG.} The step schedule has initial value $0.05$, scale $10^5$, and minimum
$0.02$. We use batches of $100$ trajectories and clip the gradient norm at $0.25$.

\noindent\textbf{ZO-TG.} We set $r_0:=4.0$, $S_r:=2\cdot10^5$, and
$r_{\min}:=3.0$; the step schedule has initial value $0.40$, scale $10^5$, and minimum $0.20$. We use
$50$ trajectories per side and gradient clip $0.25$.

\noindent\textbf{ZO-OG.} We set $r_k:=0.50$ and use a step schedule with initial value $0.08$,
scale $10^5$, and minimum $0.04$. We use batch size $100$, baseline decay $0.98$, and no gradient
clipping.

\noindent\textbf{ZO-OGVR.} We set $r_0:=2.0$, $S_r:=8\cdot10^5$, and $r_{\min}:=0.5$, and use
constant step $0.16$. We use batch size $100$, initial baseline batch $100$, history window $10$,
distance weight $0$, and no gradient clipping.

\noindent\textbf{GZO-NS.} We set $r_0:=4.0$, $S_r:=5\cdot10^4$, and $r_{\min}:=1.0$, and use
constant step $0.40$. We set $\alpha:=0.9$ and use total batch $100$, split into $20$ known-gradient
trajectories and $40$ trajectories for each zeroth-order side. The gradient clip is $1.0$.

\noindent\textbf{GZO-HS.} We set $r_k:=2.0$. The step schedule has initial value $0.40$, scale
$2.5\cdot10^4$, and minimum $0.04$. We set $\alpha:=0.9$, history window $1$, use $50$
trajectories per zeroth-order side, and disable gradient clipping.

\noindent\textbf{Gaussian \piso.} We set $r_k:=1.50$, with
$\mu_k:=1.50/\sqrt d$. For the step schedule, $\eta_0:=1.80$,
$S_\eta:=10^5$, and $\eta_{\min}:=1.00$. We set $\gamma:=0.75$, $\rho:=0.997$,
$\rho_K:=0.90$, and $f_K:=0.50$, with total batch $100$ and one direction per update.

\noindent\textbf{Cycle \piso.} The parameters are the same as for Gaussian \piso, with cycle length
$L:=24$.

\noindent\textbf{Gaussian \pisotwo.} The $r_k$ and $\eta_k$ schedules are the same as for Gaussian
\piso. We use $\gamma:=1.00$, $\rho:=0.997$, $\rho_K:=0.90$, $\lambda:=0.01$, and
$f_K:=0.50$, with total batch $100$ and one direction per update.

\noindent\textbf{Cycle \pisotwo.} The parameters are the same as for Gaussian \pisotwo, with cycle
length $L:=16$.

\noindent\textbf{PePG.} The step schedule has initial value $0.5$, scale $10^5$, and minimum $0.1$.
We use batch size $100$, gradient clip $0.25$, and transition-probability floor $10^{-12}$. PePG uses
exact reward- and transition-response derivatives and exact tabular advantages, while estimating the
required expectations from trajectories.

We select the zeroth-order baseline hyperparameters using a two-stage grid search. In the first
stage, for ZO-TG we search over $r_0\in\{1,2,4\}$,
$\eta_0\in\{0.10,0.20,0.40\}$, and gradient clip in $\{0.25,0.50,1.00\}$; for ZO-OG over
$r_0\in\{0.25,0.50,1.00\}$, $\eta_0\in\{0.04,0.08,0.16\}$, baseline decay in
$\{0.90,0.98,0.995\}$, and gradient clip in $\{\mathrm{none},0.25,0.50\}$; for ZO-OGVR over
$r_0\in\{1,2,4\}$, $\eta_0\in\{0.08,0.16,0.32\}$, $M\in\{0,0.1,1\}$, and gradient clip in
$\{\mathrm{none},0.25,0.50\}$; for GZO-NS over $r_0\in\{2,4,8\}$,
$\eta_0\in\{0.20,0.40,0.80\}$, known batch in $\{20,34,50\}$, and gradient clip in
$\{0.25,0.50,1.00\}$; and for GZO-HS over $r_0\in\{1,2,4\}$,
$\eta_0\in\{0.20,0.40,0.80\}$, historical window in $\{1,5,10\}$, and gradient clip in
$\{\mathrm{none},0.25,0.50\}$. In the second stage, we search over radius schedule in
$\{\mathrm{constant},\mathrm{inverse\mbox{-}sqrt}\}$, $S_r\in\{5\cdot10^4,2\cdot10^5,8\cdot10^5\}$,
$r_{\min}\in\{0.5,1,3\}$, step schedule in
$\{\mathrm{constant},\mathrm{inverse\mbox{-}sqrt}\}$, $S_\eta\in\{2.5\cdot10^4,10^5,4\cdot10^5\}$,
and $\eta_{\min}\in\{0.04,0.10,0.20\}$; for GZO-NS and GZO-HS we additionally search over
$\alpha\in\{0.75,0.90,1.00\}$.

We select the \piso hyperparameters using a two-stage grid search. In the first stage, we search
over $r_0\in\{1.00,1.50,2.00\}$,
$\eta_0\in\{1.20,1.80,2.40\}$,
$\gamma\in\{0.75,1.00,1.25\}$,
$\rho\in\{0.99,0.995,0.997\}$,
$\rho_K\in\{0.00,0.50,0.90\}$, and
$f_K\in\{0.34,0.50,0.67\}$. Gaussian \piso and Cycle \piso share these parameters, as do Gaussian
\pisotwo and Cycle \pisotwo. In the second stage, we search over
$S_\eta\in\{5\cdot10^4,10^5,2\cdot10^5\}$,
$\eta_{\min}\in\{0.50,1.00,1.50\}$,
the cycle length $L\in\{16,24,32\}$ for Cycle \piso and Cycle \pisotwo, and the mixing weight
$\lambda\in\{0.01,0.03,0.10\}$ for Gaussian \pisotwo and Cycle \pisotwo.

\subsubsection{Bilevel routing game}

For every method with dimension scaling, the base step is divided by the problem
dimension $d$.

\noindent\textbf{ZOS.} We set $r_0:=0.50$ and $q_r:=1$ and use constant step $0.70/d$ with a unit-sphere
two-point estimator.

\noindent\textbf{PZOS.} We use $r_0:=0.50$ and $q_r:=1$ and constant step $0.70/d$. The update combines the exact
leader partial gradient with a two-point estimate of the follower-response contribution.

\noindent\textbf{GZO-NS.} We set $r_0:=0.50$ and $q_r:=1$ and use constant step $0.70/d$, with
$\alpha_0:=0$, damping $0.98$, and history window $5$. Update clipping is disabled.

\noindent\textbf{GZO-HS.} The parameters are the same as for GZO-NS, using the historical-guide
variant.

\noindent\textbf{Gaussian \piso.} We set $r_0:=0.50$, $q_r:=1$, $\eta_0:=0.70/d$, and
$q_\eta:=1$, with $\gamma:=1.00$ and $\rho:=0.98$.

\noindent\textbf{Cycle \piso.} The parameters are the same as for Gaussian \piso, with cycle length
$L:=10$.

\noindent\textbf{Gaussian \pisotwo.} The $r_k$ and $\eta_k$ schedules are the same as for Gaussian
\piso. We use $\gamma:=1.00$, $\rho:=0.98$, and $\lambda:=0.10$.

\noindent\textbf{Cycle \pisotwo.} The parameters are the same as for Gaussian \pisotwo, with cycle
length $L:=10$.

We select the \piso hyperparameters using a two-stage grid search. In the first stage, we search
over $r_0\in\{0.25,0.50,1.00\}$,
$a\in\{0.35,0.70,1.40\}$ with $\eta_0=a/d$, and the residual momentum
$\rho\in\{0.97,0.98,0.99\}$ for all four variants. In the second stage, we search over the cycle
length $L\in\{5,10,20\}$, with effective cycle length $\min\{L,d\}$, for Cycle \piso and
Cycle \pisotwo, and the mixing weight $\lambda\in\{0.03,0.10,0.30\}$ for Gaussian \pisotwo
and Cycle \pisotwo.

\subsubsection{Bilevel security game}

All step sizes use the inverse-square-root iteration schedule $\eta_k=\eta_0/\sqrt{k+1}$.

\paragraph{Residual norm cap.}
For security, we apply a norm cap to the residual observation
and its momentum state as a numerical safeguard against occasional value
explosions induced by unusually large   observations. Let
$g_k^{\mathrm K}$ denote the unweighted known gradient, and define
\[
 B_k^{\mathrm{cap}}
 :=c_{\mathrm{cap}}\max\{\|g_k^{\mathrm K}\|_2,a_K\},
 \qquad c_{\mathrm{cap}}>0,\quad a_K>0.
\]
The norm-cap ratio $c_{\mathrm{cap}}$
 is the multiplier of this reference norm.
The known-norm floor $a_K$ 
is a lower bound on the reference scale, so the cap remains
$c_{\mathrm{cap}}a_K$ when the known-gradient norm is small. It does not
impose a lower bound on either gradient or residual norm. For $B>0$, let
\[
 \mathcal C_B(z):=
 \begin{cases}
 z,&\|z\|_2\le B,\\
 \displaystyle\frac{B}{\|z\|_2}z,&\|z\|_2>B.
 \end{cases}
\]
Writing the raw residual observation and raw momentum update as
\[
 R_k^{\mathrm{raw}}
 :=m_k+\bigl(D_k-\langle\gamma g_k^{\mathrm K}+m_k,u_k\rangle\bigr)s_k,
 \qquad
 \widetilde m_{k+1}:=\rho m_k+(1-\rho)R_k^{\mathrm{raw}},
\]
the implementation sets
\[
 m_{k+1}:=\mathcal C_{B_k^{\mathrm{cap}}}(\widetilde m_{k+1}),
 \qquad
 \overline R_k:=\mathcal C_{B_k^{\mathrm{cap}}}(R_k^{\mathrm{raw}}),
 \qquad
 G_k:=\gamma g_k^{\mathrm K}
       +(1-\lambda)m_{k+1}+\lambda\overline R_k.
\]
Here $\lambda=0$ for \piso and $\lambda>0$ for \pisotwo. Both vectors
are clipped separately, after forming the raw momentum update and before
forming the readout. Consequently,
$\|(1-\lambda)m_{k+1}+\lambda\overline R_k\|_2\le B_k^{\mathrm{cap}}$
for $\lambda\in[0,1]$. The cap uses $\|g_k^{\mathrm K}\|_2$, without
the factor $\gamma$. The security experiment uses
$c_{\mathrm{cap}}=a_K=1$, giving
$B_k^{\mathrm{cap}}=\max\{\|g_k^{\mathrm K}\|_2,1\}$.

\noindent\textbf{ZOS.} We set $r_0:=0.10$ and $q_r:=1$. The step base is $0.05$ for $d\le100$ and $0.03$ for
$d>100$. The estimator is unit-sphere two-point.

\noindent\textbf{PZOS.} We use the same $r_k$ and step schedule as ZOS, combining the exact leader
partial gradient with a finite-difference estimate of the follower-response contribution.

\noindent\textbf{GZO-NS.} We set $r_0:=0.10$ and $q_r:=1$ and use the same $0.05/0.03$ step bases as ZOS,
with $\alpha_0:=0$, damping $0.98$, and history window $5$. Update clipping is disabled.

\noindent\textbf{GZO-HS.} The parameters are the same as for GZO-NS, using the historical-guide
variant.

\noindent\textbf{Gaussian \piso.} We set $r_0:=0.10$ and $q_r:=1$. We set
\(\eta_0:= 0.06\) for \(d\le100\) and \(\eta_0:= 0.04\) for \(d>100\).
We use $\gamma:=1.00$ and $\rho:=0.98$. 

\noindent\textbf{Cycle \piso.} The parameters are the same as for Gaussian \piso, with cycle length
$L:=10$.

\noindent\textbf{Gaussian \pisotwo.} The $r_k$ and $\eta_k$ schedules are the same as for Gaussian
\piso. We use $\gamma:=1.00$, $\rho:=0.98$, and $\lambda:=0.10$.

\noindent\textbf{Cycle \pisotwo.} The parameters are the same as for Gaussian \pisotwo, with cycle length
$L:=10$.

We select the \piso hyperparameters using a two-stage grid search. In the first stage, we search
over $r_0\in\{0.05,0.10,0.20\}$, the paired
$(a_{d\le100},a_{d>100})\in
\{(0.03,0.02),(0.06,0.04),(0.12,0.08)\}$, and the residual momentum
$\rho\in\{0.97,0.98,0.99\}$ for all four variants. In the second stage, we search over the cycle
length $L\in\{5,10,20\}$, with effective cycle length $\min\{L,d\}$, for Cycle \piso and
Cycle \pisotwo, and the mixing weight $\lambda\in\{0.03,0.10,0.30\}$ for Gaussian \pisotwo
and Cycle \pisotwo.

\subsubsection{ Pricing ablations common settings}
The paper profile uses $100$ simulations, $5000$ training
samples, $1000$ metric samples, and $101$ reporting-grid points. The default direction is Gaussian;
Cycle panels replace it by the orthogonal cycle. Both \piso and \pisotwo use central differences with
independent coupling, $r_0:=0.40$, $q_r:=0.96$, $\gamma:=2.00$, $\rho:=0.92$,
$f_K:=0.50$, an initial batch of $30$ increasing by $2$ per update, one direction per step, and cycle
length $10$. For \piso, $\eta_0:=0.005$ and $q_\eta:=0.98$; for \pisotwo,
$\eta_0:=0.005$ and $q_\eta:=1$, and \pisotwo additionally uses $\lambda:=0.25$.

\end{document}